\documentclass[10pt,leqno]{amsart}
\usepackage[utf8]{inputenc}
\usepackage[T1]{fontenc}
\usepackage[margin=1in]{geometry}
\usepackage{graphicx}
\usepackage[numbers,sort&compress]{natbib}
\usepackage{indentfirst,csquotes}
\usepackage{amssymb,amsthm,amsmath,mathtools}
\usepackage{xcolor,paralist,fancyhdr,etoolbox}
\usepackage{subcaption,booktabs,multirow,colortbl,enumitem}
\usepackage{algorithm,algorithmic}
\usepackage{aliascnt}
\usepackage{titletoc}
\usepackage{todonotes}
\usepackage{hyperref}
\hypersetup{colorlinks=true, linkcolor=black, citecolor=black, filecolor=black, urlcolor=black}
\usepackage[capitalize,noabbrev]{cleveref}

\theoremstyle{plain}
\newtheorem{theorem}{Theorem}[section]

\newaliascnt{proposition}{theorem}
\newtheorem{proposition}[proposition]{Proposition}
\aliascntresetthe{proposition}
\crefname{proposition}{Proposition}{Propositions}
\Crefname{proposition}{Proposition}{Propositions}

\newaliascnt{lemma}{theorem}
\newtheorem{lemma}[lemma]{Lemma}
\aliascntresetthe{lemma}
\crefname{lemma}{Lemma}{Lemmas}
\Crefname{lemma}{Lemma}{Lemmas}

\newaliascnt{corollary}{theorem}
\newtheorem{corollary}[corollary]{Corollary}
\aliascntresetthe{corollary}
\crefname{corollary}{Corollary}{Corollaries}
\Crefname{corollary}{Corollary}{Corollaries}

\theoremstyle{definition}
\newaliascnt{definition}{theorem}
\newtheorem{definition}[definition]{Definition}
\aliascntresetthe{definition}
\crefname{definition}{Definition}{Definitions}
\Crefname{definition}{Definition}{Definitions}

\newaliascnt{assumption}{theorem}
\newtheorem{assumption}[assumption]{Assumption}
\aliascntresetthe{assumption}
\crefname{assumption}{Assumption}{Assumptions}
\Crefname{assumption}{Assumption}{Assumptions}

\theoremstyle{remark}
\newaliascnt{remark}{theorem}
\newtheorem{remark}[remark]{Remark}
\aliascntresetthe{remark}
\crefname{remark}{Remark}{Remarks}
\Crefname{remark}{Remark}{Remarks}

\newcommand{\RR}{\mathbb{R}}
\DeclareMathOperator{\Ortho}{Ortho}
\DeclareMathOperator{\sign}{sign}
\DeclareMathOperator{\tr}{tr}
\DeclareMathOperator{\diag}{diag}
\DeclareMathOperator*{\argmax}{arg\,max}

\DeclareMathOperator{\rank}{rank}
\DeclareMathOperator{\Skew}{skew}
\DeclareMathOperator{\qf}{qf}
\newcommand{\inner}[2]{\langle #1,\, #2 \rangle}

\newcommand{\M}{\mathcal{M}}

\usepackage{xcolor}

\title[Intrinsic Muon: Spectral Optimization on Riemannian Matrix Manifolds]{Intrinsic Muon: Spectral Optimization\\on Riemannian Matrix Manifolds}

\author[Y. Li]{Yibang Li $^{1*}$}
\thanks{$^1$ Nanyang Technological University, Singapore}
\thanks{$*$ Equal contribution}
\author[B. L. Pandey]{Bihari Lal Pandey $^{2*}$}
\author[R. Sah]{Ravi Sah $^2$}
\thanks{$^2$ Indian Institute of Technology Bombay, India}
\author[A. Han]{Andi Han $^3$} 
\thanks{$^3$ University of Sydney, Australia}
\author[C. Mostajeran]{Cyrus Mostajeran $^1$}
\author[P. Jawanpuria]{Pratik Jawanpuria $^2$} 
\author[B. Mishra]{Bamdev Mishra $^4$}
\thanks{$^4$ Microsoft India}
\thanks{Email addresses: Yibang Li (\texttt{yibang001@e.ntu.edu.sg}), Bihari Lal Pandey (\texttt{bihari@iitb.ac.in}), Ravi Sah (\texttt{ravi.sah@iitb.ac.in}), Andi Han (\texttt{andi.han@sydney.edu.au}), Cyrus Mostajeran (\texttt{cyrussam.mostajeran@ntu.edu.sg}), Pratik Jawanpuria (\texttt{pratik.jawanpuria@iitb.ac.in}), Bamdev Mishra (\texttt{bamdevm@microsoft.com}).}

\begin{document}
\maketitle

\begin{abstract}


Muon and related norm-constrained matrix optimizers have become central to large-scale learning problems. They are formulated as a linear maximization oracle (LMO) over an ambient matrix-norm ball in unconstrained Euclidean space. However, these do not generalize cleanly to manifold-valued parameters such as low-rank factorizations, orthogonality constraints, or symmetric positive definite (SPD) matrices. Naively restricting the Muon LMO to the tangent space (i) breaks quotient symmetries and (ii) couples the tangent-space constraint with an ambient norm bound, thereby obstructing closed-form solutions on various manifolds of interest. We resolve both issues with a single observation: every Riemannian metric canonically lifts a unitarily invariant Euclidean norm to an intrinsic norm on each tangent space, and the resulting intrinsic norm constrained LMO is symmetry preserving. Building on this, we introduce intrinsic Muon (iMuon), a unified framework that yields closed-form updates on the fixed-rank, SPD, Stiefel, and Grassmann manifolds for any unitarily invariant norm, including the spectral, Frobenius, and nuclear norms. We establish convergence guarantees for both deterministic and stochastic iMuon with rate constants that depend only on the manifold dimension. Notably, on the fixed-rank manifold this constant depends only on the rank, making the rate independent of factor conditioning and removing the runtime factor-rescaling required by prior work. Experiments on LoRA finetuning of LLMs, image classification, and subspace learning illustrate the efficacy of the proposed approach.

\end{abstract}

\section{Introduction}\label{sec:intro}

A growing family of matrix-variate optimizers is unified by a single design principle: at each step, select the update direction that maximizes the inner product with the (stochastic) gradient over a matrix-norm ball, a linear maximization oracle (LMO)~\citep{pethick2025lmo,bernstein2025old,xie2026sso}. The choice of norm determines the optimizer: the Frobenius norm recovers SGD, the spectral norm yields Muon~\citep{jordan2024muon,bernstein2025old}, and a combined spectral-nuclear constraint gives NuMuon~\citep{dolatabadi2026numuon}. This perspective has driven recent advances in LLM pretraining~\citep{zhao2024galore,mo2025loro,wang2026taming,janson2026stabilizing} and low-rank adaptation (LoRA) fine-tuning~\citep{hu2022lora,schotthoefer2025geometric,park2025stiefel,gu2026mano}. Despite their empirical success, the design of such LMO-based optimizers has been mostly confined to unconstrained Euclidean spaces.



However, many modern representation-learning problems are intrinsically manifold-valued: LoRA fine-tuning operates on the fixed-rank manifold~\citep{mishra2014fixed,zhang24riemannianlora}, orthogonality constraints define the Stiefel manifold, and robust covariance estimation lives on the SPD manifold. Extending norm-constrained matrix optimizers to such Riemannian settings remains relatively unexplored~\citep{gu2026mano,bogachev2026riemannion}. The seemingly natural fix of restricting the LMO to the tangent space runs into two fundamental obstacles:

\begin{itemize}[leftmargin=0.3in]
    \item \textbf{(O1) Symmetry breaking on quotient manifolds.} 
    The Euclidean spectral norm is not invariant under the symmetry group of a quotient manifold (i.e., manifold arising out of symmetries). For example, on the fixed-rank manifold, equivalent factorizations $X=BA$ that share the same product induce different factor-wise Muon updates, so the update magnitude depends on an arbitrary representative~\citep{janson2026stabilizing}. 
    \item \textbf{(O2) Coupling of the tangent space and norm constraints obstructs a closed-form solution.} 
    Restricting the spectral-norm LMO to the tangent space couples two constraints, tangent space membership and an ambient norm bound, whose joint enforcement generally precludes the Euclidean polar-factor solution and admits no closed form on most manifolds of interest.
\end{itemize}
Existing literature addresses these issues only partially. 
Spectron~\citep{janson2026stabilizing} bounds the ambient update by an iterate-dependent factor rescaling computed dynamically. Riemannion~\citep{bogachev2026riemannion} relies on an approximate tangent space orthogonalization. Manifold constrained steepest descent (MCSD) \citep{yang2026mcsd} drops the tangent constraint and projects back to the manifold instead. However, none of the existing approaches resolves~(O1)~and~(O2) jointly.


In this paper, we propose intrinsic Muon (\textbf{iMuon}), a principled and unified framework for spectral, and, more generally, unitarily invariant norm-constrained optimization on Riemannian matrix manifolds (whose points can be represented as matrices) that resolves both~(O1)~and~(O2) simultaneously. The key observation is that the Riemannian metric canonically induces an \textit{intrinsic} norm on each tangent space, with the Frobenius case recovering the standard Riemannian norm and the spectral case giving a manifold-aware analogue of Muon. Our {main contributions} are:

\begin{itemize}[leftmargin=0.3in]
    \item \textbf{Intrinsic LMO with closed-form solutions on four manifolds (\Cref{sec:principle,sec:solutions}).} We identify two structural properties of the metric---a left-right product form (\Cref{prop:symmetry}) and singular value invariance of the scaled tangent space (\Cref{prop:tangent_compat})---that together resolve~(O1)~and~(O2) and reduce the intrinsic LMO to a standard Euclidean problem. This yields exact closed-form solutions on the fixed-rank, SPD, Stiefel, and Grassmann manifolds, for various norms including the spectral, Frobenius, and nuclear norms. 
    \item \textbf{Unifying viewpoint.} Beyond enabling efficient computation, instantiating these intrinsic LMOs across different norms and geometries provides a generic approach that unifies established algorithms and yields new ones. In the Euclidean special case~\citep{jordan2024muon,bernstein2025old}, our proposed LMO recovers standard SGD, Muon, and NuMuon~\citep{dolatabadi2026numuon} under different norm ball constraints. 
    On the fixed-rank manifold equipped with quotient geometry, the Frobenius instance recovers ScaledGD \citep{mishra2012geometry,zhang24riemannianlora}, while the spectral instance is, to our knowledge, the first quotient-Muon optimizer with an exact closed-form tangent space update. 
    \item \textbf{Convergence rate (\Cref{sec:convergence}).} For both deterministic and stochastic iMuon, we prove an $O(1/\sqrt{T})$ rate in the dual intrinsic norm whose constant depends solely on the manifold dimensions (and not on the iterates). On the fixed-rank manifold with the spectral norm, this radius depends only on the rank, yielding a rate that is \textit{independent of factor conditioning}, and, for example, obviates the runtime factor rescaling of \citet{janson2026stabilizing}.
    \item \textbf{Empirical validation (\Cref{sec:experiments}).} {Experiments across LoRA fine-tuning, SPD covariance estimation, and subspace learning confirm the merits of the intrinsic approach in various settings.} 
\end{itemize}

\section{Background and related work}\label{sec:setup}
We briefly review concepts of Riemannian geometry for matrix manifolds (\Cref{subsec:riemannian_primer}), Muon and norm-constrained Euclidean LMO (\Cref{subsec:muon_background}), and recent attempts at generalizing the latter to low-rank and matrix manifold settings (\Cref{subsec:related_work}). {Additional background and related works are discussed in \Cref{app:background,app:related_work_extended} and detailed comparisons are provided in  \Cref{app:comparisons}.}

\subsection{Riemannian optimization on matrix manifolds}\label{subsec:riemannian_primer}
We consider the problem $\min_{x\in\M}f(x)$ on a Riemannian matrix manifold $\M$ with smooth $f:\M\to\RR$. 
The metric $g_x:T_x\M\times T_x\M\to\RR$ on the tangent space $T_x\M$ at $x\in \M$ admits a self-adjoint, positive-definite operator form $g_x(\xi,\zeta)=\langle G_x\xi,\zeta\rangle$, with induced Riemannian norm $\|\xi\|_x=\|G_x^{1/2}\xi\|_F$, where $\|\cdot \|_F$ is the Frobenius norm. The Riemannian gradient $\mathrm{grad}\,f(x)\in T_x\M$ satisfies $g_x(\mathrm{grad}\,f,\zeta)=\langle\nabla f(x),\zeta\rangle$ for all $\zeta\in T_x\M$. 
To update parameters, a retraction operator $R_x : T_x\M \to \M$ generalizes the Euclidean step $x+\eta\xi$ with step size $\eta$. On an \textit{embedded submanifold} $\M\subset\RR^{p\times q}$, $T_x\M$ is a linear subspace of the ambient space. On a \textit{quotient manifold} $\M=\overline{\M}/\mathcal{G}$, obtained by symmetry invariance $\mathcal{G}$ in the total space $\overline{\M}$, the metric must be $\mathcal{G}$-invariant for the Riemannian gradient to be well-defined independently of the representative. {Please refer to Appendix~\ref{app:related_work_extended} and standard references~\citep{absil2008optimization,boumal2023intromanifolds} for more details.}

\subsection{Muon and norm-constrained Euclidean optimizers}\label{subsec:muon_background}

For an unconstrained matrix parameter $X\in\RR^{p\times q}$ and gradient (or momentum) $M\in\RR^{p\times q}$,  Muon~\citep{jordan2024muon,bernstein2025old,carlson2015psd} has the update $X_{+}=X-\eta\,O$, where $O$ is the solution to:
\begin{equation}\label{eqn:muon_lmo}
\text{Spectral-norm LMO} \qquad    O \;=\; \argmax_{\|O\|_2\le\tau}\;\langle O, M\rangle \;=\; \tau\,\Ortho(M),
\end{equation}
with $\Ortho(M)=UV^\top$ being the polar factor of $M=U\Sigma V^\top$, i.e., the singular value decomposition. \citet{bernstein2025old} formalize~\eqref{eqn:muon_lmo} as steepest descent under the spectral norm. Subsequent works generalize the LMO viewpoint to broader operator norms~\citep{pethick2025lmo}, modular norms across the network~\citep{bernstein2025modular,large2025modular}, additional nuclear-norm budget~\citep{dolatabadi2026numuon}, per-layer and product norms~\citep{crawshaw2026exploration}, and analyze convergence, preconditioning, and Newton--Schulz approximations of $\Ortho(\cdot)$~\citep{shen2025muon,ma2026preconditioning,sato2025convergence,kim2026convergence,amsel2026polar,riabinin2025gluon,wen2026fantastic}. 
All of these methods reside in unconstrained Euclidean space and our intrinsic LMO~\eqref{eqn:main_lmo} (defined later) reduces to them in that special case (\Cref{app:connections}).

\subsection{Muon variants for structured and manifold settings}\label{subsec:related_work}\label{subsec:naive_extension}

A natural attempt to use~\eqref{eqn:muon_lmo} on a Riemannian manifold $\M$ is to additionally restrict the search to the tangent space $T_x \M$ at $x\in \M$:
\begin{equation}\label{eqn:naive_manifold_lmo}
\xi^*_{\text{naive}} \;=\; \argmax_{\xi\in T_x\M,\;\|\xi\|_2\le\tau}\;\langle\xi,\,\nabla f(x)\rangle.
\end{equation}
This naive lift is the implicit template behind several recent manifold/structured Muon proposals but it inherits the two {challenges} (O1) and (O2) introduced in \Cref{sec:intro}.

\textbf{LoRA and the fixed-rank manifold.}  For LoRA-style fine-tuning $X=BA$ with $B\in\RR^{m\times r},A\in\RR^{r\times n}$, $X$ lies on the fixed-rank manifold $\M_r$, equipped with the $\mathrm{GL}(r)$ symmetry $(B,A)\sim(BN^{-1},NA)$, where $N \in \mathrm{GL}(r)$ the set of $r\times r$ non-singular matrices. 
Existing works \citep{mishra2012geometry,tong2021scaledgd,zhang24riemannianlora} propose Riemannian preconditioned gradient descent for such settings, which corresponds to the Frobenius instance of our framework on $\M_r$. Recently, 
\citep{janson2026stabilizing,kang2026spectral} study the following factor-wise Muon problem, which independently solves a Euclidean spectral-norm LMO on each factor, i.e.,
\begin{equation}\label{eqn:factorwise_muon}
\dot{B}^*=\argmax_{\|\dot{B}\|_2\le\tau}\inner{\dot{B}}{\nabla_X f\,A^\top},\quad \dot{A}^*=\argmax_{\|\dot{A}\|_2\le\tau}\inner{\dot{A}}{B^\top\nabla_X f}.
\end{equation}
Solving (\ref{eqn:factorwise_muon}) requires applying $\Ortho(\cdot)$ to each factor's gradient (or momentum). However, the resulting ambient update is not invariant under the $\mathrm{GL}(r)$ symmetry, exhibiting (O1). 
\citet{kang2026spectral} prove a continuous-time convergence for the momentum-free limit, conditional on factor-norm boundedness. On the other hand, \citep{janson2026stabilizing} bounds the ambient spectral norm of the update by an iterate-dependent rescaling $\rho=\eta/(\|B\|_2+\|A\|_2+1)$ computed at runtime. 
While this circumvents (O1) numerically, our approach (Section~\ref{sec:principle}) handles (O1) algebraically. Please also refer to  \Cref{cor:lowrank,app:compare_lowrank} for more details.

\textbf{Embedded manifolds and the coupled tangent space LMO.}\ \ Restricting~\eqref{eqn:muon_lmo} to $T_x\M$ couples the spectral-norm constraint to a tangent-space constraint. The resulting LMO~\eqref{eqn:naive_manifold_lmo} generally has no closed form, which is (O2) described earlier. 
Riemannion~\citep{bogachev2026riemannion}  approximately solves the tangent-constrained spectral LMO on the embedded fixed-rank manifold,  
while MCSD/SPEL~\citep{yang2026mcsd} drops the tangent-space constraint on the Stiefel manifold and projects the ambient solution onto the Stiefel manifold. 
We discuss these update rules in detail in \Cref{app:compare_lowrank,app:compare_stiefel}. In contrast, our intrinsic LMO (next section) yields closed-form updates for the fixed-rank and Stiefel manifolds.

\section{The Intrinsic LMO}\label{sec:principle}


We now generalize norm-constrained Euclidean LMOs, e.g.~(\ref{eqn:muon_lmo}), to the geometry-aware manifold setting. We begin by discussing the key idea that the Riemannian metric itself canonically induces the appropriate notion of matrix norm on the tangent space. Later, we propose a novel family of intrinsic norm constrained LMOs on the tangent space of manifolds and analyze it with respect to the challenges (O1) and (O2).

\textbf{Intrinsic norms on the tangent space.\ } The metric operator $G_x\succ 0$ induces the Riemannian norm $\|\xi\|_x\coloneqq \|G_x^{1/2}\xi\|_F$, where $\xi\in T_x\M$ and $\|\cdot\|_F$ is the Frobenius norm. 
This suggests a canonical extension from the Frobenius norm to any unitarily invariant norm \(\varphi\): for \(\xi\in T_x\M\), we define the intrinsic \(\varphi\)-norm  by $\|\xi\|_{\varphi,x}:=\varphi(G_x^{1/2}\xi)$.
In particular, choosing $\varphi = \|\cdot\|_2$, $\|\cdot\|_F$, or $\|\cdot\|_*$ yields the intrinsic spectral, Frobenius, and nuclear norms, respectively.
We note that $\|\cdot\|_{\varphi,x}$ is intrinsic in the sense that it is determined entirely by the Riemannian metric. It reduces to the standard matrix norm in the Euclidean case $G_x = I$, while for $\varphi = \|\cdot\|_F$ it recovers the Riemannian norm itself.

\subsection{Proposed LMO formulation}
Given the intrinsic norm $\|\cdot\|_{\varphi,x}$ introduced above, we propose the update direction at $x \in \M$ for optimizing $\min_{x\in\M}f(x)$ through the following LMO:
\begin{equation}\label{eqn:main_lmo}
{
    \xi^* = \argmax_{\xi\in T_x\M,\;\varphi(G_x^{1/2}\xi)\le\tau}\;g_x(\xi,\,\text{grad}\,f),
}
\end{equation}
where \(\varphi\) is any unitarily invariant norm and $g_x$ is the metric defined on the tangent space $T_x\M$ at $x\in\M$. The iterate is then updated by retracting along the $-\xi^*$ direction: $x_+ = R_x(-\eta\,\xi^{*})$ with step size $\eta>0$. 
Compared with the naive tangent space LMO~\eqref{eqn:naive_manifold_lmo}, both the objective $g_x(\xi,\text{grad}\,f)$ and the feasible set $\{\xi \in T_x\mathcal{M} : \|\xi\|_{\varphi,x}\leq \tau\}$ in the proposed LMO \eqref{eqn:main_lmo} are now intrinsic to the manifold geometry. 

Our formulation recovers several familiar special cases. If $\mathcal{M}$ is a Euclidean space with the standard metric, then  \eqref{eqn:main_lmo} reduces exactly to the usual Euclidean norm-constrained LMO. If $\varphi = \|\cdot\|_F$ and $\tau = 1$, then \eqref{eqn:main_lmo} returns the normalized Riemannian gradient direction \citep{absil2008optimization,boumal2023intromanifolds}. 
In the next sections, we show that our LMO formulation simultaneously preserves the quotient symmetries (resolving (O1)) and admits efficient closed-form solutions (resolving (O2)). 

\subsection{Symmetry preservation on quotient manifolds}\label{subsec:symmetry_preservation}

We next analyze the proposed intrinsic constraint in \eqref{eqn:main_lmo} when $\mathcal{M}=\bar{\mathcal{M}}/\mathcal{G}$ is a  quotient manifold ($\bar{\mathcal{M}}$ being the total space) with a $\mathcal{G}$-invariant Riemannian metric $g_x$. 
In such settings, we show that our intrinsic constraint respects the underlying symmetry if the metric admits a {left-right form}: $g_x(\xi,\zeta)=\tr(\xi^\top P_x \zeta Q_x)$, or equivalently, the metric operator factors as $G_x \xi=P_x \xi Q_x$, where $P_x$ and $Q_x$ are symmetric positive-definite matrices depending smoothly on $x$. Then, $G_x^{1/2} \xi=P_x^{1/2}\xi Q_x^{1/2}$ and the intrinsic norm reads $\|\xi\|_{\varphi,x} = \varphi(P_x^{1/2}\xi Q_x^{1/2})$.



In \Cref{sec:solutions}, we discuss popular quotient manifolds where the metric has the left-right form. 
It should be noted that the intrinsic norm itself is well-defined for any positive self-adjoint $G_x$ and does not require the left-right factorization. However, on quotient manifolds, this factorization makes the metric scaling explicit and the symmetry behavior of the intrinsic norm transparent. Our next result shows that, under such metrics, the intrinsic norm constraint is $\mathcal{G}$-invariant.



\begin{proposition}[Symmetry invariance of the intrinsic constraint]\label{prop:symmetry} 
Suppose $\M=\overline{\M}/\mathcal{G}$ has a $\mathcal{G}$-invariant left-right metric $g_x(\xi,\zeta)=\tr(\xi^\top P_x\,\zeta\,Q_x)$ with $P_x,Q_x\succ 0$.
Then $\varphi(G_{\tilde{x}}^{1/2}\tilde{\xi})=\varphi(G_x^{1/2}\xi)$ for every unitarily invariant $\varphi$ norm, where $\tilde{x}=h\cdot x$ and $\tilde{\xi}\in T_{\tilde{x}}\M$ is the corresponding tangent vector under the action of $h\in\mathcal{G}$.
\end{proposition}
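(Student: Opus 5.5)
The plan is to show that the two scaled matrices $P_{\tilde x}^{1/2}\tilde\xi\,Q_{\tilde x}^{1/2}$ and $P_x^{1/2}\xi\,Q_x^{1/2}$ are related by left and right multiplication by orthogonal matrices. They then share singular values, and every unitarily invariant $\varphi$ takes the same value on both.

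\textbf{Step 1: set up the action.} For the quotient geometries the paper has in mind (e.g.\ the fixed-rank manifold with $(B,A)\sim(BN^{-1},NA)$), the group acts on representatives and on horizontal tangent vectors by invertible linear maps on the left and right. Concretely, there are invertible $L_h,R_h$ with
\[
\tilde x=L_h\,x\,R_h,\qquad \tilde\xi=L_h\,\xi\,R_h .
\]
For product factors this is applied blockwise.

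\textbf{Step 2: use $\mathcal{G}$-invariance of the metric.} Invariance means $g_{\tilde x}(\tilde\xi,\tilde\zeta)=g_x(\xi,\zeta)$ for all $\xi,\zeta$, that is,
\[
\tr\big(R_h^\top\xi^\top L_h^\top P_{\tilde x}L_h\,\zeta\,R_hQ_{\tilde x}\big)=\tr\big(\xi^\top P_x\zeta Q_x\big).
\]
Rewriting the left side as $\tr\big(\xi^\top (L_h^\top P_{\tilde x}L_h)\,\zeta\,(R_hQ_{\tilde x}R_h^\top)\big)$, I want to conclude
\[
L_h^\top P_{\tilde x}L_h=c\,P_x,\qquad R_hQ_{\tilde x}R_h^\top=c^{-1}Q_x
\]
for some scalar $c>0$. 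This holds because a Kronecker product $P\otimes Q$ of SPD matrices determines its factors up to such a reciprocal scalar. In the concrete instances (e.g.\ $P=I$, $Q=AA^\top$ for the $B$-factor) this transformation law can be checked directly, and I would take it as the structural content of ``$\mathcal{G}$-invariant left-right metric.'' This identification step is the main obstacle. Invariance only constrains the operator restricted to horizontal or tangent directions, so pinning down the Kronecker factors may require arguing on the full space, or simply verifying the law for each manifold considered.

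\textbf{Step 3: build the orthogonal factors.} Set $U=P_{\tilde x}^{1/2}L_h\,(cP_x)^{-1/2}$. Then
\[
U^\top U=(cP_x)^{-1/2}\,L_h^\top P_{\tilde x}L_h\,(cP_x)^{-1/2}=I ,
\]
so $U$ is orthogonal. Similarly $V=(c^{-1}Q_x)^{-1/2}R_hQ_{\tilde x}^{1/2}$ is orthogonal. Substituting $\tilde\xi=L_h\xi R_h$ gives
\[
P_{\tilde x}^{1/2}\tilde\xi\,Q_{\tilde x}^{1/2}=U\,(cP_x)^{1/2}\,\xi\,(c^{-1}Q_x)^{1/2}\,V=U\,P_x^{1/2}\xi Q_x^{1/2}\,V ,
\]
because the scalars $c^{1/2}$ and $c^{-1/2}$ cancel.

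\textbf{Step 4: conclude.} Unitary invariance of $\varphi$ gives $\varphi(G_{\tilde x}^{1/2}\tilde\xi)=\varphi(G_x^{1/2}\xi)$. For product manifolds, I would either apply the argument factorwise or view $G_x$ as block-diagonal left-right on a combined matrix.
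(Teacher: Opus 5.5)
Your proposal is correct and follows essentially the paper's own argument. The paper conjugates to $A=P_{\tilde x}^{1/2}L_hP_x^{-1/2}$ and $B=Q_{\tilde x}^{1/2}R_hQ_x^{-1/2}$, then uses Frobenius-norm preservation of $Z\mapsto AZB^\top$ to force $A^\top A=\alpha I$ and $B^\top B=\alpha^{-1}I$; this is exactly your identity $L_h^\top P_{\tilde x}L_h=cP_x$, $R_hQ_{\tilde x}R_h^\top=c^{-1}Q_x$ written in other coordinates. The paper simply asserts that preservation ``for all $Z$'', so it resolves the obstacle you flagged by the same assumption (invariance on the full ambient space) rather than by any additional argument.
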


The proof is given in \Cref{app:proof_symmetry}. 
With respect to the proposed LMO problem (\ref{eqn:main_lmo}), the above result shows that on quotient manifolds, both the feasible set $\{\xi:\varphi(G_x^{1/2}\xi)\le\tau\}$ and the optimal update $\xi^*$ are invariant under the symmetry group $\mathcal{G}$. Thus, our result shows that our intrinsic norm generalizes the Frobenius-norm invariance (that makes the Riemannian gradient well-defined on quotients~\citep{absil2008optimization,mishra2016riemannian, boumal2023intromanifolds}) to all unitarily invariant norms. 


%

\subsection{Efficient and optimal solution to the coupled norm and tangent-space constraints}\label{subsec:reduction}

A key computational challenge in solving (\ref{eqn:main_lmo}) efficiently is the simultaneous presence of two constraint sets: $\{\xi:\xi\in T_x\M\}$ and $\{\xi:\varphi(G_x^{1/2}\xi)\le\tau\}$. 
We now show that, under certain conditions, these two constraints can be simplified, which allows for obtaining closed-form solutions of (\ref{eqn:main_lmo}).

In this regard, we first define the scaled tangent space $\mathcal{Z}_x\coloneqq\{G_x^{1/2}\eta:\eta\in T_x\M\}$. Since $G_x^{1/2}$ is invertible on $T_x\mathcal{M}$, the map $\xi\mapsto Z \coloneqq G_x^{1/2}\xi$ is an isomorphism between $T_x\mathcal{M}$ and $\mathcal{Z}_x$. The intrinsic norm may now be rewritten as $\varphi(G_x^{1/2}\xi) = \varphi(Z)$. 
Moreover, if we define the scaled gradient $H \coloneqq G_x^{1/2}\text{grad} f$, then the objective of (\ref{eqn:main_lmo}) satisfies $g_x(\xi,\text{grad} f) = \langle Z,H\rangle$. 
To formalize our next result, we also require the following notion.
\begin{definition}\label{def:sv_invariant}
A linear subspace $\mathcal{S}\subseteq\RR^{p\times q}$ is \emph{singular value (SV) invariant} if, whenever $U\Sigma V^\top \in \mathcal{S}$ for $p\times p$ and $q \times q$ orthogonal matrices $U$ and $V$, respectively, and a rectangular diagonal matrix $\Sigma$ with nonnegative entries, then 
$UDV^\top \in \mathcal{S}$ for every rectangular diagonal matrix $D$ of the same size with nonnegative entries. 
\end{definition}

\begin{proposition}[Closed-form solution]\label{prop:tangent_compat}
Let $\mathcal{Z}_x\coloneqq\{G_x^{1/2}\eta:\eta\in T_x\M\}$. 
Suppose $\mathcal{Z}_x$ is SV invariant. Then for every unitarily invariant norm $\varphi$, the solution of (\ref{eqn:main_lmo}) is given by
\begin{equation}
    \xi^*=G_x^{-1/2}Z^{*},\ \textup{where } Z^{*} = \argmax_{\varphi(Z)\le\tau}\inner{Z}{H},
\end{equation}
and $H:=G_x^{1/2}(\text{grad}\,f)$.
Moreover, $Z^*$ can be computed as  $Z^*= U\diag(z^*)V^\top$, where $U$ and $V$ are the left and the right singular spaces of $H$ (i.e, $H=U\diag(\sigma) V^\top$ is the SVD of $H$) and $z^*$ solves the vector problem $\max_{\varphi(z)\le \tau }\inner{z}{\sigma}= \tau \varphi^\circ(\sigma)$, solvable in closed form for any unitarily invariant $\varphi$ via norm duality~\citep{horn2012matrix} (\Cref{app:primal_dual}).
\end{proposition}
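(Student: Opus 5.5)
Since $G_x^{1/2}$ restricted to $T_x\M$ is an isomorphism onto $\mathcal{Z}_x$, I would first change variables. Setting $Z=G_x^{1/2}\xi$ and $H=G_x^{1/2}\,\mathrm{grad}\,f$, self-adjointness of $G_x^{1/2}$ gives
\[
g_x(\xi,\mathrm{grad}\,f)=\langle G_x^{1/2}\xi,G_x^{1/2}\mathrm{grad}\,f\rangle=\langle Z,H\rangle .
\]
So \eqref{eqn:main_lmo} is equivalent to $\max\{\langle Z,H\rangle : Z\in\mathcal{Z}_x,\ \varphi(Z)\le\tau\}$, and its maximizers correspond bijectively to those of \eqref{eqn:main_lmo} via $\xi^*=G_x^{-1/2}Z^*$. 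The task then reduces to showing that the subspace constraint $Z\in\mathcal{Z}_x$ can be dropped. That is, I want to show that some maximizer of the unconstrained problem $\max_{\varphi(Z)\le\tau}\langle Z,H\rangle$ already lies in $\mathcal{Z}_x$.

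Next I would solve the unconstrained problem. Write $H=U\diag(\sigma)V^\top$ as a full SVD. Von Neumann's trace inequality gives $\langle Z,H\rangle\le\langle s(Z),\sigma\rangle$, where $s(Z)$ is the singular value vector of $Z$. Because $\varphi$ is unitarily invariant, $\varphi(Z)=\phi(s(Z))$ for a symmetric gauge function $\phi$. Hence the value of the problem is at most $\max_{\phi(z)\le\tau,\,z\ge 0}\langle z,\sigma\rangle=\tau\phi^\circ(\sigma)$. Here we may restrict to $z\ge0$ because $\sigma\ge 0$ and $\phi$ is absolute.

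To attain the bound, I would take $z^*\ge 0$ as a maximizer of the vector problem, using the dual-norm result of \Cref{app:primal_dual}, and set $Z^*=U\diag(z^*)V^\top$. Then $\varphi(Z^*)=\phi(z^*)\le\tau$ and $\langle Z^*,H\rangle=\langle z^*,\sigma\rangle$, so $Z^*$ is optimal for the relaxed problem.

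The key step is membership, which is where SV invariance enters. Since $H=G_x^{1/2}\,\mathrm{grad}\,f$ with $\mathrm{grad}\,f\in T_x\M$, we have $H\in\mathcal{Z}_x$, and $H=U\diag(\sigma)V^\top$ has exactly the form required by \Cref{def:sv_invariant}. Applying that definition with $D=\diag(z^*)$, which has nonnegative entries, gives $Z^*\in\mathcal{Z}_x$. So $Z^*$ is feasible for the constrained problem while attaining the relaxed optimum, and is therefore optimal for it. Pulling back gives $\xi^*=G_x^{-1/2}Z^*\in T_x\M$.

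I expect the main obstacle to be the bookkeeping in the von Neumann step. The vector problem must admit a nonnegative maximizer; one can replace $z$ by $|z|$, using that $\phi$ is absolute and $\sigma\ge0$. The ordering of singular values must also be consistent, which needs $z^*$ sorted like $\sigma$, or else a rearrangement argument via the symmetry of $\phi$. The edge case $H=0$ is trivial, since any feasible point is optimal and $0\in\mathcal{Z}_x$. I would also remark that the argmax is a selection when the solution is not unique, for instance in the nuclear case.
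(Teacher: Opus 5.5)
Your proposal is correct and follows essentially the same route as the paper. You change variables to $Z=G_x^{1/2}\xi$, bound the relaxed problem via von Neumann's trace inequality, and use SV invariance applied to $H\in\mathcal{Z}_x$ to show that $U\diag(z^*)V^\top$ is feasible; your version is slightly more careful than the paper's, since you construct one maximizer of that form (with $z^*\ge 0$ and a full SVD matching \Cref{def:sv_invariant}) rather than asserting that every unconstrained maximizer has that form.
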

The proof is given in \Cref{app:proof_reduction}. In Section~\ref{sec:solutions}, we discuss several manifolds of interest where $\mathcal{Z}_x$ is SV-invariant. 
In \Cref{app:other_manifolds}, we also present a broader discussion including cases where SV invariance of $\mathcal{Z}_x$ or the left-right form of the metric fail. 




\section{Application of intrinsic LMO (\ref{eqn:main_lmo}) on matrix manifolds}\label{sec:solutions}




In this section, we discuss the closed-form solutions of the proposed LMO (\ref{eqn:main_lmo}) for various manifolds and intrinsic spectral/Frobenius/nuclear norms.

\paragraph{Product of tangent spaces.}
When the tangent space decomposes as a product $T_x\M=\RR^{p_1\times q_1}\times\cdots\times\RR^{p_k\times q_k}$ and $G_x$ has the corresponding block structure, then  $G_x^{1/2}$ acts component-wise. In such settings, we may choose $\varphi(Z_1,\ldots,Z_k)$ as the $\ell_\infty$-product norm, i.e., $\varphi(Z_1,\ldots,Z_k)\coloneqq\max_i\varphi(Z_i)$, as it decouples the intrinsic LMO into $k$ independent per-block subproblems:
\begin{equation}\label{eqn:whitened_lmo_product}
\{Z_i^*\}_{i=1}^k =  \argmax_{\varphi(Z_1,\ldots,Z_k)\le\tau} \sum\nolimits_i \inner{Z_i}{H_i} \ \equiv \ 
Z_i^* = \argmax_{\varphi(Z_i)\le\tau}\;\inner{Z_i}{H_i}, \qquad i=1,\ldots,k,
\end{equation}
each solvable in closed form via \Cref{prop:tangent_compat}, since the constraint $\max_i\varphi(Z_i)\le\tau$ separates into $\varphi(Z_i)\le\tau$ for each block independently. 
Interestingly, \citep{crawshaw2026exploration} has also studied $\ell_\infty$-based product norms in the context of combining norms across neural network layers.  

\textbf{Fixed-rank manifold.\ }\label{subsec:lowrank} 
We now consider the set $\M_r$ of rank-$r$ matrices parameterized as $X=BA$ with $B\in\RR^{m\times r}_*$, $A\in\RR^{r\times n}_*$ (where $\RR^{\cdot}_*$ denotes full-rank matrices). We note that the factorization $X=BA$ has the quotient symmetry: $(B,A)\sim(BN^{-1},N A)$ for $N\in\text{GL}(r)$, where $\text{GL}(r)$ denotes the group of $r\times r$ invertible matrices. This results in $\M_r = (\RR^{m\times r}_*\times\RR^{r\times n}_*)/\text{GL}(r)$ and $T_x \M_r = \mathbb{R}^{m\times r} \times \mathbb{R}^{r\times n}$~\citep{absil2008optimization,mishra2014fixed}. If $x=(B,A)\in\M_r$ and $\xi_i=(\dot{B}_i,\dot{A}_i)\in T_x \M_r$ for $i=1,2$, then 
the coupled metric is
$g_x(\xi_1,\xi_2)=\tr(\dot{B}_1^\top\dot{B}_2\,AA^\top)+\tr(B^\top B\,\dot{A}_1\dot{A}_2^\top)$~\citep{mishra2012geometry,mishra2016riemannian,zhang24riemannianlora}. We observe that the metric is $\text{GL}(r)$-invariant with the left-right form (\Cref{prop:symmetry}) and has a block structure, with 
$G_x^{1/2}\xi_i=(\dot{B}_i(A A^\top)^{1/2},\;(B^\top B)^{1/2}\dot{A}_i)$. Thus, we decouple the LMO into $\dot{B}$ and $\dot{A}$ subproblems~(\ref{eqn:whitened_lmo_product}).

\begin{corollary}[Intrinsic LMO (\ref{eqn:main_lmo}) for fixed-rank manifold]\label{cor:fixed_rank} Let $\M=\M_r$ and at $X=BA\in\M_r$, let $\nabla_{X} f$ denote the gradient w.r.t. $X$. Then, with the above discussed $\text{GL}(r)$-invariant metric and  $\varphi=\|\cdot\|_2$, the optimal solution of the proposed LMO (\ref{eqn:main_lmo}) is $\xi^{*}=(\dot{B}^{*},\dot{A}^{*})$, where $\dot{B}^{*} = \tau\Ortho((\nabla_{B}f)(AA^\top)^{-1/2})(AA^\top)^{-1/2}$, $\dot{A}^{*}=\tau(B^\top B)^{-1/2}\Ortho((B^\top B)^{-1/2}\nabla_{A}f)$,  $\nabla_{B}f=\nabla_X fA^\top$, and $\nabla_{A}f=B^\top \nabla_X f$.
\end{corollary}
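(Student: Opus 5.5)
The plan is to apply \Cref{prop:tangent_compat} blockwise, via the product decomposition \eqref{eqn:whitened_lmo_product}, and then compute the scaled gradient explicitly. First I check the hypotheses. Here $T_x\M_r=\RR^{m\times r}\times\RR^{r\times n}$, and $G_x^{1/2}(\dot B,\dot A)=(\dot B(AA^\top)^{1/2},(B^\top B)^{1/2}\dot A)$. Since $A$ and $B$ have full rank, both square roots are invertible, so the scaled tangent space is $\mathcal{Z}_x=\RR^{m\times r}\times\RR^{r\times n}$. A full matrix space is trivially SV invariant in each block. I interpret the intrinsic spectral norm on the product as the $\ell_\infty$-product norm $\max(\|Z_B\|_2,\|Z_A\|_2)$, as in \eqref{eqn:whitened_lmo_product}. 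With this choice, the LMO splits into two independent Euclidean spectral-norm LMOs, $Z_B^*=\argmax_{\|Z_B\|_2\le\tau}\inner{Z_B}{H_B}$ and likewise for $A$.

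Next I compute the Riemannian gradient and the scaled gradient $H=G_x^{1/2}\,\mathrm{grad}\,f$. The Euclidean gradient in the total space follows from the chain rule on $X=BA$: it equals $(\nabla_X f A^\top,B^\top\nabla_X f)=(\nabla_B f,\nabla_A f)$. The defining relation $g_x(\mathrm{grad} f,\zeta)=\inner{\nabla f}{\zeta}$ with the metric $\tr(\dot B_1^\top\dot B_2AA^\top)+\tr(B^\top B\dot A_1\dot A_2^\top)$ then gives
\[
\mathrm{grad} f=\bigl(\nabla_B f(AA^\top)^{-1},\,(B^\top B)^{-1}\nabla_A f\bigr).
\]
Hence $H_B=\nabla_Bf(AA^\top)^{-1/2}$ and $H_A=(B^\top B)^{-1/2}\nabla_Af$. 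A cleaner route, which avoids computing $\mathrm{grad} f$, is to note directly that $g_x(\xi,\mathrm{grad} f)=\inner{\xi}{\nabla f}=\inner{G_x^{1/2}\xi}{G_x^{-1/2}\nabla f}$. I need to check that $\mathrm{grad} f$ is horizontal, or at least that the LMO objective only involves $\inner{\xi}{\nabla f}$. Since the objective is defined through $g_x$ and the tangent space is the full total-space tangent space, this identity suffices.

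Finally, by \eqref{eqn:muon_lmo}, $Z_B^*=\tau\Ortho(H_B)$ and $Z_A^*=\tau\Ortho(H_A)$; this is also the $\varphi=\|\cdot\|_2$ case of the vector problem in \Cref{prop:tangent_compat}, whose maximizer is $z^*=\tau\mathbf{1}$ on the support of $\sigma$. Mapping back with $G_x^{-1/2}$ gives $\dot B^*=\tau\Ortho(\nabla_Bf(AA^\top)^{-1/2})(AA^\top)^{-1/2}$ and $\dot A^*=\tau(B^\top B)^{-1/2}\Ortho((B^\top B)^{-1/2}\nabla_Af)$, which is the stated formula.

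The main subtlety I expect is the quotient structure. The statement identifies $T_x\M_r$ with the full product space, but strictly one should work on the horizontal space. If one restricts to horizontal vectors, $\mathcal{Z}_x$ is no longer the full space, and SV invariance would need a separate argument. I would resolve this by following the paper's convention, stated just before the corollary, that $T_x\M_r=\RR^{m\times r}\times\RR^{r\times n}$. A secondary point is uniqueness: when $H_B$ is rank-deficient, $\Ortho$ must be read as the polar factor on the range of $H_B$, so the stated formula is one maximizer rather than the unique one.
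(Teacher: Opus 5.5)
Your proposal is correct and follows essentially the same route as the paper's proof. The scaled tangent space is the full product $\RR^{m\times r}\times\RR^{r\times n}$, which is trivially SV invariant, and the $\ell_\infty$-product norm decouples the LMO into two blocks. The scaled gradients are $H_B=\nabla_B f\,(AA^\top)^{-1/2}$ and $H_A=(B^\top B)^{-1/2}\nabla_A f$; the paper obtains these directly from $\inner{\dot B}{\nabla_B f}$, which is your ``cleaner route''. Applying $\tau\Ortho(\cdot)$ and then $G_x^{-1/2}$ yields the stated formula.

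The paper additionally checks $\mathrm{GL}(r)$-invariance via \Cref{prop:symmetry}, which the closed form does not need. Your remarks on horizontality and on non-uniqueness when $H_B$ is rank-deficient are valid caveats that the paper does not address.
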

The proof is given in \Cref{app:proof_fixed_rank}. 
We note that the intrinsic spectral LMO in \Cref{cor:fixed_rank} differs from prior $\M_r$-Muon constructions in two ways: (i) Factor-wise Muon~\citep{janson2026stabilizing,kang2026spectral} solves~\eqref{eqn:factorwise_muon} by applying $\Ortho(\cdot)$ to each factor gradient. The resulting ambient update is not $\mathrm{GL}(r)$-invariant and its spectral norm scales with $\|A\|_2+\|B\|_2$. Hence, Spectron~\citep{janson2026stabilizing} rescales the solution of~\eqref{eqn:factorwise_muon} during runtime to bound the ambient spectral norm. (ii) Riemannion~\citep{bogachev2026riemannion} works in the embedded view with $G_X=I$  and approximates the tangent-constrained LMO by a rank-$2r$  truncated polar followed by tangent space re-projection. Therefore, the spectral-norm bound is satisfied only approximately. 
\Cref{cor:fixed_rank}, in contrast, is the exact maximizer of~\eqref{eqn:main_lmo} on each factor block, and its Frobenius instance recovers Riemannian preconditioned gradient for fixed-rank matrices~\citep{tong2021scaledgd,zhang24riemannianlora}. \Cref{app:compare_lowrank} provides detailed update-rule and norm-bound comparisons.

\textbf{SPD manifold.\ }\label{subsec:other_manifolds}
The SPD manifold $S_{++}^n$ is the space of all $n\times n$ symmetric positive-definite matrices. At any $X\in S_{++}^n$, $T_X S_{++}^n = \text{Sym}(n)$, the space of $n\times n$ symmetric matrices. Since $f$ is defined on symmetric matrices, the Euclidean gradient $\nabla_X f\in\text{Sym}(n)$ is itself symmetric. Consider $S_{++}^n$ with the affine-invariant metric: $g_X(\xi,\zeta)=\tr(X^{-1}\xi X^{-1}\zeta)$ \citep{bhatia2009positive}. Both Propositions \ref{prop:symmetry} and \ref{prop:tangent_compat} are satisfied with $G_X^{1/2}\xi=X^{-1/2}\xi X^{-1/2}$ and the Riemannian gradient is $\text{grad}\,f(X) = X(\nabla_X f)X$. The solution of the proposed LMO (\ref{eqn:main_lmo}) with $\varphi=\|\cdot\|_2$ is $\xi^* = \tau\,X^{1/2}\,\sign(H)\,X^{1/2}$, where $H=G_X^{1/2}(\text{grad}\,f)=X^{1/2}(\nabla_X f)X^{1/2}$ and $\sign(H)=Q\diag(\sign(\lambda_i))Q^\top$ \citep{higham2008functions} with $Q$ and $\{\lambda_i\}_{i}$ being eigenvectors and eigenvalues of $H$, respectively. 
%
We also note that the intrinsic Frobenius norm constraint in this setting recovers the natural gradient~\citep{amari1998natural}. 
Alternative SPD metrics are discussed in~\Cref{app:spd_metrics}.

\textbf{Stiefel manifold.\ }
The Stiefel manifold $\text{St}(m,r)=\{X\in\RR^{m\times r}:X^\top X=I_r\}$ is the set of $m\times r$ matrices with orthonormal columns with $G_X=I$ (Euclidean metric). The tangent space $T_X\text{St}=\{\xi\in\RR^{m\times r}:X^\top\xi+\xi^\top X=0\}$ decomposes as $\xi=XA+X_\perp B$, where $A\in\text{Skew}(r)$ (set of $r\times r$ skew symmetric matrices), $B\in\RR^{(m-r)\times r}$, and $X_\perp\in\RR^{m\times(m-r)}$ is an orthonormal basis for the orthogonal complement of $\text{col}(X)$ \citep{edelman1998geometry,absil2008optimization, boumal2023intromanifolds}. Since $G_X=I$, the space $\mathcal{Z}_x=\{G_X^{1/2}\eta:\eta\in T_X\M\}$ is the same as $T_X\M$. As $\mathcal{Z}_x$ is not SV-invariant, \Cref{prop:tangent_compat} cannot be directly applied to $\text{St}(m,r)$. However, we note that any $\xi\in T_x\M$ may be viewed as $\xi = [X,X_\perp][A;B]$. Thus, $T_X\M$ latently decomposes as $[A;B]$. Moreover, for unitarily invariant norms, $\varphi(\xi)\leq \tau = \varphi([A;B])\leq \tau$. Exploiting the above decomposition with the product viewpoint (\ref{eqn:whitened_lmo_product}), \Cref{prop:tangent_compat} gives the following result. 
\begin{lemma}[Intrinsic LMO (\ref{eqn:main_lmo}) for Stiefel manifold]\label{lemma:stiefel_spectral}
Let $\M=\text{St}(m,r)$ and at $X\in\M$, let $G_X=I$. Let $\nabla_{X} f$ denote the gradient w.r.t. $X$. Then, with $\varphi=\|\cdot\|_2$, the optimal solution of the proposed LMO (\ref{eqn:main_lmo}) is $\xi^{*}=\tau( X\Ortho(\Skew(X^\top\nabla_X f)) + X_\perp \Ortho(X_\perp^\top\nabla_X f))$.
\end{lemma}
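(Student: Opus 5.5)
The plan is to pass from the tangent vector $\xi$ to its latent coordinates $(A,B)$, rewrite both the objective and the constraint in those coordinates, and then use the product viewpoint~\eqref{eqn:whitened_lmo_product} so that the LMO splits into two independent matrix problems. Following the discussion preceding the lemma, I read the constraint as the $\ell_\infty$-product norm over these latent blocks, $\max(\|A\|_2,\|B\|_2)\le\tau$. This is a modelling choice and not a consequence of the constraint $\|\xi\|_2\le\tau$: the two coincide only when one block vanishes, and in general $\|[A;B]\|_2\ge\max(\|A\|_2,\|B\|_2)$.

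\textbf{Step 1 (coordinates and objective).} Fix an orthonormal completion $X_\perp$. Every $\xi\in T_X\text{St}$ is written uniquely as $\xi=XA+X_\perp B$ with $A\in\text{Skew}(r)$ and $B\in\RR^{(m-r)\times r}$. Since $G_X=I$, the Riemannian gradient is the orthogonal projection of $\nabla_X f$ onto $T_X\text{St}$. Hence for tangent $\xi$ we have $g_X(\xi,\text{grad}\,f)=\inner{\xi}{\nabla_X f}$, which expands as
\begin{equation*}
\inner{\xi}{\nabla_X f}=\inner{A}{X^\top\nabla_X f}+\inner{B}{X_\perp^\top\nabla_X f}.
\end{equation*}
Because $A$ is skew, it is orthogonal to symmetric matrices, so the first term equals $\inner{A}{\Skew(X^\top\nabla_X f)}$.

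\textbf{Step 2 (norm and decoupling).} Since $[X,X_\perp]$ is orthogonal, $\|\xi\|_2=\|[A;B]\|_2$. Replacing this by the product norm, the LMO splits into two independent problems:
\begin{equation*}
\max_{A\in\text{Skew}(r),\,\|A\|_2\le\tau}\inner{A}{S}\quad\text{and}\quad\max_{\|B\|_2\le\tau}\inner{B}{X_\perp^\top\nabla_X f},
\end{equation*}
where $S=\Skew(X^\top\nabla_X f)$. The $B$-block is an unconstrained Euclidean spectral LMO. By \Cref{prop:tangent_compat} with $G=I$ (the whole matrix space is trivially SV invariant), or directly by~\eqref{eqn:muon_lmo}, its solution is $\tau\Ortho(X_\perp^\top\nabla_X f)$.

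\textbf{Step 3 (the skew block, the main obstacle).} The difficulty is that $\text{Skew}(r)$ is not SV invariant, so \Cref{prop:tangent_compat} cannot be applied directly. I would argue by relaxation instead. Drop the skew constraint; the relaxed maximizer over all $r\times r$ matrices is $\tau\Ortho(S)$ by~\eqref{eqn:muon_lmo}. It then suffices to show that $\Ortho(S)$ is itself skew, since a maximizer of the relaxation that is feasible for the original problem is optimal there.

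To prove this, note that $S$ is normal and commutes with $S^\top S=-S^2$. Therefore the polar factor $\Ortho(S)=S\,(S^\top S)^{+1/2}$, defined via the compact SVD when $S$ is singular, is a product of commuting matrices with $S$ skew and $(S^\top S)^{+1/2}$ symmetric. Such a product is skew. Equivalently, one can argue from the real Schur form of $S$, whose $2\times2$ blocks $\begin{pmatrix}0&s\\-s&0\end{pmatrix}$ have polar factors $\begin{pmatrix}0&\sign s\\-\sign s&0\end{pmatrix}$.

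\textbf{Step 4 (assembly).} Mapping the two block solutions back through $\xi=XA+X_\perp B$ gives the stated $\xi^*$. I would finish by remarking that the result does not depend on the choice of $X_\perp$. Changing $X_\perp\mapsto X_\perp W$ with $W$ orthogonal transforms $\Ortho(W^\top X_\perp^\top\nabla_X f)=W^\top\Ortho(X_\perp^\top\nabla_X f)$, so the product $X_\perp\Ortho(X_\perp^\top\nabla_X f)$ is unchanged.
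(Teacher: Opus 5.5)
Your proof is correct, and its skeleton matches the paper's. Both use the coordinates $\xi=XA+X_\perp B$ and unitary invariance to get $\|\xi\|_2=\|[A;B]\|_2$. Both adopt the $\ell_\infty$-product relaxation $\max(\|A\|_2,\|B\|_2)\le\tau$; the paper's comparison appendix concedes this is a relaxation. Both solve the normal block as a full-space Muon LMO.

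The difference is the skew block. The paper invokes \Cref{lem:skew_compat}: for any unitarily invariant $\varphi$ and any unconstrained maximizer $Z^*$, the skew part $\tfrac12(Z^*-(Z^*)^\top)$ has the same objective value (since $S^\top=-S$) and no larger norm (triangle inequality plus $\varphi(M^\top)=\varphi(M)$). It then only asserts that $\Ortho(S)$ is already skew, and its subsequent remark says this fails for odd $r$, where the projection is needed. You instead show directly that $\Ortho(S)=S(S^\top S)^{+1/2}$ is a product of commuting skew and symmetric matrices, hence skew. So the relaxed maximizer is feasible and therefore optimal.

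Your route verifies the displayed formula verbatim for every $r$. Under the compact-SVD convention you fix (which is also what Newton--Schulz computes), the null direction of $S$ for odd $r$ is sent to zero. This removes the tension between the paper's Step~3 and its remark. It also avoids the ``paired'' SV invariance the appendix claims for $\text{Skew}(r)$, which, as you correctly observe, is not \Cref{def:sv_invariant}.

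The paper's averaging lemma buys generality: it is norm-agnostic and needs no structure of the maximizer. That matters, e.g., for the nuclear norm. There the natural maximizer $u_1v_1^\top$ has rank one and so can never be skew, and one must pass to the different maximizer that the averaging produces.

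A minor aside: your claim that the joint and product norms coincide only when a block vanishes is not literally right. They also agree when, e.g., $A$ and $B$ have orthogonal row spaces. Nothing in your argument uses this, so it does not affect the proof.
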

The proof is given in \Cref{app:proof_stiefel}. Recently, \citep{yang2026mcsd} studied LMO-based updates in the context of optimization over the Stiefel manifold.  However, they sidestep the coupled LMO problem by solving the LMO in the ambient space and projecting the result onto the Stiefel manifold. Thus, the update direction need not lie in $T_x \M$. 

\textbf{Grassmann manifold.} The Grassmann manifold $\text{Gr}(m,r)$ is the set of $r$-dimensional linear subspaces of $\RR^m$, represented by an orthonormal basis $X\in\RR^{m\times r}$ with $X^\top X=I_r$ up to right multiplication by $O(r)$, i.e., $\text{Gr}(m,r)=\text{St}(m,r)/O(r)$ \citep{edelman1998geometry,boumal2023intromanifolds}. However, unlike the Stiefel setting, the tangent space (horizontal space) $T_X \text{Gr}=\{\xi:X^\top\xi=0\}$ is SV-invariant. Using \Cref{prop:tangent_compat}, we get the optimal solution of our intrinsic LMO (\ref{eqn:main_lmo}) for $\varphi=\|\cdot\|_2$ as $\xi^*=\tau\,\Ortho((I-XX^\top)\nabla_X f)$.

\textbf{Overall.}  A salient feature of our above analysis across various manifolds is that computing the optimal solution of the proposed problem (\ref{eqn:main_lmo}) does not require the computation of the Riemannian gradient. Our closed-form optimal expressions rely only on Euclidean gradients, which are available via back-propagation. Please refer to \Cref{app:implementation} for proofs and additional results. 

\begin{algorithm}[t]
\caption{Proposed Intrinsic Muon (iMuon) algorithm}
\label{alg:lmo}
\begin{algorithmic}[1]
\REQUIRE Initial point $x_0\in\M$, step size $\eta>0$, norm bound $\tau>0$, norm $\varphi$, iterations $T$
\FOR{$t=0,1,\ldots,T-1$}
    \STATE Compute Euclidean gradient $\nabla f(x_t)$
    \STATE Obtain $\xi_t^*$ by solving intrinsic LMO (\ref{eqn:main_lmo}) using $\nabla f(x_t)$ (refer to Section \ref{sec:solutions} for specific cases)
    \STATE Retract: $x_{t+1} = R_{x_t}(-\eta\,\xi_t^*)$
\ENDFOR
\RETURN $x_T$
\end{algorithmic}
\end{algorithm}

\section{Convergence}\label{sec:convergence}

In this section, we discuss the convergence analysis of our algorithm {Intrinsic Muon} (\textbf{iMuon}), \Cref{alg:lmo} in both deterministic and stochastic settings. 
Our convergence analysis is parametrized by a  geometric quantity $C_\varphi$, which captures the \textit{size} of the constraint set in (\ref{eqn:main_lmo}), defined as follows. 
\begin{definition}\label{def:effdim}
The squared Riemannian radius of the $\varphi$-ball on $T_x\M$ is
$C_\varphi(x)\coloneqq\max\{\|\xi\|_x^2: \xi\in T_x\M, \varphi(G_x^{1/2}\xi)\leq 1\}$. Also define $C_{\varphi} \coloneqq \sup_{x \in \M} C_{\varphi}(x)$. 
\end{definition}
\noindent For $\varphi=\|\cdot\|_2$: $C_\varphi=2r$ on $\M_r$, $n$ on $S_{++}^n$, $2\lfloor r/2\rfloor+\min(m{-}r,r)$ on $\text{St}(m,r)$, $\min(m{-}r,r)$ on $\text{Gr}(m,r)$. For $\varphi\in\{\|\cdot\|_F,\|\cdot\|_*\}$, $C_\varphi=O(1)$ on all manifolds.
On all four manifolds, $C_\varphi$ depends only on the manifold dimensions, not on the iterates.
\begin{assumption}[Retraction-$L$-smoothness~{\citep[Assumption 4.3]{boumal2023intromanifolds}}]\label{assum:smooth}
There exist constants $L>0$ and $\overline{\eta}>0$ such that for all $x\in\M$, all $\xi\in T_x\M$, and all $\eta\in(0,\overline{\eta}]$, $f(R_x(-\eta\xi))\le f(x)-\eta\,g_x(\xi,\text{grad}\,f)+\frac{L\eta^2}{2}\|\xi\|_{x}^2$.
\end{assumption}




The next theorem shows that our algorithm iMuon converges at rate $O(1/\sqrt{T})$ with the constant determined by $C_{\varphi}$.

\begin{theorem}[Deterministic iMuon]\label{thm:general}
Under \Cref{assum:smooth}, with constant step size $\eta=c/(\tau \sqrt{T})$ where $c=\sqrt{2\Delta_0/(LC_\varphi)}$ (assuming $\eta\le\overline{\eta}$), our Algorithm~\ref{alg:lmo} achieves 
$\min_{t\le T}\;\varphi^\circ(H_t) \;\le\; \sqrt{{2LC_\varphi\Delta_0}/{T}}$, 
where $\Delta_0=f(x_0)-f^*$, $\varphi^\circ$ is the dual norm of $\varphi$, and $H_t=G_{x_t}^{1/2}(\text{grad}\,f(x_t))$. 
With a horizon-free decaying schedule defined as $\eta_t=c/(\tau\sqrt{t+1})$, Algorithm~\ref{alg:lmo} achieves: $\min_{t\le T}\varphi^\circ(H_t) \le \frac{\Delta_0(2+\ln T)}{2c(\sqrt{T}-1)}$.
\end{theorem}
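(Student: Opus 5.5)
The plan is the standard descent-lemma argument for LMO-based methods, run in the scaled coordinates $Z=G_x^{1/2}\xi$. Fix an iteration $t$ and write $\xi_t^*$ for the maximizer of \eqref{eqn:main_lmo} at $x_t$. I would first record two facts about $\xi_t^*$.

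First, the linear term equals the dual norm. With $Z=G_{x_t}^{1/2}\xi$ we have $g_{x_t}(\xi,\mathrm{grad}\,f)=\langle Z,H_t\rangle$. Maximizing over $\varphi(Z)\le\tau$ with $Z\in\mathcal{Z}_{x_t}$ gives $g_{x_t}(\xi_t^*,\mathrm{grad}\,f(x_t))=\tau\varphi^\circ(H_t)$. Here $\varphi^\circ$ is read as the dual norm relative to the subspace $\mathcal{Z}_{x_t}$. When \Cref{prop:tangent_compat} applies, or in the product/Stiefel reduction, this coincides with the ambient dual norm. Since $H_t\in\mathcal{Z}_{x_t}$, the subspace-restricted supremum is the natural quantity to use in general.

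Second, the quadratic term is controlled by $C_\varphi$. Because $\xi_t^*/\tau$ is feasible for \Cref{def:effdim}, homogeneity gives $\|\xi_t^*\|_{x_t}^2\le\tau^2C_\varphi(x_t)\le\tau^2C_\varphi$.

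Substituting both facts into \Cref{assum:smooth} with step $\eta_t\le\overline\eta$ gives the per-step descent
\[
f(x_{t+1})\le f(x_t)-\eta_t\tau\,\varphi^\circ(H_t)+\tfrac{L}{2}\eta_t^2\tau^2C_\varphi .
\]

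For the constant step, I would telescope from $t=0$ to $T-1$ and use $f(x_T)\ge f^*$. This gives $\eta\tau\sum_t\varphi^\circ(H_t)\le\Delta_0+\tfrac{L}{2}T\eta^2\tau^2C_\varphi$. Dividing by $T\eta\tau$ and bounding the minimum by the average yields $\min_t\varphi^\circ(H_t)\le \Delta_0/(T\eta\tau)+L\eta\tau C_\varphi/2$. Setting $\eta\tau=c/\sqrt T$ with $c=\sqrt{2\Delta_0/(LC_\varphi)}$ balances the two terms. Each term then equals $\sqrt{LC_\varphi\Delta_0/(2T)}$, and their sum is $\sqrt{2LC_\varphi\Delta_0/T}$. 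The index range (whether $t\le T$ or $t\le T-1$) is harmless, since taking the minimum over more indices only decreases it.

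For the decaying schedule $\eta_t\tau=c/\sqrt{t+1}$, telescoping gives
\[
\min_t\varphi^\circ(H_t)\sum_t \frac{c}{\sqrt{t+1}}\le \Delta_0+\frac{LC_\varphi c^2}{2}\sum_t\frac1{t+1}.
\]
The choice of $c$ gives $LC_\varphi c^2/2=\Delta_0$, so the right-hand side is $\Delta_0(1+H_T)\le\Delta_0(2+\ln T)$ via the harmonic-sum bound $H_T\le1+\ln T$. On the left, the sum is at least $\int_1^{T+1}x^{-1/2}\,dx\ge 2(\sqrt T-1)$. Dividing gives the stated bound $\Delta_0(2+\ln T)/(2c(\sqrt T-1))$. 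If the exact constants in the statement do not come out, a one-index shift in the sums is the first thing to adjust.

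I expect the main obstacle to be the first fact, namely showing that the optimal value of \eqref{eqn:main_lmo} is exactly $\tau\varphi^\circ(H_t)$ rather than just a lower bound on it. This is where the structural results enter. Under SV invariance, \Cref{prop:tangent_compat} shows that the subspace constraint is inactive, so the maximum equals the full matrix-dual-norm value. For product structures and the Stiefel case, the block decomposition of \eqref{eqn:whitened_lmo_product} plays the same role. To keep the theorem fully general, the fallback is to interpret $\varphi^\circ$ as the dual norm restricted to $\mathcal{Z}_{x_t}$, which makes the identity hold by definition.
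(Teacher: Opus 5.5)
Your proposal is correct and follows the paper's proof essentially step for step. Your two facts are exactly parts (i) and (ii) of \Cref{lem:lmo_dual}, with the dual-norm identity resting, as in the paper, on SV invariance via \Cref{prop:tangent_compat}; the telescoping, the step-size balancing, and the bounds $\sum_{t<T}1/(t+1)\le 1+\ln T$ and $\sum_{t<T}1/\sqrt{t+1}\ge 2(\sqrt T-1)$ then reproduce the stated constants exactly, so no index shift is needed (and bounding $\|\xi_t^*\|_{x_t}^2$ by feasibility of $\xi_t^*/\tau$ avoids the paper's unnecessary claim that the norm constraint is tight).
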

The proof is provided in  \Cref{app:deterministic_proof}. We note that the convergence criterion $\varphi^\circ(H_t)\to 0$ implies $\|\text{grad}\,f(x_t)\|_{x_t}\to 0$ (first-order stationarity) as $H_t=G_{x_t}^{1/2}(\text{grad}\,f(x_t))$ and $G_{x_t}^{1/2}\succ 0$.
For $\varphi=\|\cdot\|_F$, the rate recovers standard Riemannian gradient descent with $C_\varphi=1$.
Our key technical ingredient is the uniform bound $\|\xi_t^*\|_{{x_t}}^2\le C_\varphi\tau^2$ (\Cref{lem:lmo_dual}). We also note manifold specific convergence rates below for constant step size setting. 

\begin{remark}[Fixed-rank]\label{cor:lowrank}
On $\M_r$ with $\varphi=\|\cdot\|_2$, our rate is $O(\sqrt{Lr/T})$, depending only on the rank $r$ and independent of the factor norms $\|B\|,\|A\|$. 
Existing works propose extrinsic solutions for convergence: \citep{kang2026spectral} establish, in continuous-time setting, that trajectories will strictly either converge or diverge depending on whether they remain bounded (unconditional convergence under $\ell_2$ regularization). \citep{janson2026stabilizing}, on the other hand, apply iterate dependent spectral renormalization at runtime. 
\end{remark}

\begin{remark}[SPD, Stiefel, Grassmann]\label{cor:others} Our convergence rates are $O(\sqrt{Ln/T})$ on $S_{++}^n$, 
$O(\sqrt{L(r+\min(m{-}r,r))/T})$ on $\text{St}(m,r)$, and 
$O(\sqrt{L\min(m{-}r,r)/T})$ on $\text{Gr}(m,r)$.
\end{remark}
%
%
In the stochastic setting, stochastic gradients ($\widetilde{\nabla f}$) are available and we appropriately employ the Riemannian stochastic gradient ($\widetilde{\text{grad}}\,f$) in the proposed LMO~\eqref{eqn:main_lmo} to obtain the update direction  $\tilde{\xi}_t^*$.

\begin{assumption}\label{assum:noise} There exists $\sigma_\varphi > 0$ such that the stochastic scaled gradient $\tilde{H}_t:=G_{x_t}^{1/2}(\widetilde{\text{grad}}\,f_t)$ satisfies
$\mathbb{E}[\varphi^\circ(\tilde{H}_t - H_t)^2 \,|\, x_0,\ldots,x_t] \le \sigma_\varphi^2$,
where $H_t:=G_{x_t}^{1/2}(\text{grad}\,f_t)$ is the true scaled gradient.
\end{assumption}
For $\varphi=\|\cdot\|_F$, this reduces to the standard Riemannian bounded-variance condition~\citep{boumal2023intromanifolds,mokhtari2020stochastic}.

\begin{theorem}[Stochastic iMuon]\label{thm:stochastic}
Under Assumptions~\ref{assum:smooth}~\&~\ref{assum:noise}, with step size $\eta_t=c/(\tau\sqrt{t+1})$ where $c=\sqrt{2\Delta_0/(LC_\varphi)}$ (assuming $\eta_t\le\overline{\eta}$), our Algorithm~\ref{alg:lmo} achieves $\min_{t\le T}\;\mathbb{E}[\varphi^\circ(H_t)]\;\le\; \sqrt{2LC_\varphi\Delta_0(1+\ln T)/T} + 2\sigma_\varphi$, 
recovering \Cref{thm:general} (up to the $\ln T$ factor from the decaying schedule) when $\sigma_\varphi=0$.
\end{theorem}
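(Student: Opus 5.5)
The plan is the standard LMO descent argument, combined with a bias term controlled by norm duality. At step $t$, the stochastic update $\tilde\xi_t^*$ solves \eqref{eqn:main_lmo} with $\widetilde{\text{grad}}\,f$ in place of the true gradient. Write $\tilde Z_t=G_{x_t}^{1/2}\tilde\xi_t^*$, so that $\varphi(\tilde Z_t)\le\tau$ and, by \Cref{prop:tangent_compat} (or the product/Stiefel variants), $\langle \tilde Z_t,\tilde H_t\rangle=\tau\varphi^\circ(\tilde H_t)$.

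The argument proceeds in four steps.

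\textbf{Step 1 (descent).} Apply \Cref{assum:smooth} with $\xi=\tilde\xi_t^*$ and $\eta=\eta_t$:
\[
f(x_{t+1})\le f(x_t)-\eta_t\langle \tilde Z_t,H_t\rangle+\tfrac{L\eta_t^2}{2}\|\tilde\xi_t^*\|_{x_t}^2 .
\]
Here I use $g_x(\xi,\text{grad} f)=\langle G^{1/2}\xi,G^{1/2}\text{grad} f\rangle$. The uniform bound $\|\tilde\xi_t^*\|_{x_t}^2\le C_\varphi\tau^2$ (the key ingredient cited as \Cref{lem:lmo_dual}) follows directly from \Cref{def:effdim} after rescaling by $\tau$. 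Hence the quadratic term is at most $L\eta_t^2C_\varphi\tau^2/2$.

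\textbf{Step 2 (bias from noise).} Split the inner product as
\[
\langle\tilde Z_t,H_t\rangle=\langle\tilde Z_t,\tilde H_t\rangle-\langle\tilde Z_t,\tilde H_t-H_t\rangle .
\]
The first term equals $\tau\varphi^\circ(\tilde H_t)\ge\tau\varphi^\circ(H_t)-\tau\varphi^\circ(\tilde H_t-H_t)$ by the triangle inequality. The second term is at most $\tau\varphi^\circ(\tilde H_t-H_t)$ by the generalized Cauchy--Schwarz inequality $\langle Z,W\rangle\le\varphi(Z)\varphi^\circ(W)$. Combining,
\[
\langle\tilde Z_t,H_t\rangle\ge\tau\varphi^\circ(H_t)-2\tau\varphi^\circ(\tilde H_t-H_t).
\]
Taking conditional expectations and using Jensen together with \Cref{assum:noise} gives $\mathbb{E}[\varphi^\circ(\tilde H_t-H_t)]\le\sigma_\varphi$. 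This is where the factor $2\sigma_\varphi$ comes from. No unbiasedness is needed, which is convenient because $\tilde Z_t$ is correlated with the noise.

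\textbf{Step 3 (telescoping).} Sum over $t=0,\dots,T-1$, take total expectations, and use $f\ge f^*$:
\[
\tau\sum_t\eta_t\,\mathbb{E}[\varphi^\circ(H_t)]\le\Delta_0+2\tau\sigma_\varphi\sum_t\eta_t+\tfrac{LC_\varphi\tau^2}{2}\sum_t\eta_t^2 .
\]
Then lower-bound the left-hand side by $\min_t\mathbb{E}[\varphi^\circ(H_t)]\cdot\tau\sum_t\eta_t$ and divide. The noise term contributes exactly $2\sigma_\varphi$, independent of the schedule.

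\textbf{Step 4 (schedule arithmetic).} With $\eta_t=c/(\tau\sqrt{t+1})$ the remaining term is
\[
\frac{\Delta_0+\tfrac{LC_\varphi c^2}{2}\sum_{t}\frac1{t+1}}{c\sum_t (t+1)^{-1/2}} .
\]
Use $\sum_{t<T}(t+1)^{-1}\le 1+\ln T$, $\sum_{t<T}(t+1)^{-1/2}\ge\sqrt T$, and $LC_\varphi c^2/2=\Delta_0$ to get $\Delta_0(2+\ln T)/(c\sqrt T)$. Substituting $c=\sqrt{2\Delta_0/(LC_\varphi)}$ gives the order $\sqrt{2LC_\varphi\Delta_0}\,(2+\ln T)/(2\sqrt T)$.

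I expect this last constant matching to be the main obstacle. Depending on how the bound is reduced, the crude estimate carries $(1+\ln T)$ outside the square root rather than $\sqrt{1+\ln T}$. To reach the stated form $\sqrt{2LC_\varphi\Delta_0(1+\ln T)/T}$, I would first try optimizing a constant relative to the $\ln$-weighted sum, or use the sharper bound $\sum(t+1)^{-1/2}\ge 2(\sqrt{T+1}-1)$ and absorb constants. If neither works, the fallback is to state the result up to that logarithmic form; the argument otherwise mirrors \Cref{thm:general}, which is recovered when $\sigma_\varphi=0$.
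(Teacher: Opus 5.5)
Your Steps 1--3 follow the paper's proof essentially verbatim. The ingredients are the same: the descent inequality with $\|\tilde\xi_t^*\|_{x_t}^2\le C_\varphi\tau^2$; the split of $\langle\tilde Z_t,H_t\rangle$ handled by H\"older and the reverse triangle inequality, which gives the factor $2$; Jensen with \Cref{assum:noise}; and telescoping followed by division by $\sum_t\eta_t$, so the noise contributes exactly $2\sigma_\varphi$. The only cosmetic difference is that you use $\sum_{t<T}(t+1)^{-1/2}\ge\sqrt T$ directly, whereas the paper uses $2(\sqrt T-1)\ge\sqrt T$ for $T\ge 4$.

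The obstacle you flag in Step 4 is a genuine gap, and the paper's proof does not get past it either. At the prescribed $c=\sqrt{2\Delta_0/(LC_\varphi)}$, the paper's final display \eqref{eqn:stoch_final} evaluates to $\frac{1}{\sqrt T}\bigl(\frac{\Delta_0}{c}+\frac{LcC_\varphi(1+\ln T)}{2}\bigr)+2\sigma_\varphi=\sqrt{LC_\varphi\Delta_0/(2T)}\,(2+\ln T)+2\sigma_\varphi$. This is exactly your bound. The paper's closing claim that this is $O(\sqrt{LC_\varphi\Delta_0\ln T/T})$ is an arithmetic slip.

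Your second repair cannot work. A sharper lower bound such as $\sum_t(t+1)^{-1/2}\ge 2(\sqrt{T+1}-1)$ improves only a constant, by at most about a factor $2$. The mismatch, however, is $\ln T$ versus $\sqrt{\ln T}$. Indeed $(2+\ln T)^2/2-2(1+\ln T)=(\ln T)^2/2>0$, so your bound exceeds the stated one for every $T>1$. Any constant-factor sharpening only pushes the crossover to larger $T$.

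Your first repair is the right idea, but it changes the statement rather than proving it. Take the horizon-dependent choice $c=\sqrt{2\Delta_0/(LC_\varphi(1+\ln T))}$. Then $\tfrac{LC_\varphi c^2}{2}(1+\ln T)=\Delta_0$, the numerator in your Step 4 becomes $2\Delta_0$, and $2\Delta_0/(c\sqrt T)=\sqrt{2LC_\varphi\Delta_0(1+\ln T)/T}$. That is exactly the displayed rate, valid for all $T\ge 1$.

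So the honest conclusion is one of two options:
\begin{itemize}
\item keep the horizon-free $c$ of the statement and prove $\min_{t\le T}\mathbb{E}[\varphi^\circ(H_t)]\le\sqrt{LC_\varphi\Delta_0/(2T)}\,(2+\ln T)+2\sigma_\varphi$; or
\item adopt the $T$-dependent $c$ above and obtain the stated rate.
\end{itemize}
As written, the theorem combines that particular $c$ with that particular rate, and this combination is established neither by your argument nor by the paper's.
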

The proof is provided in  \Cref{app:stochastic_proof}. The additive $2\sigma_\varphi$ arises from the lack of variance reduction in the gradient estimator and can be removed with gradient averaging or momentum techniques~\citep{mokhtari2020stochastic,shen2019complexities}, which we leave to future work.


Several prior results arise as instances of our framework (see \Cref{app:connections} for details): Euclidean spectral steepest descent~\citep{bernstein2025old} ($G_x=I$, $C_\varphi=\min(p,q)$), standard Riemannian GD~\citep{boumal2023intromanifolds} ($\varphi=\|\cdot\|_F$, $C_\varphi=1$), factor-wise Muon~\citep{kang2026spectral} (factor-norm dependence replaced by $C_\varphi=2r$), the rank-$1$ special case of NuMuon~\citep{dolatabadi2026numuon} ($\varphi=\|\cdot\|_*$, $C_\varphi=1$), and general Euclidean LMOs~\citep{pethick2025lmo} ($G_x=I$).

\section{Experiments}\label{sec:experiments}
\paragraph{\textbf{Code availability.}}
Code for reproducing the experiments is available at
\url{https://github.com/1bang118/manifold-intrinsic-muon}.

We evaluate the proposed intrinsic LMO on two manifolds:
(i) the fixed-rank manifold $\M_r$ via LoRA fine-tuning of GPT-2 Medium on E2E NLG and Mistral-7B on GLUE and (ii) the SPD manifold $S_{++}^n$ via SPD covariance classification on CIFAR-100. In fine-tuning experiments, we compare our proposed {iMuon} with factor-wise Euclidean Muon~\citep{jordan2024muon,bernstein2025old} (as in~\eqref{eqn:factorwise_muon}), SGD (factor-wise with Frobenius norm), Scaled GD \citep{zhang24riemannianlora} (preconditioned), and Riemannion~\citep{bogachev2026riemannion}. In the SPD setting, we include the Frobenius and nuclear instances of our framework (RGD, iMuon-Nu) and their Euclidean counterparts (EGD, NuMuon).
Full details and additional experiments including on Stiefel and Grassmann manifolds are deferred to \Cref{app:experimental_details,app:fixed_rank_experiments,app:spd_experiments,app:grassmann_experiments,app:stiefel_cifar_subcenter}.

\paragraph{LoRA fine-tuning ($\M_r$).}
We fine-tune GPT-2 Medium on E2E NLG~\citep{novikova2017e2e} with LoRA rank $r=4$, and 4-bit quantized Mistral-7B on GLUE~\citep{wang2019glue} with LoRA rank $r=16$. We follow the LoRA setup of~\citep{hu2022lora}, tuning the learning rate per method via grid search.

\Cref{tab:exp_e2e_main} reports the five standard E2E generation metrics, and \Cref{tab:exp_glue_main} reports the 9 GLUE per-task scores together with the 9-task average, both \emph{without optimizer momentum}: this isolates the LMO direction itself, since momentum strategies differ across methods and compound gradient information across iterations. Our \textit{with-momentum} experiments are discussed in \Cref{app:experimental_details}.

On E2E NLG benchmark, iMuon attains the best score on every reported metric, improving by $0.72$ BLEU over both factor-wise Muon and Riemannion, with a tie on METEOR task.
Our advantage on the GLUE benchmark is more pronounced: iMuon achieves $86.25$ average vs.\ $81.68$ for factor-wise Muon (+$4.57$ points), with consistent gains on the harder, lower-resource tasks (MRPC, CoLA, RTE, WNLI).
This is consistent with the discussion in \Cref{cor:lowrank}: factor-wise Muon's update magnitude scales with $\|A\|_2+\|B\|_2$ and is sensitive to the factor representative, whereas iMuon's intrinsic norm absorbs the factor scale algebraically, producing a $\mathrm{GL}(r)$-invariant update with rank-only norm bound.

\begin{table}[t]
\centering
\caption{E2E NLG test-set results (GPT-2 Medium, LoRA $r=4$, no optimizer momentum). iMuon consistently performs better (best is highlighted). 
}
\label{tab:exp_e2e_main}
\small
\begin{tabular}{lccccc}
\toprule
Method & BLEU & NIST & METEOR & ROUGE-L & CIDEr \\
\midrule
SGD                                                  & 66.60 & 8.54 & 44.20 & 68.20 & 2.32 \\
Scaled GD~\citep{zhang24riemannianlora}              & 69.20 & 8.71 & 46.30 & 70.90 & 2.48 \\
Factor-wise Muon~\citep{kang2026spectral, janson2026stabilizing}                         & 70.02 & 8.81 & 46.77 & 71.68 & 2.53 \\
Riemannion~\citep{bogachev2026riemannion}            & 70.02 & 8.78 & \textbf{46.79} & 71.99 & 2.52 \\
\textbf{iMuon (ours)}                                & \textbf{70.74} & \textbf{8.88} & \textbf{46.79} & \textbf{72.14} & \textbf{2.54} \\
\bottomrule
\end{tabular}
\end{table}

\begin{table}[t]
\centering
\caption{GLUE results with Mistral-7B using 4-bit NF4 quantization and LoRA rank $r=16$, without optimizer momentum. Best shown in bold. 
}
\label{tab:exp_glue_main}
\small
\setlength{\tabcolsep}{4pt}
\resizebox{\textwidth}{!}{%
\begin{tabular}{lccccccccc|c}
\toprule
Method & MNLI & SST-2 & MRPC & CoLA & QNLI & QQP & RTE & STS-B & WNLI & Avg. \\
\midrule
SGD
& 88.15 & 96.10 & 70.10 & 55.89 & 94.22 & 88.59 & 50.90 & 47.64 & 49.30 & 71.21 \\
Scaled GD~\citep{zhang24riemannianlora}
& 90.21 & 96.90 & 81.62 & 68.17 & 94.40 & \textbf{91.15} & 54.15 & \textbf{90.31} & 56.34 & 80.36 \\
Factor-wise Muon~\citep{kang2026spectral, janson2026stabilizing}
& 89.77 & \textbf{97.25} & 82.35 & 69.32 & 94.47 & 88.42 & 84.12 & 84.39 & 45.07 & 81.68 \\
Riemannion (SGD)~\citep{bogachev2026riemannion}
& \textbf{91.55} & 96.90 & 82.60 & 68.90 & \textbf{94.93} & 87.77 & 82.67 & 77.99 & 60.56 & 82.65 \\
\textbf{iMuon (ours)}
& 89.29 & 97.02 & \textbf{85.54} & \textbf{70.21} & 94.60 & 87.86 & \textbf{88.09} & 88.99 & \textbf{74.65} & \textbf{86.25} \\
\bottomrule
\end{tabular}%
}
\end{table}

\paragraph{SPD prototype classification ($S_{++}^n$).}
Covariance descriptors are a long-standing SPD representation for image and video recognition, where second-order statistics of feature maps encode appearance and texture cues~\citep{tuzel2006region,tuzel2008pedestrian,huang2017riemannian}.
We use a Riemannian analogue of nearest-class-mean classification: keep the backbone frozen and learn one SPD prototype $X_c\in S_{++}^{32}$ per class for $20$-way coarse-label classification on CIFAR-100~\citep{krizhevsky2009learning}, using shrinkage covariance descriptors~\citep{ledoit2004well} computed from frozen ResNet-18 \texttt{layer3} features~\citep{he2016deep} and the affine-invariant Riemannian metric on $S_{++}^{32}$~\citep{pennec2006riemannian,bhatia2009positive}.
The training objective is a softmax classifier with logits $-\beta\,d_{\mathrm{AI}}(C_i,X_c)^2$, regularized toward the log-Euclidean~\citep{arsigny2007geometric} class-mean initialization (\Cref{app:spd_experiments}). We compare all three norm instances of our framework with their Euclidean counterparts: Frobenius (RGD vs.\ EGD), spectral (iMuon vs.\ Muon), and nuclear (iMuon-Nu vs.\ NuMuon, where NuMuon denotes the pure nuclear-norm LMO from~\citep{dolatabadi2026numuon}, i.e., the $\varphi=\|\cdot\|_*$ instance of~\eqref{eqn:muon_lmo}).

\Cref{fig:exp_spd_trajectories} shows test-accuracy trajectories across training for each of the three Euclidean/intrinsic pairs.
Two trends are consistent across the panels.
First, replacing each Euclidean LMO by its intrinsic counterpart improves test accuracy in every pair: RGD $0.548\pm 0.002$ vs.\ EGD $0.523\pm 0.026$ (Frobenius), iMuon $0.475\pm 0.009$ vs.\ Muon $0.407\pm 0.016$ (spectral), and iMuon-Nu $0.546\pm 0.000$ vs.\ NuMuon $0.422\pm 0.032$ (nuclear), where each entry is the mean test accuracy over $3$ seeds at the validation-selected learning rate.
For the Frobenius pair this recovers the standard benefit of the affine-invariant (AI) Riemannian gradient on $S_{++}^n$~\citep{pennec2006riemannian,bhatia2009positive}. The spectral and nuclear pairs extend the same metric-aware construction to LMO directions. Second, the improvement grows as the norm becomes more aggressive: moderate for the Frobenius pair, larger for the spectral pair, and largest for the nuclear pair. We attribute this to the fact that the rank-$1$ Euclidean update interacts most strongly with the curvature of $S_{++}^n$ and therefore benefits most from the metric scaling $X^{1/2}(\cdot)X^{1/2}$ in our construction.
The trajectories also show that the intrinsic methods open these gaps early and maintain them throughout training rather than relying on late-stage convergence behavior.

\begin{figure}[t]
    \centering
    \includegraphics[scale=0.5]{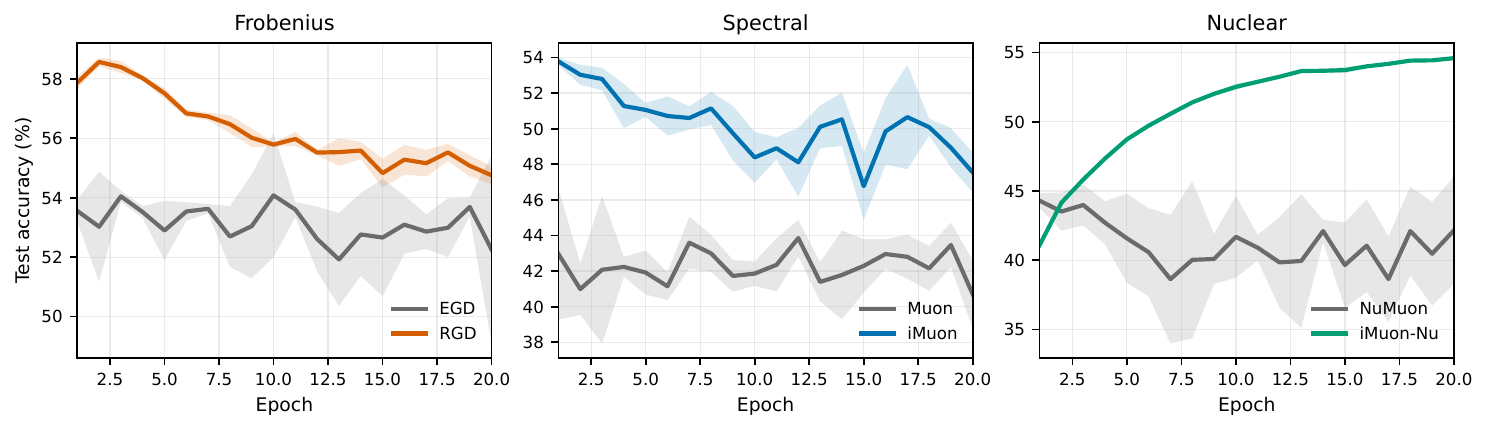}
    \caption{SPD classification on CIFAR-100 ($S_{++}^{32}$, $20$ coarse classes). Each panel pairs a Euclidean LMO with its intrinsic counterpart under a common norm: Frobenius (EGD vs.\ RGD), spectral (Muon vs.\ iMuon), and nuclear (NuMuon vs.\ iMuon-Nu). Curves show mean test accuracy with $\pm 1$ std bands over $3$ seeds at the validation-selected learning rate. The intrinsic method dominates in every pair, with the gap widening from Frobenius to spectral to nuclear.}
    \label{fig:exp_spd_trajectories}
\end{figure}

\section{Conclusion}

We introduced iMuon, a unified framework for unitarily invariant norm-constrained optimization on Riemannian matrix manifolds. By lifting the ambient norm through the Riemannian metric, iMuon resolves the symmetry breaking (O1) and tangent space and norm constraints coupling (O2) that obstruct prior manifold extensions of Muon, and admits closed-form LMO solutions on the fixed-rank, SPD, Stiefel, and Grassmann manifolds. The resulting convergence rate depends only on the manifold dimension. On the fixed-rank manifold this is independent of factor conditioning, removing the bounded-factor-norm assumption and runtime rescaling required by prior work. Empirically, iMuon performs favorably across LoRA fine-tuning and other learning problems on the fixed-rank, SPD, Stiefel, and Grassmann manifolds, with the benefit of the intrinsic geometry most pronounced when the underlying metric is non-trivial. Future work include momentum and variance-reduced variants, adaptive choice of $\varphi$, and extensions to product manifolds such as Tucker low-multilinear-rank manifold. Limitations are discussed in \Cref{app:limitations}.

\bibliographystyle{plainnat}
\bibliography{refs.bib}


\appendix
\begin{center}
    \Huge{Appendix}
\end{center}

\renewcommand{\contentsname}{Appendix Table of Contents}
\startcontents[appendix]
\printcontents[appendix]{}{1}{\setcounter{tocdepth}{2}}

\section{Broader Societal Impact}\label{app:broader_impact}

This paper presents foundational research on Riemannian optimization algorithms. The contribution is methodological and theoretical. The framework is not tied to a particular application or deployment, and our empirical evaluation uses standard publicly available benchmarks for empirical validations. 



\section{Limitations}\label{app:limitations}
Our framework requires a left-right metric form and singular value invariance of the scaled tangent space, which hold on the four matrix manifolds we treat but not in general (\Cref{app:other_manifolds}), and on product tangent spaces such as $\M_r$ and $\mathrm{St}(m,r)$ we further use an $\ell_\infty$-product norm relaxation of the joint norm.

\section{Closed-Form LMO Solutions via Norm Duality}\label{app:primal_dual}

We derive the closed-form solution of the intrinsic LMO for a general unitarily invariant norm $\varphi$.

\subsection{Reduction to a vector problem}

The key tool is von Neumann's trace inequality, which reduces the matrix LMO to a vector problem over singular values.

\begin{lemma}\label{lem:vonneumann}
Let $H=U\diag(\sigma_1,\ldots,\sigma_{\min(p,q)})V^\top$ with $\sigma_1\ge\cdots\ge\sigma_{\min(p,q)}\ge 0$.
Then $\max_{\varphi(Z)\le\tau}\inner{Z}{H} = \tau\,\varphi^\circ(\sigma)$,
where $\varphi^\circ$ is the dual norm and $\sigma=(\sigma_1,\ldots,\sigma_{\min(p,q)})$.
The maximizer has the form $Z^*=U\diag(z_1^*,\ldots,z_{\min(p,q)}^*)V^\top$, where $z^*$ solves the vector problem $\max_{\varphi(z)\le\tau}\inner{z}{\sigma}$.
\end{lemma}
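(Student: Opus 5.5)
The plan is to combine von Neumann's trace inequality with the von Neumann characterization of unitarily invariant norms as symmetric gauge functions of the singular values. Write $\varphi(Z)=\phi(s(Z))$, where $s(Z)$ is the vector of singular values and $\phi$ is a symmetric gauge function on $\RR^{\min(p,q)}$; with the paper's abuse of notation, $\varphi(z)$ means $\phi(z)$. Two facts about $\phi$ will be used: it is invariant under permutations and sign changes, and its dual norm $\phi^\circ(y)=\max_{\phi(z)\le 1}\inner{z}{y}$ is again a symmetric gauge function. Moreover, the dual of $\varphi$ with respect to the trace inner product is $\varphi^\circ(H)=\phi^\circ(s(H))$; this is the standard duality result we will cite from \citep{horn2012matrix}, and it also follows from the argument below.

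\emph{Upper bound.} Take any feasible $Z$ with singular values $s(Z)=(s_1\ge\cdots)$. Von Neumann's trace inequality gives $\inner{Z}{H}=\tr(Z^\top H)\le\sum_i s_i(Z)\sigma_i$. Since $\phi(s(Z))=\varphi(Z)\le\tau$, the vector $s(Z)$ is feasible for the vector problem. Hence $\sum_i s_i\sigma_i\le\max_{\phi(z)\le\tau}\inner{z}{\sigma}=\tau\phi^\circ(\sigma)$, where the last equality uses positive homogeneity of the feasible set in $\tau$.

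\emph{Attainment.} Let $z^*$ be a maximizer of the vector problem; it exists because the feasible set is compact. We may take $z^*\ge 0$: replacing $z^*_i$ by $|z^*_i|$ keeps $\phi$ unchanged by sign invariance and does not decrease $\inner{z}{\sigma}$, since $\sigma\ge0$. Set $Z^*=U\diag(z^*)V^\top$, with the diagonal padded to $p\times q$. We then need $\varphi(Z^*)=\phi(z^*)\le\tau$. The singular values of $Z^*$ are the entries of $z^*$ up to reordering, so permutation invariance handles the fact that $z^*$ need not be sorted. Finally, $\inner{Z^*}{H}=\tr(V\diag(z^*)U^\top U\diag(\sigma)V^\top)=\sum_i z_i^*\sigma_i=\tau\phi^\circ(\sigma)$, so the upper bound is attained. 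This establishes both the value $\tau\varphi^\circ(\sigma)$ and the claimed form of the maximizer.

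The main obstacle is bookkeeping rather than depth. One issue is that $z^*$ may be unsorted or signed, which the symmetric gauge invariances resolve as above. The other is justifying the gauge-function representation and the von Neumann inequality, both of which are classical and will be cited. If the statement is read as saying every maximizer has this form, that is false when $H$ has repeated or zero singular values. We therefore interpret the statement as asserting that a maximizer of this form exists.
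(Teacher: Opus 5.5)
Your proof is correct and follows the same route as the paper's: von Neumann's trace inequality bounds $\inner{Z}{H}$ by the vector problem $\max_{\varphi(z)\le\tau}\inner{z}{\sigma}=\tau\,\varphi^\circ(\sigma)$, and $Z^*=U\diag(z^*)V^\top$ attains that bound. You are more careful than the paper's one-line argument in three places: you make $z^*$ nonnegative, you handle its ordering via permutation invariance, and you read the ``form of the maximizer'' claim as an existence statement, which is the right reading when $H$ has repeated or zero singular values.
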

\begin{proof}
By von Neumann's trace inequality~\citep[Thm.~7.4.1.1]{horn2012matrix}, $\inner{Z}{H}\le\sum_i\sigma_i(Z)\sigma_i(H)$ with equality when $Z$ shares the singular vectors of $H$; hence $Z^*=U\diag(z^*)V^\top$ for some non-negative $z^*$ with $\varphi(z^*)\le\tau$, reducing the problem to $\max_{\varphi(z)\le\tau}\inner{z}{\sigma}=\tau\,\varphi^\circ(\sigma)$ by definition of the dual norm.
\end{proof}

\subsection{Spectral, Frobenius, and nuclear norms}

The vector problem $\max_{\varphi(z)\le\tau}\inner{z}{\sigma}$ has closed-form solutions for the three standard norms:

\emph{Spectral norm} ($\varphi(z)=\|z\|_\infty=\max_i|z_i|$).
Each $z_i$ is independently bounded by $\tau$, and $\sigma_i\ge 0$, so $z_i^*=\tau$ for all $i$.
Thus $Z^*=\tau\,UV^\top=\tau\,\Ortho(H)$ and the optimal value is $\tau\sum_i\sigma_i=\tau\|H\|_*$.
The dual norm is $\varphi^\circ=\|\cdot\|_1$ on singular values, i.e., the nuclear norm.

\emph{Frobenius norm} ($\varphi(z)=\|z\|_2$).
By Cauchy--Schwarz, $z^*=\tau\,\sigma/\|\sigma\|_2$, giving $Z^*=\tau\,H/\|H\|_F$.
The optimal value is $\tau\|H\|_F$.
Self-dual: $\varphi^\circ=\|\cdot\|_F$.

\emph{Nuclear norm} ($\varphi(z)=\|z\|_1=\sum_i|z_i|$).
Under $\sum_i z_i\le\tau$ with $z_i\ge 0$, all of the norm bound goes to the largest $\sigma_i$: $z^*=\tau\,e_1$.
Thus $Z^*=\tau\,u_1v_1^\top$ and the optimal value is $\tau\sigma_1=\tau\|H\|_2$.
Dual: $\varphi^\circ=\|\cdot\|_\infty$, i.e., the spectral norm.

For a general unitarily invariant norm $\varphi$, the vector problem $\max_{\varphi(z)\le\tau}\inner{z}{\sigma}=\tau\,\varphi^\circ(\sigma)$ by duality; explicit maximizers beyond the three cases above amount to a finite-dimensional symmetric-gauge convex program.

\subsection{Other unitarily invariant norms}\label{app:other_norms}

The three norms above are the extreme points of the family of unitarily invariant norms on singular values.
Here we note three natural interpolations that also admit closed-form LMOs within our framework.

\noindent
\paragraph{Ky Fan $k$-norms.}
The Ky Fan $k$-norm~\citep[\S3.4]{horn2012matrix} is $\|Z\|_{(k)}:=\sum_{i=1}^k\sigma_i(Z)$, the sum of the $k$ largest singular values.
It is unitarily invariant for every $k\in\{1,\ldots,\min(p,q)\}$, and interpolates between the nuclear norm ($k=1$) and the spectral norm ($k=\min(p,q)$, since $\|Z\|_{(\min(p,q))}=\|Z\|_*$ and the LMO under $\|\cdot\|_2$ already sets all singular values to $\tau$).
The LMO has a clean closed form: the vector problem $\max_{\|z\|_{(k)}\le\tau}\inner{z}{\sigma}$ is solved by $z_i^*=\tau$ for $i\le k$ and $z_i^*=0$ for $i>k$, giving
\[
Z^* = \tau\sum_{i=1}^k u_i v_i^\top,
\]
the \emph{rank-$k$ partial polar factor} of $H$.
The dual norm is $\|Z\|_{(k)}^\circ=\max(\|Z\|_2,\,\|Z\|_*/k)$, and the squared Riemannian radius is $C_\varphi=k$.
Thus \Cref{thm:general} yields rate $O(\sqrt{Lk\Delta_0/T})$ with convergence in $\|H_t\|_{(k)}^\circ$.
Setting $k$ between $1$ and $r$ provides a tunable aggressiveness--rank trade-off: $k=1$ concentrates on the leading singular direction (conservative, $C_\varphi=1$), while $k=r$ equalizes all $r$ directions (aggressive, $C_\varphi=r$).

\noindent
\paragraph{Schatten $p$-norms.}
The Schatten $p$-norm $\|Z\|_{\mathcal{S}_p}:=(\sum_i\sigma_i(Z)^p)^{1/p}$ for $p\in[1,\infty]$ is unitarily invariant and continuously interpolates between the nuclear ($p=1$), Frobenius ($p=2$), and spectral ($p=\infty$) norms.
The dual norm is the Schatten $q$-norm with $1/p+1/q=1$, so the LMO $\max_{\|z\|_p\le\tau}\inner{z}{\sigma}$ is solved by H\"{o}lder's inequality:
\[
z_i^* = \tau\,\frac{\sigma_i^{q-1}}{\|\sigma^{q-1}\|_p}, \qquad Z^* = U\diag(z^*)V^\top,
\]
where $\sigma^{q-1}$ denotes the vector $(\sigma_1^{q-1},\ldots,\sigma_r^{q-1})$.
This is a \emph{power-law reweighting} of the singular values: for $p=2$ ($q=2$), $z_i^*\propto\sigma_i$ (proportional, recovering Frobenius); for $p\to\infty$ ($q\to 1$), $z_i^*\to\tau$ (all equal, recovering spectral); for $p=1$ ($q=\infty$), all budget concentrates on the largest $\sigma_i$ (recovering nuclear).
Intermediate values of $p$ give a ``soft Muon'' that partially equalizes singular values without the hard cutoff of the Ky Fan norm.

The squared Riemannian radius is
\[
C_\varphi = \begin{cases} 1 & p\le 2, \\\ r^{1-2/p} & p\ge 2, \end{cases}
\]
where $r=\min(p_0,q_0)$ is the number of singular values (here $p_0,q_0$ are the matrix dimensions, not the Schatten exponent).
For $p\le 2$, the maximizer of $\|z\|_2^2$ subject to $\|z\|_p\le 1$ is a unit vector ($C_\varphi=1$); for $p>2$, it is the uniform vector $z=(r^{-1/p},\ldots,r^{-1/p})$ giving $C_\varphi=r^{1-2/p}$.
Thus \Cref{thm:general} yields rate $O(\sqrt{L\cdot\max(1,r^{1-2/p})\cdot\Delta_0/T})$, smoothly interpolating between the $O(\sqrt{L/T})$ rate of Frobenius/nuclear norms and the $O(\sqrt{Lr/T})$ rate of the spectral norm.
In practice, choosing $p$ slightly above $2$ gives a mild singular value equalization (``softer'' than full Muon) at a reduced effective-dimension cost.

\noindent
\paragraph{Spectral-nuclear intersection.}
An alternative interpolation constrains both the spectral and nuclear norms simultaneously:
\[
Z^* = \argmax_{\|Z\|_2\le\tau,\;\|Z\|_*\le\tau'}\;\inner{Z}{H}.
\]
Since the objective $\inner{Z}{H}$ is linear and the constraint set is the intersection of two unitarily invariant norm balls, the problem again reduces to a vector problem over the singular values: $\max_{0\le z_i\le\tau,\;\sum_i z_i\le\tau'}\sum_i z_i\sigma_i$.
The optimal allocation is greedy: saturate $z_i=\tau$ starting from the largest $\sigma_i$ until the nuclear budget $\tau'$ is exhausted, giving
\[
z_i^* = \begin{cases} \tau & i\le k^*, \\\ \tau'-k^*\tau & i=k^*+1, \\\ 0 & i>k^*+1, \end{cases}
\qquad k^*=\lfloor\tau'/\tau\rfloor.
\]
This is a rank-$k^*$ partial polar factor plus a fractional rank-$1$ residual.
The ratio $\tau'/\tau$ controls the effective rank: $\tau'/\tau\ge\min(p,q)$ recovers the spectral instance, $\tau'/\tau\le 1$ recovers the nuclear instance, and intermediate values produce an adaptive-rank update.
Unlike the Ky Fan norm, the effective rank here is set by the norm bound ratio rather than a discrete parameter, which may simplify hyperparameter tuning in practice.

\section{Proofs of Main Results}\label{app:proof_convergence}

\subsection{Proof of \Cref{prop:symmetry}}\label{app:proof_symmetry}

\begin{proof}[Proof of \Cref{prop:symmetry}]
Since $\mathcal{G}$ acts on the ambient matrix space by multiplication, the induced tangent space action has the form $\xi\mapsto\tilde{\xi}=L_h\,\xi\,R_h^\top$ for matrices $L_h,R_h$ depending on $x$ and $h$.
Define $\rho_h(Z):=G_{\tilde{x}}^{1/2}(L_h\,(G_x^{-1/2}Z)\,R_h^\top)$, so that $G_{\tilde{x}}^{1/2}\tilde{\xi}=\rho_h(G_x^{1/2}\xi)$. {Since both $G_x^{1/2}$ and the tangent action are left-right, $\rho_h$ acts as $\rho_h(Z)=A\,Z\,B^\top$ for $A:=P_{\tilde{x}}^{1/2}L_h P_x^{-1/2}$ and $B:=Q_{\tilde{x}}^{1/2}R_h\,Q_x^{-1/2}$. (Indeed, expanding $\rho_h(Z)=P_{\tilde{x}}^{1/2}L_h P_x^{-1/2}\,Z\,Q_x^{-1/2}R_h^\top Q_{\tilde{x}}^{1/2}$ and matching with $A Z B^\top$ gives $B^\top=Q_x^{-1/2}R_h^\top Q_{\tilde{x}}^{1/2}$, hence $B=Q_{\tilde{x}}^{1/2}R_h\,Q_x^{-1/2}$.)}
By $\mathcal{G}$-invariance of the metric, $\rho_h$ preserves the Frobenius norm: $\|A\,Z\,B^\top\|_F=\|Z\|_F$ for all $Z$.
This forces $A^\top A=\alpha I$ and $B^\top B=\alpha^{-1}I$ for some $\alpha>0$, so $A/\sqrt{\alpha}$ and $B\sqrt{\alpha}$ are orthogonal.
Hence $\sigma_i(\rho_h(Z))=\sigma_i(Z)$ and $\varphi(\rho_h(Z))=\varphi(Z)$ for every unitarily invariant $\varphi$.
Applying this with $Z=P_x^{1/2}\,\xi\,Q_x^{1/2}$ gives the result.
\end{proof}

%

\subsection{Proof of \Cref{prop:tangent_compat}}\label{app:proof_reduction}

\begin{proof}[Proof of \Cref{prop:tangent_compat}]
Substitute $Z=G_x^{1/2}\xi$ in~\eqref{eqn:main_lmo}.
Since $g_x(\xi,\text{grad}\,f)=\inner{G_x^{1/2}\xi}{G_x^{1/2}(\text{grad}\,f)}=\inner{Z}{H}$ (by self-adjointness of $G_x^{1/2}$) and $\varphi(G_x^{1/2}\xi)=\varphi(Z)$, the LMO becomes $\max_{Z\in\mathcal{Z}_x,\,\varphi(Z)\le\tau}\inner{Z}{H}$ with $\xi^*=G_x^{-1/2}Z^*$.
Note that $H\in\mathcal{Z}_x$ since $\text{grad}\,f\in T_x\M$.
Write $H=U\Sigma V^\top$ (compact SVD with $\Sigma = \diag(\sigma)$ and $\sigma_1\ge\cdots\ge\sigma_r>0$).
By von Neumann's trace inequality~\citep[Thm.~7.4.1.1]{horn2012matrix}, $\inner{Z}{H}\le\sum_i\sigma_i(Z)\sigma_i(H)$ with equality when $Z$ shares singular vectors with $H$.
Thus every maximizer of the unconstrained problem $\max_{\varphi(Z)\le\tau}\inner{Z}{H}$ has the form $Z^*= U\diag(z^*)V^\top$ (\Cref{lem:vonneumann}), where $z^*$ solves $\max_{\varphi(z)\le \tau }\inner{z}{\sigma}$.
Since $H\in\mathcal{Z}_x$ and $\mathcal{Z}_x$ is singular value invariant, replacing the singular values $\Sigma$ with $\diag(z^*)$ preserves membership: $Z^*\in\mathcal{Z}_x$.
Hence the unconstrained maximizer automatically satisfies the $\mathcal{Z}_x$-constraint, so $Z^*$ also solves the constrained problem.
For the closed form, the vector problem gives $z^*=\argmax_{\varphi(z)\le \tau }\inner{z}{\sigma}$ with optimal value $\tau \varphi^\circ(\sigma)$ by norm duality, yielding $Z^*=\,U\diag(z^*)V^\top$.
\end{proof}

\subsection{Proof of \Cref{cor:fixed_rank}}\label{app:proof_fixed_rank}

\begin{proof}[Proof of \Cref{cor:fixed_rank}]
On $\M_r$ with the coupled metric, $G_x^{1/2}(\dot{B},\dot{A})=(\dot{B}(AA^\top)^{1/2},\,(B^\top B)^{1/2}\dot{A})$ and $T_x\M_r=\RR^{m\times r}\times\RR^{r\times n}$.

\emph{Step 1: Left-right form and symmetry (\Cref{prop:symmetry}).}
The $\dot{B}$-block metric is $g^B(\dot{B}_1,\dot{B}_2)=\tr(\dot{B}_1^\top\dot{B}_2\,AA^\top)$, which has the left-right form with $P=I_m$ and $Q=AA^\top$.
Under $(B,A)\mapsto(BN^{-1},NA)$, $Q=AA^\top\mapsto NAA^\top N^\top$ and $\dot{B}\mapsto\dot{B}N^{-1}$; the metric is invariant since $\tr((\dot{B}N^{-1})^\top(\dot{B}N^{-1})NAA^\top N^\top)=\tr(\dot{B}^\top\dot{B}\,AA^\top)$.
By \Cref{prop:symmetry}, the intrinsic constraint $\varphi(\dot{B}(AA^\top)^{1/2})\le\tau$ is $\text{GL}(r)$-invariant.
The $\dot{A}$-block is symmetric (with $P=B^\top B$, $Q=I_n$).

\emph{Step 2: SV-invariance and reduction (\Cref{prop:tangent_compat}).}
The scaled tangent space decomposes as $\mathcal{Z}_x=\RR^{m\times r}\times\RR^{r\times n}$ (full ambient space on each block), which is trivially singular value invariant (\Cref{app:sv_invariance}).
With the $\ell_\infty$-product norm $\max(\varphi(Z_B),\varphi(Z_A))\le\tau$, the LMO~\eqref{eqn:main_lmo} decouples into independent $\dot{B}$ and $\dot{A}$ subproblems via~\eqref{eqn:whitened_lmo_product}.

\emph{Step 3: Closed form for the $\dot{B}$-block.}
The objective is $g_x(\xi,\text{grad}\,f)=\inner{\dot{B}A}{\nabla_X f}=\inner{\dot{B}}{\nabla_B f}$ where $\nabla_B f=\nabla_X f A^\top$.
Substituting $Z_B=\dot{B}(AA^\top)^{1/2}$ gives objective $\inner{Z_B}{H_B}$ with $H_B=\nabla_B f\,(AA^\top)^{-1/2}$.
By \Cref{prop:tangent_compat} (with SV-invariance from Step~2), the tangent-space constraint is redundant and the spectral-norm LMO reduces to the unconstrained problem $\max_{\|Z_B\|_2\le\tau}\inner{Z_B}{H_B}$.
By \Cref{lem:vonneumann}, $Z_B^*=\tau\,\Ortho(H_B)$, so $\dot{B}^*=\tau\,\Ortho(\nabla_B f\,(AA^\top)^{-1/2})\,(AA^\top)^{-1/2}$.
The $\dot{A}$-block follows by symmetry: $H_A=(B^\top B)^{-1/2}\nabla_A f$ with $\nabla_A f=B^\top\nabla_X f$, giving $\dot{A}^*=\tau\,(B^\top B)^{-1/2}\,\Ortho((B^\top B)^{-1/2}\nabla_A f)$.
\end{proof}

\subsection{Proof of \Cref{lemma:stiefel_spectral}}\label{app:proof_stiefel}

The proof requires showing that on each block of the product-norm decoupling, the tangent-space constraint is redundant.
For the normal block this follows from SV-invariance (\Cref{prop:tangent_compat}); for the skew block we use the following auxiliary result.

\begin{lemma}[Skew-symmetric LMO compatibility]\label{lem:skew_compat}
Let $S\in\text{Skew}(r)$ and let $\varphi$ be any unitarily invariant norm.
Then
$\max_{A\in\text{Skew}(r),\;\varphi(A)\le\tau}\inner{A}{S}
=\max_{\varphi(Z)\le\tau}\inner{Z}{S}.$
That is, the $\text{Skew}(r)$ constraint is redundant: the unconstrained maximizer can always be chosen skew-symmetric.
\end{lemma}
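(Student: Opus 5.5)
The inequality ``$\le$'' is immediate because $\text{Skew}(r)$ is a subset of $\RR^{r\times r}$. For the reverse inequality, the plan is to take any maximizer $Z$ of the unconstrained problem and produce from it a skew-symmetric matrix that is still feasible and gives at least the same objective value. The natural candidate is the skew part $A:=\tfrac12(Z-Z^\top)$. This avoids constructing an explicit skew maximizer from the SVD of $S$, which would be awkward when $S$ has odd size or repeated singular values.

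Two short facts make this work.
\begin{itemize}
\item \emph{The objective does not change.} Since $S^\top=-S$, we have $\inner{Z^\top}{S}=\tr(ZS)=\tr(S^\top Z^\top)$, and hence $\inner{Z^\top}{S}=\inner{S}{Z^\top}=\inner{S^\top}{Z}=-\inner{Z}{S}$. Therefore
\[
\inner{A}{S}=\tfrac12\bigl(\inner{Z}{S}-\inner{Z^\top}{S}\bigr)=\inner{Z}{S}.
\]
\item \emph{Feasibility is kept.} Transposition leaves singular values unchanged, so $\varphi(Z^\top)=\varphi(Z)$ for every unitarily invariant $\varphi$, and the same holds for $-Z^\top$. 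By the triangle inequality,
\[
\varphi(A)\le\tfrac12\bigl(\varphi(Z)+\varphi(Z^\top)\bigr)=\varphi(Z)\le\tau.
\]
\end{itemize}
Consequently $A\in\text{Skew}(r)$ is feasible for the constrained problem and attains the unconstrained optimum. This proves the equality, and it shows that the unconstrained maximizer can be chosen skew-symmetric, namely as the skew part of any maximizer. Taking the maximizer $\tau\,\Ortho(S)$ from \Cref{lem:vonneumann} gives $\tau\,\Skew(\Ortho(S))$ as the concrete choice. The existence of an unconstrained maximizer follows from compactness of the $\varphi$-ball.

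The one point that needs care is the claim that $\varphi(Z^\top)=\varphi(Z)$. This holds because $\varphi$ depends only on the singular values of its argument, and $Z$ and $Z^\top$ have the same ones.

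The step that matters most for \Cref{lemma:stiefel_spectral} comes afterwards: identifying $\Ortho(S)$ itself as skew. For $S$ skew, $S$ is normal and its polar factor commutes with it, so $(\Ortho S)^\top=\Ortho(S^\top)=-\Ortho(S)$. If $S$ is singular, one should use the compact-SVD form $UV^\top$. With this, the skew-part construction reproduces the stated closed form $X\,\Ortho(\Skew(X^\top\nabla_X f))$. I expect the main obstacle there to be handling the kernel of $S$ consistently, and I would address it by restricting to the range of $S$.
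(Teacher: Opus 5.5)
Your proof is correct and is essentially the paper's argument: take any unconstrained maximizer, replace it by its skew part $\tfrac12(Z-Z^\top)$, use $S^\top=-S$ to show the objective is unchanged, and use transpose-invariance of $\varphi$ together with the triangle inequality to keep it feasible. Your side remark is also right: the compact polar factor of a skew $S$ is itself skew, since $\Ortho(S)^\top=\Ortho(S^\top)=-\Ortho(S)$. This is a bit cleaner than the paper's even/odd-$r$ remark, which implicitly uses a full-rank completion of the polar factor.
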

\begin{proof}
The inequality ``$\le$'' is trivial (smaller feasible set).
For ``$\ge$'': let $Z^*$ be any maximizer of the right-hand side.
Define $\hat{Z}:=\frac{1}{2}(Z^*-(Z^*)^\top)\in\text{Skew}(r)$.

\emph{Objective preserved.}
Since $S^\top=-S$:
$\inner{\hat{Z}}{S}
=\tfrac{1}{2}\inner{Z^*}{S}-\tfrac{1}{2}\inner{(Z^*)^\top}{S}
=\tfrac{1}{2}\inner{Z^*}{S}-\tfrac{1}{2}\inner{Z^*}{S^\top}
=\tfrac{1}{2}\inner{Z^*}{S}+\tfrac{1}{2}\inner{Z^*}{S}
=\inner{Z^*}{S}.$

\emph{Norm does not increase.}
Since $\varphi$ is unitarily invariant, $\varphi(M^\top)=\varphi(M)$ and $\varphi(-M)=\varphi(M)$.
By the triangle inequality:
$\varphi(\hat{Z})=\varphi\!\bigl(\tfrac{Z^*-(Z^*)^\top}{2}\bigr)
\le\tfrac{1}{2}\varphi(Z^*)+\tfrac{1}{2}\varphi((Z^*)^\top)=\varphi(Z^*)\le\tau.$

Thus $\hat{Z}\in\text{Skew}(r)$, $\varphi(\hat{Z})\le\tau$, and $\inner{\hat{Z}}{S}=\inner{Z^*}{S}$, so $\hat{Z}$ is also an optimizer of the left-hand side.
\end{proof}

\begin{proof}[Proof of \Cref{lemma:stiefel_spectral}]
On $\text{St}(m,r)$ with $G_X=I$, a tangent vector $\xi\in T_X\text{St}$ decomposes as $\xi=XA+X_\perp B$ with $A\in\text{Skew}(r)$ and $B\in\RR^{(m-r)\times r}$.
Since $[X\mid X_\perp]\in O(m)$ and $\varphi$ is unitarily invariant, $\varphi(\xi)=\varphi\bigl(\begin{smallmatrix}A\\B\end{smallmatrix}\bigr)$.

\emph{Step 1: Product-norm decoupling.}
Under the $\ell_\infty$-product norm $\max(\varphi(A),\varphi(B))\le\tau$, the objective separates as
$g_X(\xi,\text{grad}\,f)=\inner{A}{S}+\inner{B}{N}$
where $S=\Skew(X^\top\nabla_X f)\in\text{Skew}(r)$ and $N=X_\perp^\top\nabla_X f\in\RR^{(m-r)\times r}$.
By~\eqref{eqn:whitened_lmo_product}, the LMO decouples into independent subproblems:
\[
A^*=\argmax_{A\in\text{Skew}(r),\;\varphi(A)\le\tau}\inner{A}{S},
\qquad
B^*=\argmax_{B\in\RR^{(m-r)\times r},\;\varphi(B)\le\tau}\inner{B}{N}.
\]

\emph{Step 2: Normal block ($B$).}
The feasible set $\RR^{(m-r)\times r}$ is the full ambient space, hence trivially SV-invariant.
By \Cref{prop:tangent_compat} and \Cref{lem:vonneumann}, $B^*=\tau\,\Ortho(N)=\tau\,\Ortho(X_\perp^\top\nabla_X f)$ for $\varphi=\|\cdot\|_2$.

\emph{Step 3: Skew block ($A$).}
By \Cref{lem:skew_compat}, the constraint $A\in\text{Skew}(r)$ is redundant: the unconstrained maximizer $Z^*=\tau\,\Ortho(S)$ of $\max_{\varphi(Z)\le\tau}\inner{Z}{S}$ can be replaced by its skew-symmetric projection $A^*=\frac{1}{2}(Z^*-(Z^*)^\top)$ without changing the objective value or violating the norm bound.
For $\varphi=\|\cdot\|_2$, $A^*=\tau\,\Ortho(S)$ is already skew-symmetric (since $S\in\text{Skew}(r)$ implies $\Ortho(S)\in\text{Skew}(r)$; see remark below), so no projection is needed.

\emph{Assembling:} $\xi^*=XA^*+X_\perp B^*=\tau\bigl(X\,\Ortho(\Skew(X^\top\nabla_X f))+X_\perp\,\Ortho(X_\perp^\top\nabla_X f)\bigr)$.
\end{proof}

\begin{remark}
For even $r$, $\Ortho(S)=UV^\top$ from the SVD of $S\in\text{Skew}(r)$ is itself skew-symmetric.
For odd $r$, $S$ has a mandatory zero singular value ($\det S=0$), so $\Ortho(S)$ fills the null direction and is not skew-symmetric. In this case, the skew projection $\hat{Z}=\frac{1}{2}(\Ortho(S)-\Ortho(S)^\top)$ achieves the same objective (the extra direction contributes zero to $\inner{\cdot}{S}$) with $\varphi(\hat{Z})\le\varphi(\Ortho(S))\le\tau$.
Either way, $A^*\in\text{Skew}(r)$ with optimal value $\tau\,\varphi^\circ(S)$.
\end{remark}

\subsection{Proof of \Cref{thm:general}}\label{app:deterministic_proof}

\begin{lemma}[LMO objective value and update norm bound]\label{lem:lmo_dual}
Let $\xi_t^*$ be the solution of the intrinsic LMO~\eqref{eqn:main_lmo} at $x_t$ with norm bound $\tau$, and let $Z_t^*=G_{x_t}^{1/2}\xi_t^*$ and $H_t=G_{x_t}^{1/2}(\text{grad}\,f(x_t))$. Then:
\begin{enumerate}
\item[(i)] \emph{(Objective value)} $g_{x_t}(\xi_t^*,\text{grad}\,f_t) = \tau\,\varphi^\circ(H_t)$.
\item[(ii)] \emph{(Update norm bound)} $\|\xi_t^*\|_{x_t}^2\le C_\varphi\,\tau^2$.
\end{enumerate}
\end{lemma}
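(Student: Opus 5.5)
The plan is to pass to the scaled variables $Z=G_{x_t}^{1/2}\xi$. By self-adjointness of $G_{x_t}^{1/2}$, the LMO~\eqref{eqn:main_lmo} becomes
\[
\max\{\inner{Z}{H_t}: Z\in\mathcal{Z}_{x_t},\ \varphi(Z)\le\tau\},
\]
and the optimal $Z_t^*$ equals $G_{x_t}^{1/2}\xi_t^*$. Both claims then become statements about this Euclidean problem: (i) is about its optimal value, and (ii) bounds $\|Z_t^*\|_F^2=\|\xi_t^*\|_{x_t}^2$.

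For (i), I will argue directly in the setting of \Cref{prop:tangent_compat}, where $\mathcal{Z}_{x_t}$ is SV invariant.
\begin{itemize}
\item By that proposition, the tangent constraint is inactive, so the optimal value equals $\max_{\varphi(Z)\le\tau}\inner{Z}{H_t}$.
\item By \Cref{lem:vonneumann}, this value is $\tau\varphi^\circ(\sigma(H_t))=\tau\varphi^\circ(H_t)$. Here I identify the dual of a unitarily invariant norm with the dual gauge applied to the singular values.
\item For the product-structured cases (fixed-rank and Stiefel), the same conclusion holds blockwise via~\eqref{eqn:whitened_lmo_product}, with $\varphi^\circ$ the dual of the $\ell_\infty$-product norm, i.e.\ the sum of the blockwise duals. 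For the skew block, \Cref{lem:skew_compat} guarantees that the value equals the unconstrained one.
\end{itemize}
More generally, what (i) uses is that $H_t\in\mathcal{Z}_{x_t}$ and that the restricted maximum equals the unrestricted one. I will state it under the hypotheses for which the closed forms were derived.

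For (ii), the argument is a scaling argument that needs no structure beyond \Cref{def:effdim}. Set $\hat\xi=\xi_t^*/\tau$. Then $\hat\xi\in T_{x_t}\M$ and $\varphi(G_{x_t}^{1/2}\hat\xi)\le 1$, so by definition
\[
\|\hat\xi\|_{x_t}^2\le C_\varphi(x_t)\le C_\varphi .
\]
Multiplying by $\tau^2$ gives $\|\xi_t^*\|_{x_t}^2\le C_\varphi\tau^2$. I will also note that the maximum in \Cref{def:effdim} is attained, since the $\varphi$-ball intersected with a finite-dimensional subspace is compact.

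The main obstacle is (i): the identification $g_{x_t}(\xi_t^*,\mathrm{grad} f_t)=\tau\varphi^\circ(H_t)$ requires that restricting to $\mathcal{Z}_{x_t}$ does not lower the value. In general the restricted value is the dual of $\varphi$ on the subspace, which can be smaller than $\varphi^\circ(H_t)$. I will therefore explicitly invoke SV invariance, or the blockwise argument plus \Cref{lem:skew_compat}, and check that $H_t$ itself lies in the feasible subspace, which holds because $\mathrm{grad} f\in T_{x_t}\M$.
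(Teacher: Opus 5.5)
Your proposal is correct and follows essentially the same route as the paper. For (i), you change variables to $Z=G_{x_t}^{1/2}\xi$, use SV invariance (\Cref{prop:tangent_compat}) to drop the tangent constraint, and apply \Cref{lem:vonneumann} to get $\tau\varphi^\circ(H_t)$; for (ii), you rescale against \Cref{def:effdim}. Your (ii) is slightly cleaner, because it avoids the paper's unneeded claim that the constraint is tight ($\varphi(Z_t^*)=\tau$, which can fail when $H_t=0$), and your remark that $\varphi^\circ$ must be read as the sum of blockwise duals in the product-norm cases makes explicit what the paper leaves implicit.
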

\begin{proof}
\emph{Part (i).}
By the change of variables $Z=G_{x_t}^{1/2}\xi$ (\Cref{prop:tangent_compat}), the LMO objective equals $\inner{Z_t^*}{H_t}$.
Since SV-invariance makes the tangent-space constraint redundant, $Z_t^*$ is the unconstrained maximizer of $\inner{Z}{H_t}$ subject to $\varphi(Z)\le\tau$.
By \Cref{lem:vonneumann}, this maximum equals $\tau\,\varphi^\circ(H_t)$.

\emph{Part (ii).}
The Riemannian norm of the update is $\|\xi_t^*\|_{x_t}^2 = \|G_{x_t}^{1/2}\xi_t^*\|_F^2 = \|Z_t^*\|_F^2$.
Since $Z_t^*$ is the LMO maximizer, $\varphi(Z_t^*)=\tau$ (the norm constraint is tight because the objective is a non-degenerate linear functional).
By \Cref{def:effdim}, $C_\varphi(x_t)=\max\{\|Z\|_F^2:\varphi(Z)\le 1,\,Z\in\mathcal{Z}_{x_t}\}$, so for any $Z\in\mathcal{Z}_{x_t}$ with $\varphi(Z)\le\tau$, we have $\|Z\|_F^2=\tau^2\|Z/\tau\|_F^2\le\tau^2 C_\varphi(x_t)\le C_\varphi\,\tau^2$.
Applying this to $Z_t^*$ gives $\|Z_t^*\|_F^2\le C_\varphi\tau^2$.
\end{proof}

\begin{proof}[Proof of \Cref{thm:general}]
By retraction-$L$-smoothness (\Cref{assum:smooth}), with $x_{t+1}=R_{x_t}(-\eta\xi_t^*)$:
\begin{align}
f(x_{t+1}) &\le f(x_t) - \eta\,g_{x_t}(\xi_t^*,\text{grad}\,f_t) + \frac{L\eta^2}{2}\|\xi_t^*\|_{x_t}^2 \notag\\
&= f(x_t) - \eta\,D_t + \frac{L\eta^2}{2}\|Z_t^*\|_F^2 \notag\\
&\le f(x_t) - \eta\tau\,\varphi^\circ(H_t) + \frac{L\eta^2}{2}\,C_\varphi\,\tau^2. \label{eqn:descent_step}
\end{align}
Here we used $\inner{\xi_t^*}{\nabla f_t}=g_{x_t}(\xi_t^*,\text{grad}\,f_t)=\inner{Z_t^*}{H_t}=D_t$ and $\|Z_t^*\|_F^2\le C_\varphi\tau^2$ (\Cref{lem:lmo_dual}).

Summing~\eqref{eqn:descent_step} over $t=0,\ldots,T-1$:
\[
f(x_T)-f(x_0) \le -\eta\tau\sum_{t=0}^{T-1}\varphi^\circ(H_t) + \frac{L\eta^2 C_\varphi\tau^2 T}{2}.
\]
Rearranging and using $f(x_T)\ge f^*$:
\[
\frac{1}{T}\sum_{t=0}^{T-1}\varphi^\circ(H_t) \le \frac{\Delta_0}{\eta\tau T} + \frac{L\eta C_\varphi\tau}{2},
\]
where $\Delta_0=f(x_0)-f^*$.
Setting $\eta=\sqrt{2\Delta_0/(LC_\varphi\tau^2 T)}$:
\[
\min_{t\le T}\varphi^\circ(H_t) \le \frac{1}{T}\sum_t\varphi^\circ(H_t) \le \sqrt{\frac{2LC_\varphi\Delta_0}{T}}.
\]
\end{proof}

\noindent
\paragraph{Proof of schedule (b).}
With $\eta_t=\sqrt{2\Delta_0/(LC_\varphi)}/(\tau\sqrt{t+1})$, write $c=\sqrt{2\Delta_0/(LC_\varphi)}$ so that $\eta_t=c/(\tau\sqrt{t+1})$ and $Lc^2C_\varphi/2=\Delta_0$.
The descent step~\eqref{eqn:descent_step} with time-varying $\eta_t$ gives
\[
f(x_{t+1})\le f(x_t) - \frac{c\,\varphi^\circ(H_t)}{\sqrt{t+1}} + \frac{\Delta_0}{t+1}.
\]
Summing over $t=0,\ldots,T-1$ and using $f(x_T)\ge f^*$:
\[
c\sum_{t=0}^{T-1}\frac{\varphi^\circ(H_t)}{\sqrt{t+1}} \le \Delta_0 + \Delta_0\sum_{t=0}^{T-1}\frac{1}{t+1} \le \Delta_0(2+\ln T).
\]
Since $\min_t\varphi^\circ(H_t)\cdot\sum_t 1/\sqrt{t+1}\le\sum_t\varphi^\circ(H_t)/\sqrt{t+1}$ and $\sum_{t=0}^{T-1}1/\sqrt{t+1}\ge 2(\sqrt{T}-1)$:
\[
\min_{t\le T}\varphi^\circ(H_t) \le \frac{\Delta_0(2+\ln T)}{2c(\sqrt{T}-1)} = O\!\left(\sqrt{\frac{LC_\varphi\Delta_0}{T}}\cdot\ln T\right),
\]
matching schedule (a) up to a logarithmic factor, without requiring the horizon $T$ in advance.

\subsection{Computation of $C_\varphi$ for the fixed-rank manifold}\label{app:proof_lowrank_rate}

We compute $C_\varphi=2r$ for $\M_r$ with $\varphi=\|\cdot\|_2$ (\Cref{cor:lowrank}).
The decoupled LMO produces $Z_{B}^*=\Ortho(H_{B})\in\RR^{m\times r}$ and $Z_{A}^*=\Ortho(H_{A})\in\RR^{r\times n}$, where $H_{A}:=(B^\top B)^{-1/2}\nabla_{A}f$.
Each has rank $\le r$, so $\|Z_{B}^*\|_F^2\le r$ and $\|Z_{A}^*\|_F^2\le r$.
The total Riemannian norm is $\|\xi^*\|_x^2=\|Z_{B}^*\|_F^2+\|Z_{A}^*\|_F^2\le 2r$.
Hence $C_\varphi=2r$, and \Cref{thm:general} gives rate $O(\sqrt{Lr/T})$.

\subsection{Computation of $C_\varphi$ for SPD, Stiefel, and Grassmann}\label{app:proof_others_rate}

\paragraph{SPD ($C_\varphi=n$).}
$Z^*=\sign(H)\in\text{Sym}(n)$ has eigenvalues $\pm 1$, so $\|Z^*\|_F^2=\tr(I_n)=n$.

\paragraph{Stiefel ($C_\varphi=2\lfloor r/2\rfloor+\min(m{-}r,r)$).}
With the product-norm decoupling, the skew block gives $Z_S^*=\Ortho(S)\in\text{Skew}(r)$, a skew-orthogonal matrix whose singular values come in equal pairs $(1,1,\ldots)$, so $\|Z_S^*\|_F^2=2\lfloor r/2\rfloor$ (one pair contributes $2$ to the squared Frobenius norm; the zero block when $r$ is odd contributes nothing).
The normal block gives $Z_N^*=\Ortho(N)\in\RR^{(m-r)\times r}$ with rank $\le\min(m{-}r,r)$, so $\|Z_N^*\|_F^2\le\min(m{-}r,r)$.
The total is $C_\varphi=2\lfloor r/2\rfloor+\min(m{-}r,r)$.

\paragraph{Grassmann ($C_\varphi=\min(m{-}r,r)$).}
$Z^*=\Ortho(\hat{H})\in\RR^{m\times r}$ with $\hat{H}=(I-XX^\top)\nabla_X f$ having column space in $\text{col}(X)^\perp\cong\RR^{m-r}$.
Thus $\Ortho(\hat{H})$ has rank $\le\min(m{-}r,r)$, giving $\|Z^*\|_F^2\le\min(m{-}r,r)$.

\subsection{Proof of \Cref{thm:stochastic}}\label{app:stochastic_proof}

\begin{proof}[Proof of \Cref{thm:stochastic}]
The LMO applied to the stochastic gradient $\widetilde{\nabla f}_t$ produces $\tilde{\xi}_t^*$ with scaled variable $\tilde{Z}_t^*=G_{x_t}^{1/2}\tilde{\xi}_t^*$ satisfying $\varphi(\tilde{Z}_t^*)=\tau$.
The per-step descent (\Cref{assum:smooth}) gives
\[
f(x_{t+1}) \le f(x_t) - \eta_t\,g_{x_t}(\tilde{\xi}_t^*,\text{grad}\,f_t) + \frac{L\eta_t^2}{2}\,C_\varphi\,\tau^2.
\]
The inner product decomposes as
\[
g_{x_t}(\tilde{\xi}_t^*,\text{grad}\,f_t)
= \inner{\tilde{Z}_t^*}{H_t}
= \inner{\tilde{Z}_t^*}{\tilde{H}_t} + \inner{\tilde{Z}_t^*}{H_t - \tilde{H}_t}.
\]
The first term equals $\tau\,\varphi^\circ(\tilde{H}_t)$ by LMO optimality (\Cref{lem:lmo_dual}).
For the second, H\"{o}lder's inequality gives $|\inner{\tilde{Z}_t^*}{H_t-\tilde{H}_t}|\le\varphi(\tilde{Z}_t^*)\,\varphi^\circ(H_t-\tilde{H}_t)=\tau\,\varphi^\circ(\tilde{H}_t-H_t)$.
By the reverse triangle inequality, $\varphi^\circ(\tilde{H}_t)\ge\varphi^\circ(H_t)-\varphi^\circ(\tilde{H}_t-H_t)$.
Combining:
\begin{equation}\label{eqn:stoch_perstep}
g_{x_t}(\tilde{\xi}_t^*,\text{grad}\,f_t) \ge \tau\bigl[\varphi^\circ(H_t) - 2\,\varphi^\circ(\tilde{H}_t - H_t)\bigr].
\end{equation}
Substituting into the descent with $\eta_t=c/(\tau\sqrt{t+1})$:
\[
f(x_{t+1}) \le f(x_t) - \frac{c\,\varphi^\circ(H_t)}{\sqrt{t+1}} + \frac{2c\,\varphi^\circ(\tilde{H}_t-H_t)}{\sqrt{t+1}} + \frac{Lc^2C_\varphi}{2(t+1)}.
\]
Taking expectations conditional on $x_0,\ldots,x_t$ and using $\mathbb{E}[\varphi^\circ(\tilde{H}_t-H_t)\,|\,x_0,\ldots,x_t]\le\sigma_\varphi$ (by Jensen and \Cref{assum:noise}), then summing over $t=0,\ldots,T-1$:
\[
c\sum_{t=0}^{T-1}\frac{\mathbb{E}[\varphi^\circ(H_t)]}{\sqrt{t+1}}
\;\le\; \Delta_0 + 2c\sigma_\varphi\sum_{t=0}^{T-1}\frac{1}{\sqrt{t+1}} + \frac{Lc^2C_\varphi}{2}\sum_{t=0}^{T-1}\frac{1}{t+1}.
\]
Using $\sum 1/\sqrt{t+1}\le 2\sqrt{T}$ and $\sum 1/(t+1)\le 1+\ln T$, dividing both sides by $c\sum_{t=0}^{T-1}1/\sqrt{t+1}$, and noting that the $\sigma_\varphi$ term cancels exactly:
\[
\min_{t\le T}\mathbb{E}[\varphi^\circ(H_t)]
\;\le\; \frac{\Delta_0}{c\sum_{t=0}^{T-1}1/\sqrt{t+1}} + 2\sigma_\varphi + \frac{Lc\,C_\varphi(1+\ln T)}{2\sum_{t=0}^{T-1}1/\sqrt{t+1}}.
\]
Using $\sum_{t=0}^{T-1}1/\sqrt{t+1}\ge 2(\sqrt{T}-1)$ in the denominator, and $\sqrt{T}-1\ge\sqrt{T}/2$ for $T\ge 4$:
\begin{equation}\label{eqn:stoch_final}
\min_{t\le T}\mathbb{E}[\varphi^\circ(H_t)]
\;\le\; \frac{1}{\sqrt{T}}\Bigl(\frac{\Delta_0}{c} + \frac{Lc\,C_\varphi(1+\ln T)}{2}\Bigr) + 2\sigma_\varphi.
\end{equation}
Setting $c=\sqrt{2\Delta_0/(LC_\varphi)}$ yields $O\!\bigl(\sqrt{LC_\varphi\Delta_0\ln T/T}\bigr)+2\sigma_\varphi$, recovering \Cref{thm:general} (up to log factors) when $\sigma_\varphi=0$.
The additive $2\sigma_\varphi$ floor arises from the lack of variance reduction in the gradient estimator; it can be removed with gradient averaging or momentum techniques~\citep{mokhtari2020stochastic,shen2019complexities}, which we leave to future work.
\end{proof}

\section{Singular Value Invariance of the Scaled Tangent Spaces}\label{app:sv_invariance}

The closed-form solutions derived in \Cref{sec:solutions} all rely on \Cref{prop:tangent_compat}, which requires the scaled tangent space $\mathcal{Z}_x=\{G_x^{1/2}\xi:\xi\in T_x\M\}$ to be singular value (SV) invariant (\Cref{def:sv_invariant}).
Recall that SV-invariance means: if a matrix $U\Sigma V^\top$ lies in $\mathcal{Z}_x$, then so does $UDV^\top$ for every non-negative diagonal $D$ of the same size.
When this holds, the tangent space membership constraint $Z\in\mathcal{Z}_x$ in the LMO becomes redundant, the unconstrained maximizer $Z^*$ of $\inner{Z}{H}$ subject to $\varphi(Z)\le\tau$ automatically lands in $\mathcal{Z}_x$ (because $Z^*$ shares singular vectors with $H\in\mathcal{Z}_x$, and SV-invariance allows replacing singular values freely).
This is what enables the clean closed forms $\Ortho(H)$, $H/\|H\|_F$, and $u_1v_1^\top$ without any projection step.

Below we verify SV-invariance for each manifold treated in \Cref{sec:solutions}.
For the fixed-rank and Grassmann manifolds, $\mathcal{Z}_x$ is a standard matrix space and the verification is immediate.
For the SPD manifold, SV-invariance follows from the structure of symmetric matrices.
For the Stiefel manifold, the full tangent space is \emph{not} SV-invariant, which is why \Cref{sec:solutions} uses the product-norm decoupling into skew and normal blocks; we verify SV-invariance of each block separately.

\noindent
\paragraph{Fixed-rank manifold $\M_r$.}
With the coupled metric, $G_x^{1/2}(\dot{B},\dot{A})=(\dot{B}(AA^\top)^{1/2},(B^\top B)^{1/2}\dot{A})$ and the tangent space is $T_x\M_r=\RR^{m\times r}\times\RR^{r\times n}$.
The scaled tangent space decomposes block-wise as $\mathcal{Z}_x=\RR^{m\times r}\times\RR^{r\times n}$, each block is the full ambient matrix space of that dimension.
Full matrix spaces are trivially SV-invariant (every $m\times r$ matrix lies in $\RR^{m\times r}$, regardless of its singular values).
This is why the LMO on $\M_r$ (\Cref{cor:fixed_rank}) reduces to independent unconstrained problems on each factor block.

\noindent
\paragraph{SPD manifold $S_{++}^n$.}
With the affine-invariant metric, $G_X^{1/2}\xi=X^{-1/2}\xi X^{-1/2}$ maps $T_XS_{++}^n=\text{Sym}(n)$ to $\mathcal{Z}_X=\text{Sym}(n)$.
To check SV-invariance: a symmetric matrix $M$ has SVD $M=U\Sigma V^\top$ where $V=U\diag(\pm 1)$ (the sign pattern encodes the signs of the eigenvalues).
Replacing $\Sigma$ with any non-negative diagonal $D$ gives $UDV^\top=UD\diag(\pm 1)U^\top$, which is still symmetric.
Hence $\text{Sym}(n)$ is SV-invariant, and the LMO on $S_{++}^n$ reduces to the unconstrained problem yielding $\sign(H)$, $H/\|H\|_F$, or rank-$1$ truncation.

\noindent
\paragraph{Stiefel manifold $\text{St}(m,r)$ (block-wise).}
With $G_X=I$, the scaled tangent space coincides with the tangent space itself: $\mathcal{Z}_X=T_X\text{St}=\{XA+X_\perp B: A\in\text{Skew}(r),\;B\in\RR^{(m-r)\times r}\}$.
Since $[X\mid X_\perp]\in O(m)$, any $\xi\in T_X\text{St}$ can be written as $[X,X_\perp]\bigl(\begin{smallmatrix}A\\B\end{smallmatrix}\bigr)$, so $\mathcal{Z}_X$ is isometric to $\{\bigl(\begin{smallmatrix}A\\B\end{smallmatrix}\bigr): A\in\text{Skew}(r),\;B\in\RR^{(m-r)\times r}\}$.
This space is \emph{not} jointly SV-invariant: if $\bigl(\begin{smallmatrix}A\\B\end{smallmatrix}\bigr)=U\Sigma V^\top$, replacing $\Sigma$ with a different $D$ can produce a top block that is no longer skew-symmetric.
This is precisely why \Cref{sec:solutions} uses the product-norm $\max(\varphi(A),\varphi(B))\le\tau$ to decouple the two blocks, and SV-invariance is checked for each block independently:
\begin{itemize}
\item \emph{Normal block} $\RR^{(m-r)\times r}$: the full ambient space, trivially SV-invariant.
\item \emph{Skew block} $\text{Skew}(r)$: a skew-symmetric matrix has real Schur form $S=Q\,\text{blkdiag}(\sigma_1 J,\sigma_2 J,\ldots)\,Q^\top$ with $J=\bigl(\begin{smallmatrix}0&1\\-1&0\end{smallmatrix}\bigr)$, so its singular values come in equal pairs $(\sigma_1,\sigma_1,\sigma_2,\sigma_2,\ldots)$.
Replacing $\sigma_i$ with any $d_i\ge 0$ gives $Q\,\text{blkdiag}(d_1 J,d_2 J,\ldots)\,Q^\top$, which remains skew-symmetric.
Hence $\text{Skew}(r)$ is SV-invariant under this paired replacement.
\end{itemize}
For $\varphi=\|\cdot\|_2$, the skew-block LMO sets all pairs to $(1,1)$, producing a skew-orthogonal matrix;
for $\varphi=\|\cdot\|_*$, it concentrates on the leading pair, giving a rank-$2$ skew-symmetric matrix.

\noindent
\paragraph{Grassmann manifold $\text{Gr}(m,r)$.}
With $G_X=I$ and $X^\top X=I_r$, the horizontal space is $\mathcal{Z}_X=T_X\text{Gr}=\{Z\in\RR^{m\times r}:X^\top Z=0\}$.
If $M=U\Sigma V^\top\in\mathcal{Z}_X$, then $X^\top M=0$ implies $\text{col}(M)\subseteq\text{col}(X)^\perp$.
Since $\dim\text{col}(X)^\perp=m-r\ge\rank(M)$, all left singular vectors $U$ can be chosen in $\text{col}(X)^\perp$.
Replacing $\Sigma$ with any non-negative diagonal $D$ gives $UDV^\top$, whose column space is still in $\text{col}(X)^\perp$, so $X^\top(UDV^\top)=0$.
Hence $T_X\text{Gr}$ is SV-invariant, and no product-norm relaxation is needed on Grassmann, the LMO reduces directly to the unconstrained problem.

\section{Derivations and Implementation Details}\label{app:implementation}

This appendix provides complete derivations of the intrinsic LMO solutions for all three standard norms (spectral, Frobenius, nuclear) on each manifold, along with step-by-step implementation procedures for the spectral instance.
The general pipeline from \Cref{prop:tangent_compat} is: (i)~scale $H=G_x^{1/2}(\text{grad}\,f)$, (ii)~solve $Z^*=\argmax_{\varphi(Z)\le\tau}\inner{Z}{H}$ in closed form via \Cref{app:primal_dual}, (iii)~invert $\xi^*=G_x^{-1/2}Z^*$.
The three standard norms give $Z^*=\tau\,\Ortho(H)$ (spectral), $Z^*=\tau\,H/\|H\|_F$ (Frobenius), and $Z^*=\tau\,u_1v_1^\top$ (nuclear).
Throughout, $\nabla_X f$ denotes the Euclidean gradient of $f$ at $X\in\M$.

\subsection{Summary of closed-form solutions}\label{app:summary_table}

\Cref{tab:solutions_app} collects the closed-form intrinsic LMO solutions on four manifolds under three norms.

\begin{table}[h]
\caption{Closed-form intrinsic LMO solutions $\xi^*$ (up to the norm bound~$\tau$) on four manifolds.
$H_B=\nabla_B f\,(AA^\top)^{-1/2}$ and $H_A=(B^\top B)^{-1/2}\nabla_A f$ are the scaled factor gradients;
$H=X^{1/2}(\nabla_X f)X^{1/2}$ is the scaled SPD gradient;
$S=\Skew(X^\top\nabla_X f)$ and $N=X_\perp^\top\nabla_X f$ are the Stiefel skew and normal blocks;
$\hat{H}=(I-XX^\top)\nabla_X f$ is the Grassmann horizontal gradient.
}
\label{tab:solutions_app}
\centering
\small
\resizebox{\textwidth}{!}{%
\begin{tabular}{llccc}
\toprule
\textbf{Manifold} &  $G_x^{1/2}(\xi)$ & $\varphi=\|\cdot\|_2$ (spectral) & $\varphi=\|\cdot\|_F$ (Frobenius) & $\varphi=\|\cdot\|_*$ (nuclear) \\
\midrule
$\M_r$ ($\dot{B}$-block) & 
$\dot{B}(AA^\top)^{1/2}$ &
$\Ortho(H_{B})\,(AA^\top)^{-1/2}$ &
$\nabla_{B}f\,(AA^\top)^{-1}/\|\cdot\|$ &
$u_1 v_1^\top(AA^\top)^{-1/2}$ \\[4pt]
$S_{++}^n$ & 
$X^{-1/2}\xi\,X^{-1/2}$ &
$X^{1/2}\sign(H)X^{1/2}$ &
$X(\nabla_X f)X/\|\cdot\|_X$ &
$\sign(\lambda_1)\,X^{1/2}q_1 q_1^\top X^{1/2}$ \\[4pt]
$\text{St}(m,r)$ & 
$I$  &
$X\,\Ortho(S) + X_\perp\,\Ortho(N)$ &
$X(S/\|S\|_F) + X_\perp(N/\|N\|_F)$ &
rank-2 skew $+$ rank-1 \\[4pt]
$\text{Gr}(m,r)$ & 
$I$ &
$\Ortho(\hat{H})$ &
$\hat{H}/\|\hat{H}\|_F$ &
$u_1 v_1^\top$ of $\hat{H}$ \\
\bottomrule
\end{tabular}%
}
\end{table}

\subsection{Computing the polar factor}\label{app:ortho}

\noindent
\paragraph{Newton--Schulz iterations.}
All spectral-norm instances require computing the polar factor $\Ortho(M)=UV^\top$ from $M=U\Sigma V^\top$.
Following~\citet{jordan2024muon}, this can be approximated by Newton--Schulz iterations:
$X_0=M/\|M\|_F$, \; $X_{k+1}=\frac{3}{2}X_k-\frac{1}{2}X_k X_k^\top X_k$.
Each iteration costs $O(pr^2)$ for $M\in\RR^{p\times r}$ and converges cubically when $\|X_0\|_2<\sqrt{3}$, which is ensured by the normalization since $\|X_0\|_2=\|M\|_2/\|M\|_F\le 1$.
In practice, 5--10 iterations suffice.

The choice of algorithm depends on the shape and size of $M$:

\noindent
\emph{Tall-skinny or short-wide matrices} ($M\in\RR^{p\times r}$ with $r\ll p$): dominant case on $\M_r$, $\text{Gr}(m,r)$, and the normal block of $\text{St}(m,r)$.
Thin SVD costs $O(pr^2)$; Newton--Schulz avoids the SVD and is GPU-friendly.

\noindent
\emph{Small square matrices} ($M\in\RR^{r\times r}$): arises for the skew block on $\text{St}(m,r)$.
For skew-symmetric $S$, the real Schur decomposition gives $\Ortho(S)$ directly and preserves skew-symmetry exactly.

\noindent
\emph{Large square matrices} ($M\in\RR^{n\times n}$): arises on $S_{++}^n$ where the ``polar factor'' is the matrix sign $\sign(H)$.
Newton iterations $X_{k+1}=\frac{1}{2}(X_k+X_k^{-1})$ converge quadratically; eigendecomposition is also standard.

\subsection{Fixed-rank manifold $\M_r$}\label{app:impl_lowrank}

\paragraph{Derivation (all three norms).}
On $\M_r$ with the coupled metric, $G_x^{1/2}(\dot{B},\dot{A})=(\dot{B}(AA^\top)^{1/2},(B^\top B)^{1/2}\dot{A})$.
The LMO decouples into independent $\dot{B}$ and $\dot{A}$ subproblems via~\eqref{eqn:whitened_lmo_product}.
For the $\dot{B}$-block, the scaled gradient is $H_B=\nabla_B f\,(AA^\top)^{-1/2}$ where $\nabla_B f=\nabla_X f A^\top$.
By \Cref{prop:tangent_compat}, $Z_B^*$ is the unconstrained maximizer of $\inner{Z_B}{H_B}$ subject to $\varphi(Z_B)\le\tau$, and $\dot{B}^*=Z_B^*(AA^\top)^{-1/2}$.
The three norms give:
\begin{itemize}
\item \emph{Spectral} ($\varphi=\|\cdot\|_2$): $Z_B^*=\tau\,\Ortho(H_B)$, so $\dot{B}^*=\tau\,\Ortho(\nabla_B f\,(AA^\top)^{-1/2})\,(AA^\top)^{-1/2}$.
\item \emph{Frobenius} ($\varphi=\|\cdot\|_F$): $Z_B^*=\tau\,H_B/\|H_B\|_F$, so $\dot{B}^*=\tau\,\nabla_B f\,(AA^\top)^{-1}/(\|H_B\|_F)$.
This is the Riemannian preconditioned gradient of~\citet{mishra2016riemannian,zhang24riemannianlora}, normalized to unit intrinsic Frobenius norm.
\item \emph{Nuclear} ($\varphi=\|\cdot\|_*$): $Z_B^*=\tau\,u_1v_1^\top$ from the SVD of $H_B$, so $\dot{B}^*=\tau\,u_1v_1^\top(AA^\top)^{-1/2}$ (rank-$1$ update per block).
\end{itemize}
The $\dot{A}$-block follows by symmetry with $H_A=(B^\top B)^{-1/2}\nabla_A f$.

\paragraph{QR-based implementation (spectral norm).}
We show that $\dot{B}=\tau\,\Ortho(H_{B})\,(AA^\top)^{-1/2}$ can be computed using only QR factors.
Let $A^\top=\hat{Q}_{A}R_{A}$ be a thin QR factorization ($\hat{Q}_{A}\in\RR^{n\times r}$ with orthonormal columns, $R_{A}\in\RR^{r\times r}$ upper triangular with positive diagonal); similarly $B=\hat{Q}_{B}R_{B}$.
The scaled factor gradient is the $m\times r$ matrix
\[
H_{B} = \nabla_{B}f\,(AA^\top)^{-1/2} = \nabla_X fA^\top(AA^\top)^{-1/2} = \nabla_X f\hat{Q}_{A}R_{A}(AA^\top)^{-1/2}.
\]
Writing $Q_{A}:=(AA^\top)^{-1/2}A\in\RR^{r\times n}$ (orthonormal rows), we have $H_{B}=\nabla_X f Q_{A}^\top$.
Since $Q_{A}^\top$ and $\hat{Q}_{A}$ span the same column space, $Q_{A}^\top=\hat{Q}_{A}\Omega$ for some $\Omega\in O(r)$.
Thus $\Ortho(H_{B})=\Ortho(\nabla_X f\hat{Q}_{A})\,\Omega$, and on the quotient manifold the right factor $\Omega(AA^\top)^{-1/2}$ can be replaced by $R_{A}^{-\top}$ (since both give the same ambient update $\dot{B}A$).
In practice we compute $\dot{B}=\tau\,\Ortho(\nabla_X f\hat{Q}_{A})\,R_{A}^{-\top}$ and $\dot{A}=\tau\,R_{B}^{-1}\,\Ortho(\hat{Q}_{B}^\top\nabla_X f)$.

\noindent\textbf{Step-by-step procedure} ($\dot{B}$ computation; $\dot{A}$ is symmetric):
\begin{enumerate}
\item \textbf{QR factorization}: $A^\top = \hat{Q}_{A}R_{A}$; similarly $B=\hat{Q}_{B}R_{B}$. \hfill $O((m+n)r^2)$.
\item \textbf{Projected gradient}: $\nabla_X f\,\hat{Q}_{A}\in\RR^{m\times r}$. \hfill $O(mr^2)$.
\item \textbf{Polar factor}: $\Ortho(\nabla_X f\,\hat{Q}_{A})$ via Newton--Schulz. \hfill $O(mr^2)$ per iteration.
\item \textbf{Triangular solve}: $\dot{B} = \tau\,\Ortho(\nabla_X f\,\hat{Q}_{A})\,R_{A}^{-\top}$. \hfill $O(mr^2)$.
\item \textbf{Retract}: $B_{+} = B - \eta\,\dot{B}$, $A_{+} = A - \eta\,\dot{A}$. \hfill $O((m+n)r)$.
\end{enumerate}
\noindent\textbf{Total}: $O((m+n)r^2+r^3)$ per step.
No eigendecomposition or matrix square root is needed.

\subsection{SPD manifold $S_{++}^n$}\label{app:impl_spd}

\paragraph{Derivation (all three norms).}
With the affine-invariant metric, $G_X^{1/2}\xi=X^{-1/2}\xi X^{-1/2}$ and $\text{grad}\,f(X)=X(\nabla_X f)X$.
The scaled gradient is $H=G_X^{1/2}(\text{grad}\,f)=X^{1/2}(\nabla_X f)X^{1/2}\in\text{Sym}(n)$.
Since $\mathcal{Z}_X=\text{Sym}(n)$ is SV-invariant, \Cref{prop:tangent_compat} applies.
The three norms give $Z^*$ and $\xi^*=X^{1/2}Z^*X^{1/2}$:
\begin{itemize}
\item \emph{Spectral}: $Z^*=\tau\,\sign(H)=\tau\,Q\diag(\sign(\lambda_i))Q^\top$, so $\xi^*=\tau\,X^{1/2}\sign(H)X^{1/2}$.
\item \emph{Frobenius}: $Z^*=\tau\,H/\|H\|_F$, so $\xi^*=\tau\,\text{grad}\,f/\|\text{grad}\,f\|_X$ (the normalized natural gradient).
\item \emph{Nuclear}: $Z^*=\tau\,\sign(\lambda_1)\,q_1q_1^\top$, so $\xi^*=\tau\,\sign(\lambda_1)\,X^{1/2}q_1q_1^\top X^{1/2}$ (rank-$1$ symmetric).
\end{itemize}
All three are $\text{GL}(n)$-invariant by \Cref{prop:symmetry}.

\paragraph{Implementation (spectral norm).}
\begin{enumerate}
\item \textbf{Matrix square root}: Compute $X^{1/2}$ and $X^{-1/2}$ via eigendecomposition of $X$. \hfill $O(n^3)$.
\item \textbf{Scaled gradient}: $H = X^{1/2}\,\nabla_X f\,X^{1/2}$. \hfill $O(n^3)$.
\item \textbf{Matrix sign}: $\sign(H) = Q\diag(\sign(\lambda_i))Q^\top$ via eigendecomposition, or Newton iterations $Z_{k+1}=\frac{1}{2}(Z_k+Z_k^{-1})$. \hfill $O(n^3)$.
\item \textbf{Invert}: $\xi^* = \tau\,X^{1/2}\,\sign(H)\,X^{1/2}$. \hfill $O(n^3)$.
\item \textbf{Retract}: $X_{+} = X^{1/2}\exp(-\eta\,X^{-1/2}\xi^* X^{-1/2})\,X^{1/2}$. \hfill $O(n^3)$.
\end{enumerate}
\noindent\textbf{Total}: $O(n^3)$ per step, dominated by eigendecompositions.

\subsection{Stiefel manifold $\text{St}(m,r)$}\label{app:impl_stiefel}

\paragraph{Derivation (all three norms).}
With $G_X=I$, the tangent space decomposes as $\xi=XA+X_\perp B$ with $A\in\text{Skew}(r)$, $B\in\RR^{(m-r)\times r}$.
Under the product norm $\max(\varphi(A),\varphi(B))\le\tau$, the LMO decouples into skew and normal blocks with $S=\Skew(X^\top\nabla_X f)$ and $N=X_\perp^\top\nabla_X f$.
\begin{itemize}
\item \emph{Spectral}: $A^*=\tau\,\Ortho(S)$ (skew-orthogonal), $B^*=\tau\,\Ortho(N)$.
Result: $\xi^*=\tau(X\,\Ortho(S)+X_\perp\,\Ortho(N))$ (\Cref{lemma:stiefel_spectral}).
\item \emph{Frobenius}: $A^*=\tau\,S/\|S\|_F$, $B^*=\tau\,N/\|N\|_F$ (block-wise normalized).
\item \emph{Nuclear}: $A^*=\tau$ times the leading conjugate pair of $S$ (rank-$2$ skew-symmetric with $\|A^*\|_*=\tau$); $B^*=\tau\,u_1v_1^\top$ of $N$ (rank-$1$).
\end{itemize}

\paragraph{Implementation (spectral norm).}
$X_\perp\in\RR^{m\times(m-r)}$ is never formed explicitly; instead $X_\perp\,\Ortho(X_\perp^\top\nabla_X f)=\Ortho((I-XX^\top)\nabla_X f)$.
\begin{enumerate}
\item \textbf{Skew gradient}: $P = X^\top\nabla_X f$, then $S = \frac{1}{2}(P - P^\top)$. \hfill $O(mr^2)$.
\item \textbf{Skew polar factor}: $\Ortho(S)$ via real Schur decomposition. \hfill $O(r^3)$.
\item \textbf{Normal gradient}: $\hat{N} = \nabla_X f - X\,P$. \hfill $O(mr^2)$.
\item \textbf{Normal polar factor}: $\Ortho(\hat{N})$ via Newton--Schulz. \hfill $O(mr^2)$.
\item \textbf{Assemble}: $\xi^* = \tau[X\,\Ortho(S) + \Ortho(\hat{N})]$. \hfill $O(mr)$.
\item \textbf{Retract}: $X_{+} = \qf(X - \eta\,\xi^*)$. \hfill $O(mr^2)$.
\end{enumerate}
\noindent\textbf{Total}: $O(mr^2+r^3)$ per step. Steps 2 and 4 can be parallelized.

\subsection{Grassmann manifold $\text{Gr}(m,r)$}\label{app:impl_grassmann}

\paragraph{Derivation (all three norms).}
With $G_X=I$ and $X^\top X=I_r$, the horizontal space $T_X\text{Gr}=\{\xi:X^\top\xi=0\}$ is SV-invariant.
The horizontal gradient is $\hat{H}=(I-XX^\top)\nabla_X f$.
\begin{itemize}
\item \emph{Spectral}: $\xi^*=\tau\,\Ortho(\hat{H})$ ,  orthonormal columns in $\text{col}(X)^\perp$.
\item \emph{Frobenius}: $\xi^*=\tau\,\hat{H}/\|\hat{H}\|_F$ ,  normalized Riemannian gradient on $\text{Gr}(m,r)$~\citep{edelman1998geometry}.
\item \emph{Nuclear}: $\xi^*=\tau\,u_1v_1^\top$ from the SVD of $\hat{H}$ ,  rank-$1$ horizontal update.
\end{itemize}

\paragraph{Implementation (spectral norm).}
\begin{enumerate}
\item \textbf{Horizontal projection}: $\hat{H} = \nabla_X f - X\,(X^\top\nabla_X f)$. \hfill $O(mr^2)$.
\item \textbf{Polar factor}: $\Ortho(\hat{H})$ via Newton--Schulz. \hfill $O(mr^2)$.
\item \textbf{Assemble}: $\xi^* = \tau\,\Ortho(\hat{H})$. \hfill $O(mr)$.
\item \textbf{Retract}: $X_{+} = \qf(X - \eta\,\xi^*)$. \hfill $O(mr^2)$.
\end{enumerate}
\noindent\textbf{Total}: $O(mr^2)$ per step. The simplest instantiation: $G_X=I$ eliminates scaling and inversion.

\section{Prior Convergence Results as Special Cases}\label{app:connections}

We show how several known rates arise as special cases of Theorems~\ref{thm:general} and~\ref{thm:stochastic}.

\noindent
\paragraph{Euclidean spectral steepest descent~\citep{carlson2015psd,bernstein2025old}.}
Setting $\M=\RR^{p\times q}$ (unconstrained), $G_x=I$, and $\varphi=\|\cdot\|_2$ recovers the Euclidean Muon update $\xi^*=\tau\,\Ortho(\nabla f)$.
The use of this spectral-norm steepest-descent operator for deep learning was first proposed by \citet{carlson2015psd}, who also combined it with element-wise adaptive preconditioning; \citet[Prop.~5]{bernstein2025old} later formalize the connection to steepest descent.
The squared Riemannian radius is $C_\varphi=\min(p,q)$ (rank of $\Ortho(\nabla f)$), and \Cref{thm:general} gives the nonasymptotic rate $O(\sqrt{L\min(p,q)/T})$.

\noindent
\paragraph{Standard Riemannian gradient descent~\citep{boumal2023intromanifolds}.}
Setting $\varphi=\|\cdot\|_F$ on any manifold gives $\xi^*\propto\text{grad}\,f/\|\text{grad}\,f\|_x$ and $C_\varphi=1$.
\Cref{thm:general} yields $\min_t\|\text{grad}\,f(x_t)\|_{x_t}\le\sqrt{2L\Delta_0/T}$, which is the standard retraction-smooth Riemannian GD rate~\citep[Cor.~4.9]{boumal2023intromanifolds} (with the normalized-gradient step replacing the unnormalized one; the rates coincide up to constants when $\eta$ is optimized).

\noindent
\paragraph{Factor-wise Muon on $\M_r$~\citep{kang2026spectral}.}
\citet{kang2026spectral} analyze factor-wise Muon (SpecGF) on $\M_r$ with $\varphi=\|\cdot\|_2$.
Since their orthogonalization operator satisfies $\|T(\cdot)\|_2=1$, the ambient update norm satisfies $\|\dot{X}_t\|_F^2\le 2\tau^2(\|A_t\|_F^2+\|B_t\|_F^2)$, which depends on factor conditioning.
Accordingly, their main convergence result (Theorem~6.2) is conditional: SpecGF either converges to a global minimum or $\|A_t\|_F^2+\|B_t\|_F^2\to\infty$; unconditional global convergence is established only with $\ell_2$ regularization.
Our quotient formulation sidesteps this issue entirely: the Gram-root scaling yields a rank-only bound $\|\xi^*\|_x^2\le 2r\tau^2$ with $C_\varphi=2r$ (\Cref{cor:lowrank}, computed in \Cref{app:proof_lowrank_rate}), giving a nonasymptotic rate $O(\sqrt{Lr/T})$ without any bounded-factor-norm condition.

\noindent
\paragraph{NuMuon~\citep{dolatabadi2026numuon}.}
Setting $G_x=I$ and $\varphi=\|\cdot\|_*$ gives $\xi^*=\tau\,u_1v_1^\top$ (rank-$1$ update) with $C_\varphi=1$.
\Cref{thm:general} gives rate $O(\sqrt{L/T})$ with convergence in $\varphi^\circ=\|\cdot\|_2$ (spectral norm of the gradient).
NuMuon~\citep{dolatabadi2026numuon} uses a joint spectral-and-nuclear-norm constraint ($\|O\|_2\le\rho$, $\|O\|_*\le\tau$), yielding a rank-$k$ update ($k=\lfloor\tau/\rho\rfloor$); their Theorem~3.6 gives a rate with an $Lk$-dependent term and convergence in $\|\nabla f\|_*$ (nuclear norm of the gradient).
Our pure nuclear instance ($k=1$) recovers the rank-$1$ special case.
On manifolds with $G_x\neq I$, our quotient extensions add symmetry invariance while preserving $C_\varphi=1$.

\noindent
\paragraph{General Euclidean LMOs~\citep{pethick2025lmo}.}
\citet{pethick2025lmo} study LMOs constrained by general norms (including spectral, RMS, and sign norms) in Euclidean space ($G_x=I$), and propose Scion, a unified algorithmic family.
Their spectral-norm instance corresponds to our framework with $G_x=I$ on $\RR^{p\times q}$.
Our contribution beyond their setting is the non-trivial metric $G_x\neq I$ (which introduces the metric-scaling and inversion steps and the symmetry proposition) and the per-manifold closed forms.

\noindent
\paragraph{The role of $C_\varphi$.}
Across all these instances, the squared Riemannian radius $C_\varphi$ is the unifying quantity: it is $\min(p,q)$ for Euclidean Muon, $1$ for Frobenius/nuclear norms, $2r$ for quotient-Muon on $\M_r$, and iterate-dependent (scaling with $\|A_t\|_F^2+\|B_t\|_F^2$) for factor-wise Muon.
The spectral norm always gives the largest $C_\varphi$ but the strongest convergence criterion ($\varphi^\circ=\|\cdot\|_*$); the nuclear norm gives the smallest $C_\varphi$ but the weakest criterion ($\varphi^\circ=\|\cdot\|_2$).

\section{Detailed Comparisons with Prior Methods}\label{app:comparisons}

In the main text (\Cref{sec:solutions}), we briefly compare our framework to prior methods on the fixed-rank manifold.
Here we provide detailed calculations and side-by-side comparisons on all four manifolds.

\subsection{Fixed-rank manifold $\M_r$: iMuon vs.\ Spectron, Riemannion, and factor-wise Muon}\label{app:compare_lowrank}

We compare three approaches to spectral-norm optimization on the rank-$r$ manifold $\M_r$ with factorization $X=BA$.
Throughout, $\nabla_X f$ denotes the Euclidean gradient.

\noindent
\paragraph{Prior methods.}
\emph{Factor-wise Muon} applies the standard Euclidean Muon \citep{jordan2024muon,bernstein2025modular,bernstein2025old} polar factor $\Ortho(\cdot)$ to each factor's gradient independently: $\dot{B}=\tau\,\Ortho(\nabla_X fA^\top)$ and $\dot{A}=\tau\,\Ortho(B^\top\nabla_X f)$.
This is simple and requires no Riemannian structure, but the resulting update $\dot{X}=\dot{B}A+B\dot{A}$ has $\|\dot{X}\|_F^2\propto\|A\|_F^2+\|B\|_F^2$, which depends on factor conditioning.  {%
A closely related theoretical analysis is \citet{kang2026spectral}, who study the \emph{continuous-time} spectral gradient flow (SpecGF, the momentum-free limit of factor-wise spectral GD with a smoothed orthogonalization map $T_\varepsilon$) on $L(B,A)=\|BA-Y\|_F^2$ rather than discrete-time factor-wise Muon directly. Their convergence guarantee is stated as a \emph{dichotomy} (Theorem~6.2): for almost every initialization, SpecGF either converges to a global minimum of $L$, or the factor norms $\|B(t)\|_F^2+\|A(t)\|_F^2$ diverge to infinity. The standard gradient-flow balancing invariant $A^\top A-BB^\top$ that would otherwise control factor growth is not preserved under SpecGF (Proposition~D.10), leaving \emph{unconditional} global convergence without regularization as an explicit open question. \citet[Proposition~6.7]{kang2026spectral} establishes an exponential convergence rate \emph{conditional on} the convergent branch and on simple singular values of $Y$; \citep[Section~D.6]{kang2026spectral} closes the gap with $\ell_2$ regularization, which keeps factors bounded by construction. Our analysis (\Cref{thm:general,cor:lowrank}) operates instead in discrete time on the quotient manifold $\M_r$ with momentum, removes the bounded-factor-norm requirement, and provides a non-asymptotic rate that depends on the manifold dimension only.}

\emph{Spectron}~\citep{janson2026stabilizing} addresses the same factor-growth pathology from a practical angle.
Like factor-wise Muon, Spectron orthogonalizes each factor's momentum via Newton--Schulz iterations, producing $\Ortho(M_A)$ and $\Ortho(M_B)$.
The key innovation is a \emph{spectral renormalization} step that adaptively scales the updates to control the composite update norm.
Since the ambient update satisfies $\|\dot{X}\|_2\le\rho(\|A\|_2+\|B\|_2+1)$ (via submultiplicativity and the triangle inequality), Spectron sets the per-factor constraint radius to
$\rho=\eta/(\|A\|_2+\|B\|_2+1)$,
yielding factor updates $\dot{A}=\rho\,\Ortho(M_A)$ and $\dot{B}=\rho\,\Ortho(M_B)$, which guarantee $\|\dot{X}\|_2\le\eta$.
The spectral norms $\|A\|_2,\|B\|_2$ are estimated at each iteration via a single power iteration step.
This makes training stable and enables native low-rank pretraining that matches dense-model performance.

Spectron and our method address the same instability but through fundamentally different mechanisms.
Spectron is an \emph{extrinsic} fix: it applies standard (Euclidean) orthogonalization to each factor and then rescales the result by an iterate-dependent quantity $1/(\|A\|_2+\|B\|_2+1)$ to bound the ambient update.
The rescaling shrinks as the factors grow, which is effective in practice but means the effective step size is not a fixed hyperparameter, it depends on the current factor norms.
Our method is \emph{intrinsic}: the Gram-root preconditioning $(AA^\top)^{-1/2}$ from the coupled metric absorbs the factor scaling \emph{before} orthogonalization, so the resulting update is $\text{GL}(r)$-invariant by construction (\Cref{prop:symmetry}) and the ambient update norm is bounded by $2r\tau^2$ regardless of factor magnitudes (\Cref{app:proof_lowrank_rate}), with no power iteration or adaptive rescaling needed.

\emph{Riemannion}~\citep{bogachev2026riemannion} takes an \emph{embedded-manifold} approach.
It optimizes $X$ directly on $\M_r=\{X\in\RR^{m\times n}:\rank(X)=r\}$, parameterized via a compact SVD $X=A_L\Sigma_r B_R^\top$ with $A_L\in\text{St}(m,r)$, $B_R\in\text{St}(n,r)$, and $\Sigma_r\in\RR^{r\times r}$ diagonal positive. At a point $X\in\M_r$, the tangent space $T_X\M_r$ consists of rank-$\le 2r$ matrices; in SVD coordinates, $\dot{X}=A_L M B_R^\top + U_\perp B_R^\top + A_L V_\perp^\top$ for $M\in\RR^{r\times r}$, $U_\perp\perp A_L$, and $V_\perp\perp B_R$. Riemannion maintains a momentum $M_t\in T_X\M_r$ (rank $\le 2r$) via Riemannian heavy-ball updates with vector transport.
To generalize Muon, it applies a rank-$2r$ orthogonalization $\Ortho_r(M_t)$, replacing only the first $2r$ singular values of $M_t$ with $1$ while setting the rest to $0$, and then projects back onto the tangent space:
$\tilde{M}_t=P_{T_X\M_r}(\Ortho_r(M_t))$.
Because $\Ortho_r$ preserves the column and row spaces of $M_t\in T_X\M_r$, the result is a good approximation (the authors report singular values in $(0.9,1.1)$ in practice), but not exact.
The iterate is then updated via retraction: $X_{t+1}=R_X(-\eta\,\tilde{M}_t)$.

We can equivalently describe the entire procedure in the $X=BA$ notation used throughout this section.
The embedded tangent space is $T_X\M_r=\{\dot{B}A+B\dot{A}:\dot{B}\in\RR^{m\times r},\;\dot{A}\in\RR^{r\times n}\}$, a subspace of rank-$\le 2r$ matrices in $\RR^{m\times n}$.
The orthogonal projection of any $Z\in\RR^{m\times n}$ onto $T_X\M_r$ (under the Euclidean metric $G_X=I$) is
$P_{T_X\M_r}(Z)=P_B\,Z + Z\,P_A - P_B\,Z\,P_A$,
where $P_B=B(B^\top B)^{-1}B^\top$ projects onto $\text{col}(B)$ and $P_A=A^\top(AA^\top)^{-1}A$ projects onto $\text{row}(A)$.
Riemannion's update in $BA$ language is then: (i)~maintain momentum $M_t\in T_X\M_r$ (rank $\le 2r$); (ii)~orthogonalize $\Ortho_r(M_t)$ (set all $2r$ nonzero singular values to $1$); (iii)~project back via $\tilde{M}_t = P_B\,\Ortho_r(M_t) + \Ortho_r(M_t)\,P_A - P_B\,\Ortho_r(M_t)\,P_A$; (iv)~retract $X_{t+1}=R_X(-\eta\,\tilde{M}_t)$.
The projection in step~(iii) couples $B$ and $A$ through $P_B$ and $P_A$: the result depends on both factors simultaneously, and there is no block-diagonal decoupling into independent $\dot{B}$ and $\dot{A}$ subproblems.
Moreover, the projection perturbs the singular values away from $1$, so the spectral-norm constraint $\|\tilde{M}_t\|_2\le 1$ is only approximately satisfied.

Because Riemannion works on $\M_r$ directly (not on factors $(A,B)$), it is reparameterization-invariant: different factorizations of the same $X$ yield the same update.
However, it uses the Euclidean metric ($G_X=I$) on the embedded manifold, which provides no preconditioning, the convergence behavior still depends on the conditioning of $X$ (i.e., $\sigma_1(X)/\sigma_r(X)$).
To obtain an exact spectral-norm solution, Riemannion would need to solve the tangent-constrained LMO $\max_{S\in T_X\M_r,\,\|S\|_2\le 1}\inner{M_t}{S}$, which has no closed form because $T_X\M_r$ is not SV-invariant in the embedded viewpoint (\Cref{app:lowrank_embedded}).

\emph{Our method (iMuon)} resolves both issues by working on the \emph{quotient} $\M_r=(\RR^{m\times r}_*\times\RR^{r\times n}_*)/\text{GL}(r)$ with the coupled metric.
The intrinsic scaling $G_x^{1/2}$ converts the factor gradients into Euclidean coordinates via the Gram-root preconditioning $(AA^\top)^{-1/2}$, and the product-norm decoupling yields independent closed-form LMOs on each factor block.
Unlike Riemannion, the spectral-norm constraint is \emph{exactly} satisfied on each block (no approximation), and the coupled metric provides automatic preconditioning that makes the convergence rate independent of factor conditioning ($C_\varphi=2r$).
Unlike Spectron, no power iteration or adaptive rescaling is needed, the $\text{GL}(r)$-invariance of the intrinsic norm handles factor-scale dependence algebraically rather than numerically.

\noindent
\paragraph{Side-by-side update rules.}

\begin{center}
\small
\begin{tabular}{lll}
\toprule
\textbf{Method} & $\dot{B}$ & $\dot{A}$ \\
\midrule
Factor-wise Muon & $\tau\,\Ortho(\nabla_X fA^\top)$ & $\tau\,\Ortho(B^\top\nabla_X f)$ \\[3pt]
Spectron & $\rho\,\Ortho(M_B)$ & $\rho\,\Ortho(M_A)$ \\[3pt]
Riemannion & \multicolumn{2}{c}{$\dot{X}=P_{T_{X}\M_r}(\Ortho_r(M_t))$, embedded manifold with $G_X=I$} \\[3pt]
Ours (iMuon) & $\tau\,\Ortho(\nabla_X f\hat{Q}_{A})\,R_{A}^{-\top}$ & $\tau\,R_{B}^{-1}\,\Ortho(\hat{Q}_{B}^\top\nabla_X f)$ \\
\bottomrule
\end{tabular}
\end{center}
where $A^\top=\hat{Q}_{A}R_{A}$ and $B=\hat{Q}_{B}R_{B}$ are thin QR factorizations, $M_A,M_B$ are per-factor momentum buffers, and $\rho=\eta/(\|A\|_2+\|B\|_2+1)$ is Spectron's adaptive constraint radius.
The key differences are:
\begin{itemize}
\item \emph{Ortho input:} factor-wise Muon and Spectron both apply $\Ortho$ to raw factor gradients (or momenta).
Our method applies $\Ortho$ to the \emph{scaled} gradient $\nabla_X f\hat{Q}_{A}$, which projects $\nabla_X f$ onto the row space of $A$ with orthonormalized rows, a different polar direction in general.
\item \emph{Norm control mechanism:} factor-wise Muon uses a fixed $\tau$; Spectron uses the adaptive radius $\rho$ that shrinks as factors grow; Riemannion uses tangent space projection after rank-$2r$ orthogonalization (approximate).
Our method uses the Gram-root preconditioning $R_A^{-\top}$ (equivalently $(AA^\top)^{-1/2}$), which absorbs the factor scale \emph{before} orthogonalization.
\item \emph{Reparameterization invariance:} our update is $\text{GL}(r)$-invariant by \Cref{prop:symmetry}.
Riemannion is also reparameterization-invariant (it works on $\M_r$ directly), but violates the spectral-norm constraint after projection.
Factor-wise Muon and Spectron are not reparameterization-invariant: different factorizations of the same $X$ yield different updates.
Spectron compensates via the adaptive $\rho$, which bounds the \emph{ambient} update norm but does not make the \emph{direction} invariant.
\end{itemize}

\noindent
\paragraph{Ambient update norm comparison.}
The ambient update $\dot{X}=\dot{B}A+B\dot{A}$ enters the convergence rate through the descent lemma.
We compute $\|\dot{X}\|_F^2$ for each method:

\emph{Our method.}
Define $Q_{A}=(AA^\top)^{-1/2}A\in\RR^{r\times n}$, which has orthonormal rows ($Q_{A}Q_{A}^\top=I_r$).
Then $\dot{B}A=\tau\,\Ortho(H_{B})\,(AA^\top)^{-1/2}A=\tau\,\Ortho(H_{B})\,Q_{A}$.
Since $\Ortho(H_{B})\in\RR^{m\times r}$ has orthonormal columns:
\[
\|\dot{B}A\|_F^2 = \tau^2\,\tr(Q_{A}^\top\,I_r\,Q_{A}) = \tau^2\,\tr(I_r) = \tau^2 r.
\]
Similarly $\|B\dot{A}\|_F^2=\tau^2 r$.
By the triangle inequality: $\|\dot{X}\|_F\le 2\tau\sqrt{r}$, so $\|\dot{X}\|_F^2\le 4r\tau^2$.
This bound depends only on the rank $r$, not on the factor norms, the Gram root converts $A$ into the partial isometry $Q_{A}$, exactly canceling the factor norm dependence.

\emph{Factor-wise Muon.}
$\dot{B}A=\tau\,\Ortho(\nabla_X fA^\top)A$.
Since $\Ortho(\nabla_X fA^\top)$ has orthonormal columns, $\|\dot{B}A\|_F^2=\tau^2\|A\|_F^2$.
Similarly $\|B\dot{A}\|_F^2=\tau^2\|B\|_F^2$.
Thus $\|\dot{X}\|_F^2\le 2\tau^2(\|A\|_F^2+\|B\|_F^2)$, \emph{grows with factor norms}.
This iterate-dependent bound is the reason the convergence analysis of \citet{kang2026spectral} yields only a conditional asymptotic guarantee (converge or diverge) rather than a nonasymptotic rate; our method removes the dependence entirely.

\emph{Riemannion.}
$\dot{X}=P_{T_{X}\M_r}(\Ortho_r(M_t))$, where $\Ortho_r$ replaces the first $2r$ singular values of $M_t$ with $1$.
The tangent space projection after rank-$2r$ orthogonalization perturbs singular values (they lie in $(0.9,1.1)$ rather than exactly $1$), so the spectral-norm constraint $\|\dot{X}\|_2\le\tau$ is only approximately satisfied.
Moreover, since the Euclidean metric provides no Gram-root scaling, the update norm $\|\dot{X}\|_F^2$ depends on the conditioning of $X$ ($\sigma_1/\sigma_r$), no condition-number-free bound analogous to our $\|\xi^*\|_x^2\le 2r\tau^2$ is available.

\noindent
\paragraph{Spectral-norm control.}
Our method also controls the spectral norm of the ambient update: $\|\dot{X}\|_2\le\|\dot{B}A\|_2+\|B\dot{A}\|_2\le 2\tau$, since $\|\dot{B}A\|_2=\|\Ortho(H_{B})\,Q_{A}\|_2\le 1\cdot 1\cdot\tau=\tau$ (orthonormal columns times orthonormal rows).
Factor-wise Muon gives $\|\dot{X}\|_2\le\tau(\|A\|_2+\|B\|_2)$, which can be arbitrarily large.

\noindent
\paragraph{Nuclear norm ($\varphi=\|\cdot\|_*$).}
The nuclear instance replaces $\Ortho(H_{B})$ with the leading left/right singular vectors $u_1v_1^\top$ of $H_{B}$, giving a rank-$1$ update per factor.
The ambient update $\dot{X}=\dot{B}A+B\dot{A}$ has rank $\le 2$ and $\|\dot{X}\|_F^2\le 4\tau^2$ (each rank-$1$ block contributes $\tau^2$, independent of factor norms).
This is the intrinsic analogue of NuMuon~\citep{dolatabadi2026numuon}: NuMuon constrains the update with both a spectral-norm cap $\rho$ and a nuclear-norm budget $\tau$, yielding a rank-$k$ update ($k=\lfloor\tau/\rho\rfloor$) from the top-$k$ singular vectors of the momentum. In the rank-$1$ special case ($k=1$), i.e., $\tau = \rho$, NuMuon produces $u_1v_1^\top$, while our nuclear instance applies the rank-$1$ truncation to the \emph{scaled} gradient $H_{B}=\nabla_{B}f\,(AA^\top)^{-1/2}$, inheriting the same $\text{GL}(r)$-invariance and condition-number-free bound as the spectral instance.

\subsection{SPD manifold $S_{++}^n$: spectral LMO vs.\ natural gradient}\label{app:compare_spd}

On the SPD manifold with the affine-invariant metric, the standard Riemannian optimization method is the \emph{natural gradient}~\citep{amari1998natural}, which corresponds to the Frobenius instance ($\varphi=\|\cdot\|_F$) of our framework: $\xi^*\propto \text{grad}\,f/\|\text{grad}\,f\|_X$ where $\text{grad}\,f(X)=X(\nabla_X f)X$.
The natural gradient preserves the relative magnitudes of the scaled gradient's eigenvalues, directing more step budget toward larger-eigenvalue directions.

The spectral instance ($\varphi=\|\cdot\|_2$) of our framework produces a qualitatively different update: the matrix-sign $\xi^*=\tau\,X^{1/2}\sign(H)X^{1/2}$, where $H=X^{1/2}(\nabla_X f)X^{1/2}$.
This equalizes all eigenvalues to $\pm 1$, redistributing the step uniformly across eigendirections, the SPD analogue of how Euclidean Muon replaces the gradient with its polar factor.

The difference is analogous to Muon vs.\ SGD in Euclidean space:
\begin{itemize}
\item The natural gradient preserves the magnitude profile of $H$'s eigenvalues, large eigenvalues of $H$ receive proportionally larger step components.
\item The matrix-sign update equalizes all eigenvalues to $\pm 1$, redistributing the step uniformly across eigendirections.
This is more aggressive (higher squared Riemannian radius $C_\varphi=n$ vs.\ $1$) but drives all eigenvalues of the scaled gradient to zero simultaneously.
\end{itemize}

The naive alternative, applying $\Ortho$ or $\sign$ to the Euclidean gradient $\nabla_X f$ directly (without scaling by $X^{-1/2}$), would not be $\text{GL}(n)$-invariant.
For example, $\sign(\nabla_X f)\neq\sign(N^\top(\nabla_X f)N)$ in general under $X\mapsto NXN^\top$, whereas $\sign(H)=\sign(X^{1/2}(\nabla_X f)X^{1/2})$ transforms covariantly because the scaling absorbs the change of basis.

\noindent
\paragraph{Nuclear norm ($\varphi=\|\cdot\|_*$).}
The nuclear instance selects the leading eigenpair $(q_1,\lambda_1)$ of $H=X^{1/2}(\nabla_X f)X^{1/2}$ and produces a rank-$1$ symmetric update $\xi^*=\tau\,\sign(\lambda_1)\,X^{1/2}q_1q_1^\top X^{1/2}$.
This concentrates the entire norm bound on the single most-curved eigendirection of the scaled gradient.
Compared to the spectral instance ($C_\varphi=n$, all eigendirections active), the nuclear instance has $C_\varphi=1$ and converges in the spectral norm of the gradient ($\varphi^\circ=\|\cdot\|_2$), which is weaker but cheaper.

\subsection{Stiefel manifold $\text{St}(m,r)$: product norm vs.\ joint norm}\label{app:compare_stiefel}

On the Stiefel manifold with $G_X=I$, no prior spectral-norm optimizer exists.
The natural baseline is the \emph{joint-norm} LMO: constrain $\varphi(\xi)\le\tau$ over the full tangent space $T_X\text{St}\subset\RR^{m\times r}$.
However, as discussed in \Cref{app:stiefel}, this tangent space is not SV-invariant, so no closed-form spectral solution exists.

Our framework resolves this via the tangent space decomposition $\xi=XA+X_\perp B$ with the \emph{product norm} $\max(\varphi(A),\varphi(B))\le\tau$.
This is a deliberate relaxation of the joint norm that enables decoupled, closed-form solutions on each block.
We now analyze precisely what this relaxation costs and buys.

\noindent
\paragraph{What the product norm buys.}
\begin{itemize}
\item \emph{Closed-form solutions:} the $A$- and $B$-blocks decouple, each admitting a direct application of \Cref{prop:tangent_compat} (with the skew-symmetric constraint on $A$ handled by the paired singular value structure).
\item \emph{Block-wise control:} the skew block $A$ (which rotates within the current column space) and the normal block $B$ (which moves orthogonally to it) are each independently constrained, preventing one from dominating the other.
\end{itemize}

\noindent
\paragraph{What the product norm costs.}
\begin{itemize}
\item \emph{Relaxation:} the product norm $\max(\varphi(A),\varphi(B))\le\tau$ is a relaxation of the joint norm $\varphi\bigl(\begin{smallmatrix}A\\B\end{smallmatrix}\bigr)\le\tau$.
Specifically, for $\varphi=\|\cdot\|_2$: $\max(\|A\|_2,\|B\|_2)\le\|\bigl(\begin{smallmatrix}A\\B\end{smallmatrix}\bigr)\|_2\le\sqrt{2}\max(\|A\|_2,\|B\|_2)$.
Thus $\max(\|A\|_2,\|B\|_2)\le\tau$ is a relaxation of $\|\xi\|_2\le\tau$, it permits tangent vectors with $\|\xi\|_2$ up to $\sqrt{2}\tau$.
\item \emph{Frobenius discrepancy:} for $\varphi=\|\cdot\|_F$, the product-norm solution normalizes $A$ and $B$ independently (block-wise), while the joint-norm solution normalizes $\xi$ jointly, preserving the relative magnitudes $\|S\|_F/\|N\|_F$.
The two coincide only when $\|S\|_F=\|N\|_F$.
\end{itemize}

The squared Riemannian radius reflects this: $C_\varphi=2\lfloor r/2\rfloor+\min(m-r,r)$ counts both blocks, and is larger than the $C_\varphi$ from a hypothetical joint-norm LMO (which the framework cannot compute in closed form).

\noindent
\paragraph{Nuclear norm ($\varphi=\|\cdot\|_*$).}
The nuclear instance concentrates on the leading singular direction of each block: a rank-$2$ skew-symmetric matrix $A^*$ from the leading conjugate pair of $S=\Skew(X^\top\nabla_X f)$ (with $\|A^*\|_*=\tau$, $\|A^*\|_F^2=2\tau^2$), and a rank-$1$ matrix $B^*=\tau\,u_1v_1^\top$ from the normal block $X_\perp^\top\nabla_X f$.
The total squared Riemannian radius is $C_\varphi=1+1=2$ (one from each block), compared to $2\lfloor r/2\rfloor+\min(m{-}r,r)$ for the spectral norm.
The product-norm decoupling is especially natural here: the skew and normal nuclear updates are geometrically independent (one rotates within the current column space, the other adds a new direction).

\noindent
\paragraph{Comparison with MCSD/SPEL~\citep{yang2026mcsd}.}
MCSD (Manifold Constrained Steepest Descent) is a recent single-loop framework for norm-constrained optimization on embedded submanifolds. {%
Its Stiefel specialization, SPEL, computes the spectral-norm LMO of the Riemannian gradient $\nabla_\M f(X_t)=P_{T_{X_t}\text{St}}(\nabla f(X_t))$ in the \emph{ambient} Euclidean space, $d_t=-\Ortho(\nabla_\M f(X_t))$, and then projects back to $\mathrm{St}(m,r)$ via the polar map: $X_{t+1}=\Ortho(X_t+\eta_t\, d_t)=\Ortho\bigl(X_t-\eta_t\,\Ortho(\nabla_\M f(X_t))\bigr)$.
A defining design choice of MCSD is that the update direction $d_t$ \emph{need not} lie in $T_X\,\mathrm{St}$; manifold feasibility is restored solely by the polar projection step, which is what makes MCSD single-loop.
}

Both SPEL and our Stiefel LMO (\Cref{lemma:stiefel_spectral}) avoid solving the intractable joint-norm tangent space problem $\max_{\xi\in T_X\text{St},\,\|\xi\|_2\le\tau}\inner{\xi}{\nabla f}$, but they use different relaxation strategies:
\begin{itemize}
\item \emph{Our approach} decomposes the tangent space into skew and normal blocks with a product norm $\max(\|A\|_2,\|B\|_2)\le\tau$, solving each block's LMO in closed form.
The update remains in $T_X\text{St}$ by construction, and each block receives independent spectral control.
\item \emph{SPEL} {drops the tangent space membership constraint, solving the LMO over all of $\RR^{m\times r}$, and compensates with the polar manifold projection. This yields a simpler per-iteration formula (one $\Ortho$ of an $m\times r$ matrix to obtain the LMO direction, plus one $\Ortho$ for the polar projection) but couples the skew (rotation within $\mathrm{col}(X)$) and normal (extension into $\mathrm{col}(X)^\perp$) contributions into a single ambient direction. The MCSD framework also extends beyond Stiefel to other embedded submanifolds (Grassmann, sphere) but does not natively handle quotient geometries such as $\M_r$ with the coupled metric.}
\end{itemize}


\subsection{Grassmann manifold $\text{Gr}(m,r)$: connections to subspace tracking}\label{app:compare_grassmann}

The Grassmann manifold $\text{Gr}(m,r)$ represents $r$-dimensional subspaces of $\RR^m$.
With $G_X=I$ and $X^\top X=I_r$, the horizontal space $\{\xi:X^\top\xi=0\}$ is SV-invariant, so \Cref{prop:tangent_compat} applies directly and no product-norm relaxation is needed.
The three norm instances of our framework produce geometrically distinct updates that connect to known methods in subspace estimation:
\begin{itemize}
\item \emph{Spectral} ($\varphi=\|\cdot\|_2$): $\xi^*=\tau\,\Ortho((I-XX^\top)\nabla_X f)$, an $m\times r$ matrix with orthonormal columns in $\text{colspan}(X)^\perp$, updating all $\min(m{-}r,r)$ subspace directions simultaneously.
\item \emph{Frobenius} ($\varphi=\|\cdot\|_F$): $\xi^*=\tau\,(I-XX^\top)\nabla_X f/\|(I-XX^\top)\nabla_X f\|_F$, the normalized Riemannian gradient on $\text{Gr}(m,r)$~\citep{edelman1998geometry}, whose discrete updates are closely related to Oja's subspace rule.
\item \emph{Nuclear} ($\varphi=\|\cdot\|_*$): $\xi^*=\tau\,u_1v_1^\top$ from the SVD of $(I-XX^\top)\nabla_X f$, a rank-$1$ horizontal update ($C_\varphi=1$).
\end{itemize}
The nuclear instance is closely related to GROUSE (Grassmannian Rank-One Update Subspace Estimation)~\citep{balzano2010grouse}, which performs online subspace tracking via rank-$1$ updates on $\text{Gr}(m,r)$.
Both produce a rank-$1$ tangent vector in the horizontal space; the difference is that GROUSE derives its direction from a \emph{single observation's residual} (streaming setting with one sample at a time), whereas our LMO uses the \emph{leading singular vectors of the full horizontal gradient} $(I-XX^\top)\nabla_X f$ (batch or mini-batch).
Geometrically, each rank-$1$ update adds one new direction $u_1\perp\text{colspan}(X)$ while downweighting one direction $v_1$ within the current subspace.
Since $G_X=I$ on $\text{Gr}(m,r)$, no scaling is needed, the Euclidean and intrinsic nuclear norms coincide, and the nuclear iMuon on $\text{Gr}(m,r)$ is simply NuMuon restricted to the horizontal space.

\section{Background on Riemannian optimization}\label{app:background}

This appendix provides a more thorough discussion of the background material summarized in \Cref{sec:intro}, aimed at readers less familiar with Riemannian optimization on matrix manifolds.

\subsection{Riemannian optimization: key concepts}

Riemannian optimization~\citep{absil2008optimization,boumal2023intromanifolds} generalizes gradient-based methods to curved spaces (manifolds) by replacing Euclidean gradients with Riemannian gradients and straight-line updates with retractions.
The key ingredients are:
\begin{itemize}
\item A \emph{tangent space} $T_x\M$: the set of valid update directions at $x\in\M$.
\item A \emph{Riemannian metric} $g_x$: a smoothly varying inner product on $T_x\M$ that measures lengths and angles.
The Riemannian gradient $\text{grad}\,f(x)$ is the unique tangent vector satisfying $g_x(\text{grad}\,f,\zeta)=Df(x)[\zeta]$ for all $\zeta\in T_x\M$.
\item A \emph{retraction} $R_x:T_x\M\to\M$: a map that moves from $x$ along a tangent direction back onto $\M$, generalizing the Euclidean update $x+\eta\,\xi$.
\end{itemize}
Standard convergence analyses~\citep{boumal2023intromanifolds} assume \emph{retraction-$L$-smoothness}: $f(R_x(\xi))\le f(x)+Df(x)[\xi]+\frac{L}{2}\|\xi\|_x^2$.
Under this condition, Riemannian gradient descent achieves an $O(\sqrt{L/T})$ rate to an approximate first-order stationary point, matching the Euclidean rate.

\subsection{Quotient manifolds and scale invariance}\label{app:quotient_geometry}

Many structured parameterizations involve redundancies.
For example, a rank-$r$ matrix $X=BA$ can be factored as $(B,A)$ or $(BN^{-1},NA)$ for any invertible $N\in\text{GL}(r)$, these are different points in parameter space representing the same $X$.
The mathematically clean way to handle this is the \emph{quotient manifold}: define an equivalence relation $(B,A)\sim(BN^{-1},NA)$ and work on the quotient space $\M_r=(\RR^{m\times r}_*\times\RR^{r\times n}_*)/\text{GL}(r)$, whose points are equivalence classes $[(B,A)]$.
The tangent space of the total space $\overline{\M}$ at a representative $x$ splits into a \emph{vertical space} $\mathcal{V}_x$ (directions along the symmetry orbit, which do not change the equivalence class) and a complementary \emph{horizontal space} $\mathcal{H}_x$ (directions that represent genuine motion on the quotient); only horizontal tangent vectors carry information about the quotient geometry.

For optimization on a quotient manifold to be well-defined, the Riemannian metric must be \emph{invariant} under the equivalence group: $g_{(BN^{-1},NA)}(\cdot,\cdot)=g_{(B,A)}(\cdot,\cdot)$ (after appropriate pushforward of tangent vectors).
If the metric is not invariant, the Riemannian gradient depends on which representative $(B,A)$ of the equivalence class is used, a mathematical inconsistency that manifests as training instability in practice~\citep{janson2026stabilizing}.

The \emph{coupled metric} of~\citet{mishra2014fixed,mishra2016riemannian} achieves invariance by weighting each factor by the \emph{other} factor's Gram matrix: $g(\dot{B},\dot{A})=\tr(\dot{B}^\top \dot{B}\,AA^\top)+\tr(B^\top B\,\dot{A}\dot{A}^\top)$.
Under $(B,A)\mapsto(BN^{-1},NA)$, the cross-Gram weighting transforms as $AA^\top\mapsto NAA^\top N^\top$, which exactly compensates the change $\dot{B}\mapsto\dot{B}N^{-1}$, preserving lengths.
The resulting Riemannian gradient has the form $\nabla_{B}f\,(AA^\top)^{-1}$ (for the $B$-factor), which can be recognized as a preconditioned gradient that automatically equalizes the contribution of large and small singular directions of $A$.



\section{Extended Related Work}\label{app:related_work_extended}

\subsection{Riemannian Frank--Wolfe methods}

\citet{weber2023riemannian} study Riemannian Frank--Wolfe methods for optimization over geodesically convex constraint sets on manifolds.
Their LMO operates over a \emph{feasible set on the manifold} to keep iterates within a constraint (e.g., a geodesic ball or a convex subset of the manifold).
Our LMO is structurally different: it constrains a norm in the \emph{tangent space} to select the step direction, then retracts onto the manifold.
This is a trust-region-like norm bound on the update magnitude rather than feasibility of the iterates.
The two approaches are complementary; ours targets unconstrained manifold optimization with spectral control of the step.

\subsection{Euclidean spectral optimizers and LMO}

The idea of using spectral-norm (Schatten-$\infty$) steepest descent for matrix-valued parameters in deep learning predates the recent Muon line of work.
\citet{carlson2015psd} observe that the objective function in feedforward neural networks admits a tighter majorization bound under the Schatten-$\infty$ norm than under the Frobenius norm, and derive the corresponding steepest-descent operator, which is precisely the orthogonal polar factor $\Ortho(\cdot)$ used by Muon.
They further combine this non-Euclidean gradient with element-wise adaptive learning rates (RMSprop, ADAgrad), yielding preconditioned spectral descent algorithms (RMSspectral, ADAspectral) that demonstrate speedups on RBMs, feedforward nets, and CNNs.
Our framework can be seen as replacing their element-wise diagonal preconditioning with a principled Riemannian metric $G_x$ that is determined by the manifold geometry: where Carlson et~al.\ scale each matrix entry independently, the intrinsic norm $\varphi(G_x^{1/2}\xi)$ scales the entire tangent vector via the metric operator, simultaneously achieving preconditioning, symmetry invariance on quotient manifolds, and closed-form solutions.

More recently, \citet{bernstein2025old} formalize the connection between the polar factor and steepest descent under the spectral norm, and~\citep{pethick2025lmo} {(Scion) generalize the LMO viewpoint to a broad family of induced \emph{operator} norms in Euclidean space, including the spectral norm, sign / $\ell_\infty$, ColNorm, and RMS-to-RMS norms, which goes beyond the unitarily invariant family used in our framework.}
Our intrinsic LMO~\eqref{eqn:main_lmo} recovers all of these {unitarily invariant instances} as the special case $G_x=I$ on $\RR^{p\times q}$, and extends them to non-trivial metrics and manifold-structured parameters.

\subsection{Muon extensions for low-rank/LoRA finetuning}

{%
A first wave of works ports Muon-like spectral control to LoRA-style and natively low-rank training. Riemannion~\citep{bogachev2026riemannion} works on the embedded fixed-rank manifold $\M_r\subset\RR^{m\times n}$ with the ambient Frobenius metric, projects the Euclidean gradient to $T_X\M_r$, and approximates a tangent space spectral LMO by replacing all $\le 2r$ singular values of the tangent space momentum with $1$ via a tangent space `$\Ortho_r$' operator and re-projecting; the spectral-norm constraint is satisfied only approximately (singular values of the resulting tangent space step lie in $(0.9,1.1)$ in their experiments). \citet{kang2026spectral} prove a continuous-time convergence dichotomy for SpecGF on $\|BA-Y\|_F^2$ (see \Cref{app:compare_lowrank}). \citet{janson2026stabilizing} (Spectron) renormalizes the orthogonalized step of each factor by $\rho=\eta/(\|A\|_2+\|B\|_2+1)$, with $\|A\|_2,\|B\|_2$ estimated by a single power iteration per step. \citet{park2025stiefel} (Stiefel-LoRA) restrict LoRA's $B$ factor to the Stiefel manifold with Riemannian SGD/AdamW. NuMuon~\citep{dolatabadi2026numuon} adds a nuclear-norm budget to Muon's spectral LMO, yielding rank-$\lfloor\kappa/\tau\rfloor$ updates suitable for compressible LLMs. \citet{wang2026taming} factorize the EMA momentum buffer to compress optimizer-state memory in both Adam-style and Muon-style optimizers. \citet{schotthoefer2025geometric} provide a geometric framework for momentum-based low-rank training. None of these works simultaneously (i) preserves quotient symmetries on $\M_r$, (ii) gives an exact closed-form tangent space spectral LMO, and (iii) attains a condition-number-free rate, the three properties of our framework specialized to $\M_r$.
}

\subsection{Convergence theory of Muon}

{Discrete-time convergence guarantees for Muon and its variants have appeared in a sequence of works: \citet{li2025note} provide a first convergence note; \citet{sato2025convergence} establish convergence and a critical batch-size analysis under both Nesterov and weight-decay variants; \citet{shen2025muon,kim2026convergence,chen2026muon} treat Muon convergence under various smoothness assumptions, and \citet{ma2026preconditioning} analyze preconditioning benefits of spectral orthogonalization. \citet{riabinin2025gluon} (Gluon) bridge theory and practice in LMO-based optimizers. \citet{amsel2026polar} study optimal polynomial approximations of the matrix sign function used inside Muon. Our \Cref{thm:general,thm:stochastic} cover both deterministic and stochastic settings on Riemannian matrix manifolds, with the squared Riemannian radius $C_\varphi$ as the unifying complexity parameter, and recover the Euclidean Muon rate of \citet{bernstein2025old,shen2025muon} as the $G_x=I$ special case (\Cref{app:connections}).}

\subsection{Riemannian and manifold aware optimizers for LLM}

{%
A complementary line of work injects manifold-derived primitives into LLM-scale pretraining without strictly constraining iterates to a manifold. Mano~\citep{gu2026mano} {projects momentum onto the tangent space of the Oblique manifold and applies alternating column-wise and row-wise normalization across odd/even iterations, which imposes only a soft manifold constraint on the update direction (no retraction is applied to the iterate), so the parameter trajectory itself remains Euclidean.} SSO~\citep{xie2026sso} constrains weights to the spectral sphere $\mathcal{S}_R=\{X:\|X\|_2=R\}$ and runs steepest descent under the spectral norm via bisection on a single Lagrange multiplier; we discuss this case (where SV-invariance fails) in \Cref{app:spectral_sphere}. \citet{yang2026spectral,yang2022mup} provide the spectral / $\mu$P framework that motivates much of this line. These works are largely orthogonal to our setting, which targets problems where the parameters genuinely live on a Riemannian manifold (LoRA-style fixed-rank factorizations, SPD covariance descriptors, subspaces). Within that setting, \citet{tong2021scaledgd,zhang24riemannianlora,mishra2014fixed,mishra2016riemannian} are the closest direct precursors of our Frobenius-norm instance on $\M_r$.
}

\subsection{Stability of low-rank training}

\citet{janson2026stabilizing} provide a concrete diagnosis: when training neural networks with natively low-rank factorized weights $X=AB^\top$ using standard optimizers, the scaling invariance $X=(\lambda A)(B/\lambda)^\top$ permits unbounded spectral-norm growth of the weight update $\Delta X$, triggering training divergence.
They propose Spectron, which orthogonalizes the factor gradients (via Newton--Schulz) and adaptively sets a local constraint radius $\rho=\eta/(\|A\|_2+\|B\|_2+1)$ so that the composite update satisfies $\|\Delta X\|_2\le\eta$.
Our framework resolves the same instability by a different mechanism: the scaled constraint $\varphi(G_x^{1/2}\xi)\le\tau$ is automatically $\text{GL}(r)$-invariant (\Cref{prop:symmetry}), so the update direction and magnitude are independent of the factorization without requiring explicit spectral-norm estimation.
The Gram-root preconditioning $(AA^\top)^{-1/2}$ that emerges from our framework is closely related to Scaled Gradient Descent (ScaledGD)~\citep{tong2021scaledgd}, which achieves condition-number-free rates for low-rank matrix estimation by a similar scaling of the factors.

\section{Experimental Details for LoRA finetuning}\label{app:experimental_details}





\paragraph{Compared methods.}
Each LoRA adapter parameterizes the trainable update as $X=BA$, where
$B\in\RR^{m\times r}$ and $A\in\RR^{r\times n}$, while the pretrained
weight is kept fixed. This representation exposes the fixed-rank structure used by the geometry-aware methods. We compare Euclidean,
Riemannian-preconditioned, and spectral-update methods under this common
LoRA parameterization.

SGD and AdamW~\citep{zhang24riemannianlora} update the LoRA factors directly as ordinary Euclidean parameters. Scaled GD and Scaled AdamW
\citep{mishra2012geometry,tong2021scaledgd,zhang24riemannianlora} use the
$\mathrm{GL}(r)$-invariant preconditioning induced by the quotient geometry
of $\M_r$: Scaled GD corresponds to the Frobenius-norm instance of the
intrinsic LMO, which recovers the Riemannian gradient direction, while
Scaled AdamW uses the same geometry-aware preconditioning inside an
AdamW-style update. Factor-wise Muon~\citep{jordan2024muon,bernstein2025old, kang2026spectral,janson2026stabilizing} applies the Euclidean spectral LMO factor-wise as in~\eqref{eqn:factorwise_muon}.
In contrast, {iMuon} is the spectral-norm instance of our intrinsic
LMO on $\M_r$ (\Cref{cor:fixed_rank}). We also include Riemannion~\citep{bogachev2026riemannion}, reproduced from the authors'
released code, as an embedded fixed-rank Muon baseline with approximate
tangent space orthogonalization.

\paragraph{Momentum implementation.}
For the Adam family methods, we use standard Adam first and second moment update. For iMuon and Factor-wise Muon, we use the \textit{Nesterov} SGD style momentum update. For a LoRA pair $(B_t,A_t)$ with $W_t=B_tA_t$ at iteration $t$, let
\[
G_{B,t}=\nabla_W\mathcal{L}(W_t)A_t^\top,
\qquad
G_{A,t}=B_t^\top\nabla_W\mathcal{L}(W_t).
\]
denote the Euclidean gradient of LoRA factors. Factor-wise momentum buffers are 
\[
M_{B,t}=\beta M_{B,t-1}+G_{B,t},
\qquad
M_{A,t}=\beta M_{A,t-1}+G_{A,t},
\] with $\beta=0.95$. Quantities passed to spectral step are
\[
\widetilde{G}_{B,t}=G_{B,t}+\beta M_{B,t},
\qquad
\widetilde{G}_{A,t}=G_{A,t}+\beta M_{A,t}.
\]

Factor-wise Muon applies spectral LMO to these individual factors whereas iMuon forms the momentum smoothed ambient matrix direction
\[
\widehat{M}_t=\widetilde{G}_{B,t}A_t+B_t\widetilde{G}_{A,t},
\] on which metric scaled LMO is applied.

\paragraph{Compute Resource.}\label{app:llm-compute}

All the experiments are run on Nvidia GPU clusters using L40, A40, A100, RTX 6000 GPUs depending on availability of the resources. The E2E~NLG experiments use single-GPU training. For GLUE, all tasks except MNLI and QQP use single-GPU training, while MNLI and QQP use two-GPU DDP. No hardware specific tuning is performed and the same training code path, optimizer settings are used across GPU types.  

We additionally compare the per-batch runtime of factor-wise Muon and iMuon
over one E2E NLG epoch at batch size 8. We profile each batch into three
stages: forward pass, backward pass, and optimizer step. The optimizer-step
overhead motivates a CholeskyQR variant of iMuon, which replaces the thin QR
by a Cholesky-based factorization of $AA^\top$.  \Cref{tab:e2e-runtime-profile}
reports per-stage means and standard deviations for factor-wise Muon, iMuon
(PyTorch thin-QR), and iMuon (CholeskyQR)


\begin{table}[htbp]
\centering
\caption{Per-batch runtime profile on E2E NLG for one training epoch with
batch size \(8\). Timings are reported in milliseconds as mean \(\pm\)
standard deviation over training batches.}
\label{tab:e2e-runtime-profile}
\small
\setlength{\tabcolsep}{6pt}
\begin{tabular}{lccc}
\toprule
Stage & Factor-wise Muon & iMuon (QR) & iMuon (CholeskyQR) \\
\midrule
Forward pass
& \(135.97 \pm 1.24\)
& \(133.45 \pm 1.30\)
& \(135.48 \pm 1.42\) \\
Backward pass
& \(158.00 \pm 0.34\)
& \(157.41 \pm 0.31\)
& \(157.45 \pm 0.28\) \\
Optimizer step
& \(3.22 \pm 0.09\)
& \(13.08 \pm 0.14\)
& \(5.49 \pm 0.13\) \\
Total per batch
& \(297.20 \pm 1.21\)
& \(303.95 \pm 1.26\)
& \(298.41 \pm 1.38\) \\
\bottomrule
\end{tabular}
\end{table}

\subsection{E2E NLG }

\paragraph{Setup}
We fine-tune GPT-2 Medium (355M parameters) on the E2E NLG
Challenge~\citep{novikova2017e2e}, a table-to-text generation task with
approximately 42K training and 4.7K test examples in the restaurant domain.
LoRA adapters are applied to the query and value projections within the
merged \texttt{c\_attn} linear layer, following LoRA~\citep{hu2022lora} and Riemannian LoRA~\citep{zhang24riemannianlora}.
The LoRA configuration uses rank $r=4$, scaling $\alpha=32$, and dropout $0.1$. Training runs for 5 epochs with batch size 8, a linear learning-rate schedule with 500 warmup steps, weight decay $0.01$, label smoothing $0.1$, and gradient clipping is disabled. Generation uses beam search with beam size 10, length penalty $0.8$, and no-repeat $4$-gram decoding. All runs use seed 110, which is also the one used by riemannianLoRA. We compute BLEU, NIST, METEOR, ROUGE-L, and CIDEr using the standard \texttt{e2e-metrics} package.

Learning rates are tuned separately for each method. We start from the common grid
\[
\{5{\times}10^{-5},\,10^{-4},\,5{\times}10^{-4},\,10^{-3}\},
\]
and perform additional boundary checks when the best run lies at the edge of this grid. Following the E2E reporting protocol, we report the best test-set BLEU run from the resulting sweep, with the remaining metrics computed from the same run. Full per-method grids are reported in the individual tables below.

\paragraph{Results.}
\Cref{tab:e2e-main} reports the best test-set BLEU run for each method,
with the remaining metrics are computed from the same run. {iMuon}
achieves the highest BLEU in both the no-momentum and momentum blocks, improving over factor-wise Muon under the same LoRA setup. The improvement is more pronounced without momentum, while the momentum-based methods are closer to each other. The secondary generation metrics are broadly comparable across the spectral-update methods, indicating that the BLEU gains do not come from a degradation in the other reported metrics. Overall, the E2E results support the benefit of replacing the Euclidean factor-wise spectral update with the intrinsic fixed-rank spectral update, without changing the model, adapter configuration, or decoding protocol.

\begin{table}[t]
\centering
\caption{E2E NLG results with GPT-2 Medium and LoRA rank $r=4$.
For each method, we report the run with the best test-set BLEU. The remaining
metrics are computed from the same run. Bold entries denote the best value in
each block.}
\label{tab:e2e-main}
\begin{tabular}{lccccc}
\toprule
Method & BLEU & NIST & METEOR & ROUGE-L & CIDEr \\
\midrule
\multicolumn{6}{c}{\textit{Without momentum}}\\
SGD & 66.60 & 8.54 & 44.20 & 68.20 & 2.32 \\
Scaled GD & 69.20 & 8.71 & 46.30 & 70.90 & 2.48 \\
Factor-wise Muon & 70.02 & 8.81 & 46.77 & 71.68 & 2.53 \\
Riemannion (SGD) & 70.02 & 8.78 & \textbf{46.79} & 71.99 & 2.52 \\
\textbf{iMuon} & \textbf{70.74} & \textbf{8.88} & \textbf{46.79} & \textbf{72.14} & \textbf{2.54} \\
\midrule
\multicolumn{6}{c}{\textit{With momentum}}\\
AdamW & 68.90 & 8.69 & 46.50 & 71.30 & 2.51 \\
Scaled AdamW & 69.60 & 8.77 & 46.60 & 71.80 & 2.52 \\
Factor-wise Muon & 70.26 & \textbf{8.83} & \textbf{46.76} & 71.75 & 2.52 \\
Riemannion (Adam) & 70.21 & \textbf{8.83} & 46.55 & 71.87 & \textbf{2.53} \\
\textbf{iMuon}  & \textbf{70.36} & 8.82 & 46.75 & \textbf{71.92} & \textbf{2.53} \\
\bottomrule
\end{tabular}
\end{table}

We conduct a seed sensitivity analysis on the E2E NLG task to verify the robustness of factor-wise Muon, Riemannion (SGD) and iMuon under seed variation. Table \ref{tab:seed_sensitivity} reports scores for the without-momentum setting using Seed 42, utilizing the best learning rates identified in \Cref{tab:e2e-main}. Performance remains stable across metrics, with iMuon showing negligible variance in measures like METEOR  / CIDEr. 

\begin{table}[t]
\centering
\caption{Robustness of E2E NLG results (without momentum) across seeds. Values show results for Seed 42.}
\label{tab:seed_sensitivity}
\small
\begin{tabular}{lccccc}
\toprule
Method & BLEU & NIST & METEOR & ROUGE-L & CIDEr \\
\midrule
Factor-wise Muon & 70.01  & 8.81  & 46.68  & 71.56  & 2.54  \\
Riemannion (SGD) & 69.92  & 8.82  & 46.68  & 71.28  & 2.51  \\
\textbf{iMuon }  & 69.58  & 8.74  & 46.84  & 71.65  & 2.50  \\ 
\bottomrule
\end{tabular}
\end{table}

Tables~\ref{tab:e2e-lr-nomom} and~\ref{tab:e2e-lr-mom} report the full
learning-rate sweeps used to select the runs in \Cref{tab:e2e-main}.
We use $\{5{\times}10^{-5},10^{-4},\\5{\times}10^{-4},10^{-3}\}$ as the base
grid and add local or boundary probes when pilot runs indicate improvement near the best base-grid point. The selected learning rate and the
corresponding BLEU score are marked in bold.

\begin{table}[t]
\centering
\small
\caption{E2E NLG learning-rate sweep without momentum. The selected learning
rate for each method is marked in bold. For Riemannion (SGD), low-LR points
$\{5{\times}10^{-5},10^{-4}\}$ are omitted because pilot runs showed the
useful range was at substantially larger learning rates.}
\label{tab:e2e-lr-nomom}
\begin{tabular}{llccccc}
\toprule
Method & LR & BLEU & NIST & METEOR & ROUGE-L & CIDEr \\
\midrule
\multirow{4}{*}{Factor-wise Muon}
& $5{\times}10^{-5}$ & 64.22 & 8.33 & 42.46 & 67.49 & 2.21 \\
& $10^{-4}$          & 65.31 & 8.42 & 43.40 & 67.85 & 2.30 \\
& $5{\times}10^{-4}$ & 69.09 & 8.73 & 46.20 & 71.17 & 2.48 \\
& $10^{-3}$ & 70.02 & 8.81 & 46.77 & 71.68 & 2.53 \\
\midrule
\multirow{5}{*}{Riemannion (SGD)}
& $5{\times}10^{-4}$ & 65.05 & 8.40 & 43.30 & 68.05 & 2.29 \\
& $10^{-3}$          & 66.08 & 8.48 & 43.88 & 68.07 & 2.36 \\
& $5{\times}10^{-3}$ & 68.60 & 8.70 & 45.84 & 70.49 & 2.45 \\
& $10^{-2}$          & 69.40 & 8.75 & 46.28 & 71.18 & 2.47 \\
& $5{\times}10^{-2}$ & 70.02 & 8.78 & 46.79 & 71.99 & 2.52 \\
\midrule
\multirow{5}{*}{\textbf{iMuon} }
& $5{\times}10^{-5}$ & 64.25 & 8.09 & 41.32 & 67.62 & 2.07 \\
& $10^{-4}$          & 65.08 & 8.42 & 42.56 & 68.16 & 2.20 \\
& $5{\times}10^{-4}$ & 67.24 & 8.62 & 44.24 & 68.67 & 2.32 \\
& $10^{-3}$          & 68.80 & 8.71 & 45.72 & 70.64 & 2.44 \\
& $5{\times}10^{-3}$ & 70.74 & 8.88 & 46.79 & 72.14 & 2.54 \\
\bottomrule
\end{tabular}
\end{table}

\begin{table}[htbp]
\centering
\small
\caption{E2E NLG learning-rate sweep with momentum. The selected learning
rate for each method is marked in bold.}
\label{tab:e2e-lr-mom}
\begin{tabular}{llccccc}
\toprule
Method & LR & BLEU & NIST & METEOR & ROUGE-L & CIDEr \\
\midrule
\multirow{5}{*}{Factor-wise Muon}
& $5{\times}10^{-5}$ & 68.53 & 8.67 & 45.78 & 70.80 & 2.43 \\
& $10^{-4}$          & 69.57 & 8.77 & 46.40 & 71.53 & 2.50 \\
& $5{\times}10^{-4}$ & 69.97 & 8.81 & 46.70 & 71.44 & 2.53 \\
& ${7{\times}10^{-4}}$ & 70.26 & 8.83 & 46.76 & 71.75 & 2.52 \\
& $10^{-3}$          & 70.22 & 8.82 & 46.61 & 71.93 & 2.53 \\
\midrule
\multirow{7}{*}{Riemannion (Adam)}
& $5{\times}10^{-5}$ & 61.80 & 7.57 & 39.83 & 65.24 & 1.95 \\
& $10^{-4}$          & 64.59 & 8.16 & 41.17 & 67.11 & 2.10 \\
& $5{\times}10^{-4}$ & 65.23 & 8.38 & 43.87 & 67.82 & 2.31 \\
& $10^{-3}$          & 66.46 & 8.54 & 44.50 & 67.90 & 2.34 \\
& $5{\times}10^{-3}$ & 69.27 & 8.79 & 46.03 & 70.12 & 2.47 \\
& $10^{-2}$          & 69.58 & 8.79 & 46.57 & 70.85 & 2.52 \\
& $5{\times}10^{-2}$ & 70.21 & 8.83 & 46.55 & 71.87 & 2.53 \\
\midrule
\multirow{5}{*}{\textbf{iMuon}}
& $5{\times}10^{-5}$ & 65.00 & 8.33 & 42.07 & 67.54 & 2.17 \\
& $10^{-4}$          & 67.55 & 8.59 & 44.22 & 69.09 & 2.34 \\
& $5{\times}10^{-4}$ & 70.32 & 8.81 & 46.60 & 71.97 & 2.52 \\
& $10^{-3}$ & 70.36 & 8.82 & 46.75 & 71.92 & 2.53 \\
& $5{\times}10^{-3}$ & 70.20 & 8.81 & 46.75 & 71.56 & 2.52 \\
\bottomrule
\end{tabular}
\end{table}

\subsection{GLUE}

\paragraph{Setup}
We evaluate LoRA finetuning on GLUE benchmark, following the setup of Mistral-7B in \citep{zhang24riemannianlora}. We fine-tune a 4-bit quantized Mistral-7B-v0.1 model whose frozen base weights are stored in NF4 and use bfloat16 compute. LoRA adapters use rank $r=16$, scaling $\alpha=16$, dropout $0.05$, and no bias, and are applied to five target modules: \texttt{q\_proj}, \texttt{k\_proj}, \texttt{v\_proj}, \texttt{o\_proj}, and \texttt{gate\_proj}. Riemannion uses its doubled-rank PEFT parameterization with rank $r=32$ and $\alpha=32$, corresponding to effective rank $16$. The doubling is used for its tangent-space implementation rather than as a rank-$32$ forward adapter.

Training uses per-device batch size $8$ for up to $5$ epochs, with
two-GPU DDP for MNLI and QQP and single-GPU training for the remaining
seven tasks. We use a linear learning-rate schedule with no warmup, weight
decay $0.01$, and maximum gradient norm $1$. AdamW-family methods use
$\beta_1=0.9$, $\beta_2=0.999$, and $\epsilon=10^{-6}$. For runs evaluated in this work, early stopping is enabled with patience $10$, and evaluation is performed every $500$ steps; we select the best checkpoint by the validation metric used in the training script: accuracy for most tasks, Matthews correlation for CoLA, and Pearson correlation for STS-B. All runs use seed $42$.

We evaluate on the nine GLUE tasks: MNLI, SST-2, MRPC, CoLA, QNLI, QQP,
RTE, STS-B, and WNLI. We report task-specific metrics following the GLUE
convention: accuracy for classification tasks, Matthews correlation for
CoLA, and Pearson correlation for STS-B. The reported average is the
unweighted average across the nine tasks.

\paragraph{Results.}
\Cref{tab:glue-main} reports the GLUE results for 4-bit Mistral-7B
fine-tuning with LoRA rank $r=16$. We split the comparison into two groups: SGD-style methods without momentum and methods that use momentum or adaptive preconditioning. In the non-momentum block, iMuon consistently improves over factor-wise Muon, outperforming it on most tasks and comparable on others. Consequently, iMuon achieves the highest average score in this block, suggesting that applying the spectral update intrinsically to the fixed-rank LoRA update is more effective than applying Muon separately to the two LoRA factors. 

In the momentum/adaptive block, Scaled AdamW obtains the highest average score. However, iMuon remains close to the best method, with an average score within one point of Scaled AdamW. These results suggest that iMuon is stronger in the direct SGD-style spectral comparison, while remaining competitive with strong adaptive baselines. 
\begin{table*}[htbp]
\centering
\caption{GLUE results with 4-bit Mistral-7B and LoRA rank $r=16$.
The average is the unweighted average across the nine tasks. Bold entries
denote the best value in each block.}
\label{tab:glue-main}
\resizebox{\textwidth}{!}{%
\begin{tabular}{lcccccccccc}
\toprule
Method & MNLI & SST-2 & MRPC & CoLA & QNLI & QQP & RTE & STS-B & WNLI & Avg. \\
\midrule
\multicolumn{11}{c}{\textit{Without momentum}}\\
SGD & 88.15 & 96.10 & 70.10 & 55.89 & 94.22 & 88.59 & 50.90 & 47.64 & 49.30 & 71.21 \\
Scaled GD & 90.21 & 96.90 & 81.62 & 68.17 & 94.40 & \textbf{91.15} & 54.15 & \textbf{90.31} & 56.34 & 80.36 \\
Factor-wise Muon & 89.77 & \textbf{97.25} & 82.35 & 69.32 & 94.47 & 88.42 & 84.12 & 84.39 & 45.07 & 81.68 \\
Riemannion (SGD) & \textbf{91.55} & 96.90 & 82.60 & 68.90 & \textbf{94.93} & 87.77 & 82.67 & 77.99 & 60.56 & 82.65 \\
iMuon & 89.29 & 97.02 & \textbf{85.54} & \textbf{70.21} & 94.60 & 87.86 & \textbf{88.09} & 88.99 & \textbf{74.65} & \textbf{86.25} \\
\midrule
\multicolumn{11}{c}{\textit{With momentum}}\\
AdamW & 89.86 & 96.79 & 88.48 & 71.05 & 94.42 & 91.24 & 90.61 & 90.42 & 81.69 & 88.28 \\
Scaled AdamW & 90.68 & \textbf{97.25} & 89.46 & 71.30 & 94.67 & \textbf{92.22} & \textbf{91.34} & 91.10 & \textbf{83.10} & \textbf{89.01} \\
Factor-wise Muon & \textbf{91.29} & 96.90 & \textbf{89.71} & \textbf{71.45} & \textbf{94.97} & 91.74 & 89.89 & 92.18 & 78.87 & 88.56 \\
Riemannion (Adam) & 91.21 & 96.44 & 79.90 & 67.00 & 94.55 & 90.36 & 80.87 & 62.13 & \textbf{83.10} & 82.84 \\
iMuon & 89.93 & 97.13 & 88.97 & 70.32 & 93.45 & 91.55 & 89.53 & \textbf{92.37} & 81.69 & 88.33 \\
\bottomrule
\end{tabular}%
}
\end{table*}

\paragraph{Learning-rate selection and stability.}
The Mistral-7B GLUE setting is sensitive to the learning rate, both across
tasks and across optimizers. We therefore select the learning rate separately
for each optimizer--task pair using validation performance, rather than using
a single global learning rate for all GLUE tasks. This is consistent with the
task-specific learning-rate selection reported by \citet{zhang24riemannianlora} for 4-bit Mistral-7B GLUE fine-tuning.

For factor-wise Muon and iMuon, the main grid is $\{5{\times}10^{-5},10^{-4}\}$, with a small local extension only when pilot runs indicated that the best result lay outside the base grid. Riemannion requires a different LR scale: the Adam variant requires lower learning rates on some tasks, while the SGD variant underfits at the factor-wise Muon/iMuon LR scale and only becomes effective at learning rates around $10^{-3}$--$10^{-2}$. 

\begin{table*}[t]
\centering
\caption{Learning-rate grids and selected learning rates for the GLUE entries
evaluated in this work. The ``LR grid'' column reports the grid swept for each
optimizer, with task-specific extensions noted where applicable; the task
columns report the learning rate selected by validation performance. The SGD,
Scaled GD, AdamW, and Scaled AdamW rows in Table~\ref{tab:glue-main} are taken
from \citet{zhang24riemannianlora} and are therefore not included here.}
\label{tab:glue-selected-lr}
\scriptsize
\setlength{\tabcolsep}{3pt}
\renewcommand{\arraystretch}{1.18}
\resizebox{\textwidth}{!}{%
\begin{tabular}{llccccccccc}
\toprule
Method & LR grid & MNLI & SST-2 & MRPC & CoLA & QNLI & QQP & RTE & STS-B & WNLI \\
\midrule
\multicolumn{11}{c}{\textit{Without momentum}}\\
Factor-wise Muon
& $\{5{\times}10^{-5},\,10^{-4}\}$
& $5{\times}10^{-5}$ & $5{\times}10^{-5}$ & $10^{-4}$ & $10^{-4}$
& $5{\times}10^{-5}$ & $5{\times}10^{-5}$ & $10^{-4}$ & $10^{-4}$ & $10^{-4}$ \\

Riemannion (SGD)
& \begin{tabular}[c]{@{}l@{}}
$\{5{\times}10^{-5},\,10^{-4},\,5{\times}10^{-4},\,10^{-3},$\\
$2{\times}10^{-3},\,5{\times}10^{-3},\,10^{-2},\,5{\times}10^{-2}\}$
\end{tabular}
& $5{\times}10^{-3}$ & $5{\times}10^{-3}$ & $5{\times}10^{-3}$ & $10^{-2}$
& $5{\times}10^{-3}$ & $10^{-2}$ & $5{\times}10^{-3}$ & $2{\times}10^{-3}$ & $10^{-2}$ \\

iMuon
& $\{5{\times}10^{-5},\,10^{-4}\}$
& $5{\times}10^{-5}$ & $5{\times}10^{-5}$ & $10^{-4}$ & $10^{-4}$
& $5{\times}10^{-5}$ & $10^{-4}$ & $10^{-4}$ & $10^{-4}$ & $10^{-4}$ \\
\midrule
\multicolumn{11}{c}{\textit{With momentum}}\\
Factor-wise Muon
& $\{5{\times}10^{-5},\,10^{-4}\}$
& $5{\times}10^{-5}$ & $5{\times}10^{-5}$ & $10^{-4}$ & $10^{-4}$
& $5{\times}10^{-5}$ & $5{\times}10^{-5}$ & $10^{-4}$ & $10^{-4}$ & $10^{-4}$ \\
Riemannion (Adam)
& \begin{tabular}[c]{@{}l@{}}
$\{10^{-5},\,2{\times}10^{-5},\,3{\times}10^{-5},\,4{\times}10^{-5},$\\
$5{\times}10^{-5},\,10^{-4},\,5{\times}10^{-4}\}$
\end{tabular}
& $10^{-4}$ & $4{\times}10^{-5}$ & $10^{-4}$ & $10^{-4}$
& $5{\times}10^{-5}$ & $10^{-4}$ & $10^{-4}$ & $5{\times}10^{-5}$ & $5{\times}10^{-4}$ \\

iMuon
& \begin{tabular}[c]{@{}l@{}}
$\{5{\times}10^{-5},\,10^{-4}\}$;\\
RTE/WNLI also probed at \\$2{\times}10^{-4},\,3{\times}10^{-4}$
\end{tabular}
& $5{\times}10^{-5}$ & $5{\times}10^{-5}$ & $10^{-4}$ & $5{\times}10^{-5}$
& $5{\times}10^{-5}$ & $5{\times}10^{-5}$ & $10^{-4}$ & $10^{-4}$ & $2{\times}10^{-4}$ \\
\bottomrule
\end{tabular}%
}
\end{table*}

\section{Learning on Fixed-Rank Manifold}\label{app:fixed_rank_experiments}
\paragraph{Common LMO stepsize convention.}
All LMO based methods use constraint radius \(\tau=1\). The reported learning rate scales the resulting unit-radius direction. When Spectron is included, we use its adaptive extrinsic normalization separately.

\subsection{Matrix completion: synthetic experiments}
\label{app:fixed_rank_noisy_large_scale}



\paragraph{Setup.}
We study synthetic fixed-rank matrix completion at large scale, following the standard sampled-entry low-rank recovery setup~\citep{candes2012exact,keshavan2010matrix,recht2011simpler} and the quotient fixed-rank factorization used in Riemannian matrix completion~\citep{vandereycken2013low,mishra2012geometry,mishra2014fixed,mishra2016riemannian}. For each trial, we generate a rank-\(r\) ground-truth matrix \(X^*\in\RR^{m\times n}\), with \(m=n=5000\) and \(r=10\). The nonzero singular values are ordered as \(\sigma_1\ge\cdots\ge\sigma_r>0\), and we control the target conditioning by setting \(\kappa=\sigma_1(X^*)/\sigma_r(X^*)\). Given an observed index set \(\Omega\), we optimize
\[
    \min_{X=BA}
    f(X)
    =
    \frac{1}{2|\Omega|}
    \left\|\mathcal P_\Omega(X-Y)\right\|_F^2,
\]
where \(B\in\RR^{m\times r}\), \(A\in\RR^{r\times n}\), and \(Y\) contains the observed entries. In the noiseless case, \(Y_{ij}=X^*_{ij}\) for \((i,j)\in\Omega\). In the noisy case, \(Y_{ij}=X^*_{ij}+\varepsilon_{ij}\), with Gaussian observation noise scaled as described below. We use an oversampling ratio \(s\), meaning that \(|\Omega|=sr(m+n)\), and all reported runs use \(s=10\). We set \(\alpha=1\), so no artificial factor rescaling is applied, and initialize all methods from the rank-\(r\) SVD of the zero-filled observed matrix \(\mathcal P_\Omega(Y)\). The stepsize follows \(\eta_t=\eta_0/\sqrt{t}\), where \(\eta_0\) is the selected initial stepsize.

\paragraph{Methods.}
We compare norm-matched Euclidean and intrinsic updates:
\begin{itemize}[leftmargin=0.3in]
    \item \textbf{Frobenius norm:} The intrinsic LMO recovers standard (normalized) Riemannian gradient descent, denoted by \textbf{RGD}, compared with (normalized) Euclidean gradient descent on the chosen representative, denoted by \textbf{EGD}.
    \item \textbf{Spectral norm:} \textbf{iMuon} denotes the intrinsic spectral norm LMO induced by the Riemannian metric, while factor-wise \textbf{Muon} denotes the corresponding Euclidean spectral norm LMO applied to the same representative as in (\ref{eqn:factorwise_muon}).
    \item \textbf{Nuclear norm:} \textbf{iMuon-Nu} denotes the intrinsic nuclear norm LMO. \textbf{NuMuon} is its Euclidean nuclear norm counterpart, following the nuclear norm constrained Muon terminology of~\citet{dolatabadi2026numuon}. In this section, we use only the nuclear norm constraint so that the comparison is norm matched to iMuon-Nu.
\end{itemize}
Each comparison is made within the same norm family, isolating the effect of using the intrinsic metric induced geometry rather than the ambient Euclidean geometry.





\paragraph{Dataset.}
We vary the target condition number over \(\kappa\in\{1,10,100\}\). To match the matrix-completion convention of reporting noise by its variance or standard deviation, we parameterize the observation noise by the relative entrywise scale
\[
    \rho=\sigma_\varepsilon/\mathrm{rms}_\Omega(X^*),
\]
where \(\varepsilon_{ij}\sim\mathcal N(0,\sigma_\varepsilon^2)\) is added only on observed entries and \(\mathrm{rms}_\Omega(X^*)\) is the root mean square magnitude of the sampled clean entries. We report the noiseless case \(\rho=0\), a mild noise setting \(\rho=0.05\), and two additional noise-stress settings \(\rho=0.10\) and \(\rho=0.50\). For each \((\kappa,\rho)\) setting and each seed, we generate a fresh rank-\(10\) target matrix \(X^*\), sample the observation set \(\Omega\), and add observation noise when applicable. Results are averaged over seeds \(\{0,1,2\}\).
%




\begin{figure}[t]
    \centering
    \begin{subfigure}{0.96\linewidth}
        \centering
        \includegraphics[width=\linewidth,height=0.24\textheight,keepaspectratio]{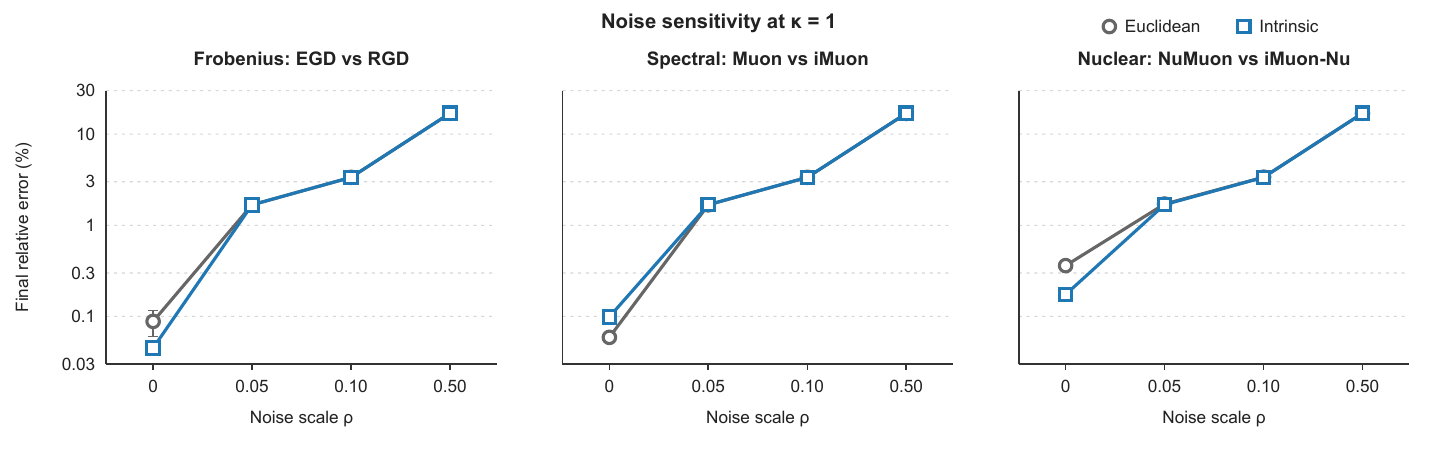}
        \caption{\(\kappa=1\)}
        \label{fig:app_fixed_rank_noise_kappa1}
    \end{subfigure}

    \medskip
    \begin{subfigure}{0.9\linewidth}
        \centering
        \includegraphics[width=0.9\linewidth,height=0.24\textheight,keepaspectratio]{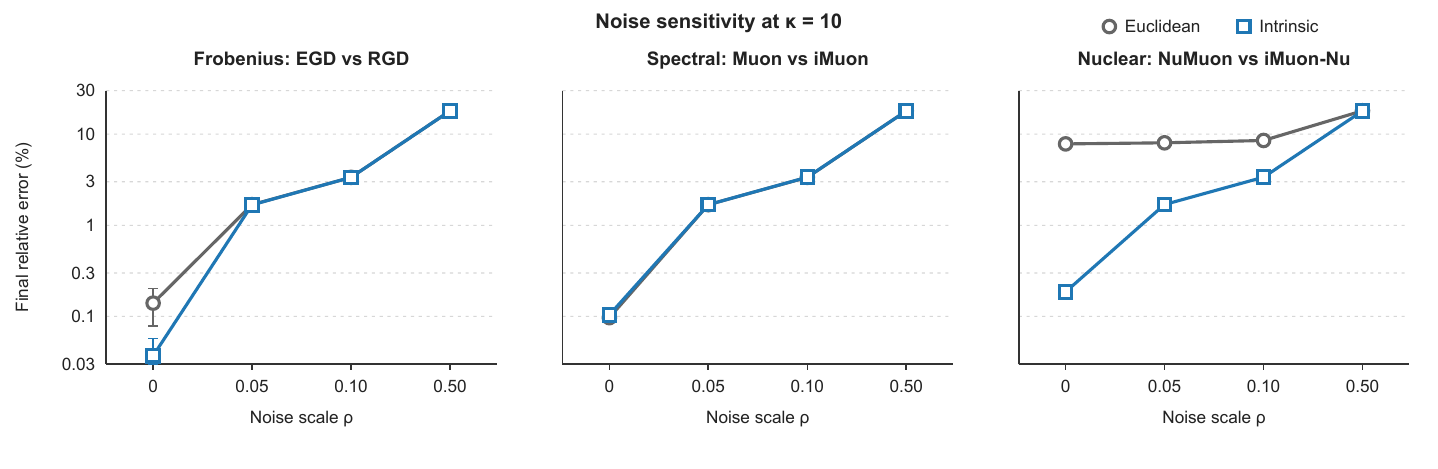}
        \caption{\(\kappa=10\)}
        \label{fig:app_fixed_rank_noise_kappa10}
    \end{subfigure}

    \medskip
    \begin{subfigure}{0.9\linewidth}
        \centering
        \includegraphics[width=\linewidth,height=0.24\textheight,keepaspectratio]{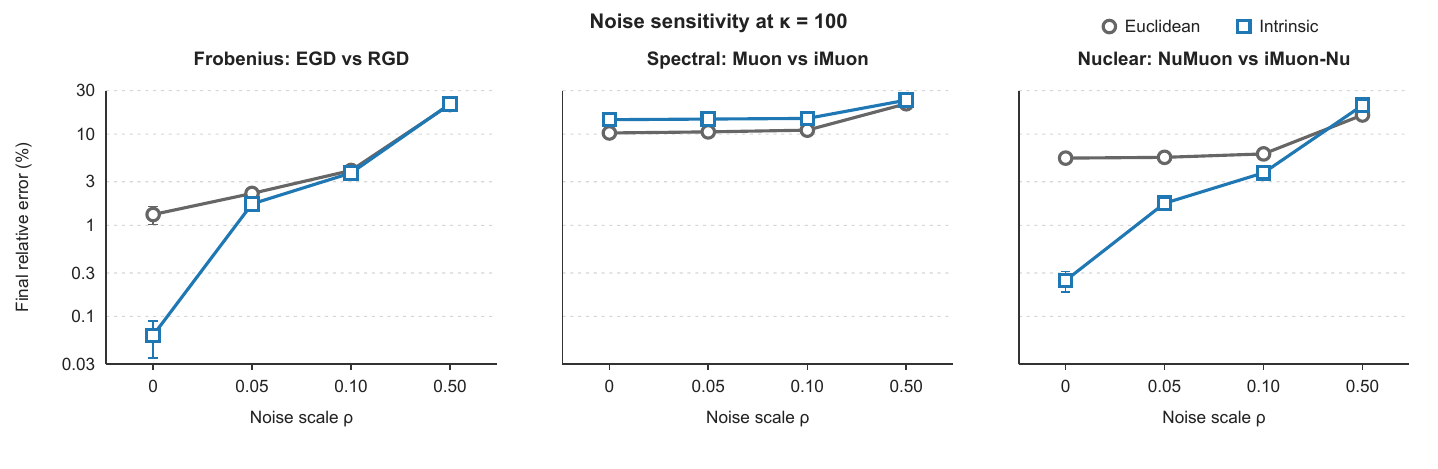}
        \caption{\(\kappa=100\)}
        \label{fig:app_fixed_rank_noise_kappa100}
    \end{subfigure}
    \caption{Noise sensitivity in large-scale synthetic fixed-rank matrix completion. Each row fixes the condition number and plots final relative recovery error as the relative observed-entry noise scale \(\rho\) varies. The three panels in each row compare the Frobenius, spectral, and nuclear norm pairs. The y-axis is logarithmic, and lower is better.}
    \label{fig:app_fixed_rank_noise_sensitivity}
\end{figure}

\paragraph{Results.}
\Cref{fig:app_fixed_rank_noise_sensitivity} plots final recovery error against the relative noise level for each condition number. Frobenius and nuclear intrinsic updates improve over their Euclidean counterparts for low and moderate noise. The spectral pair is tied for \(\kappa=1\) and \(\kappa=10\), and favors Muon when \(\kappa=100\). At \(\rho=0.50\), observation noise dominates and the ordering becomes mixed.

\subsection{Fixed-rank CIFAR-100 Classification Heads}
\label{app:fixed_rank_cifar_heads}



The task is to assign each CIFAR-100 image to one of 100 semantic categories using frozen ResNet-18 features~\citep{krizhevsky2009learning,he2016deep}. A low-rank linear head maps these features to class scores while making the class weight vectors share a small set of latent visual directions, so the learned classifier is the fixed-rank parameter \(X=BA\).  We consider two regimes: a representative-sensitivity diagnostic and a balanced spectral comparison with Spectron~\citep{janson2026stabilizing}. 

For both regimes, given frozen features \(h_i\in\RR^{512}\) and labels \(y_i\), we train a rank-\(r\) classifier \(X=BA\) and bias \(b\in\RR^{100}\) by minimizing
\[
\min_{B,A,b}\ \frac{1}{N}\sum_{i=1}^N
\ell_{\mathrm{CE}}\bigl(h_i^\top BA+b,y_i\bigr)
+\frac{\lambda}{2}\|BA\|_F^2+\frac{\lambda_b}{2}\|b\|_2^2 .
\]
We use \(\lambda=\lambda_b=10^{-4}\). The rank \(r\), initialization, and learning-rate grids are specified below.
\subsubsection{Representative-sensitivity diagnostic}
\label{app:fixed_rank_cifar_representative}

\paragraph{Setup.}
The representative-sensitivity diagnostic uses a rank-$10$ head,
\[
X=BA,\qquad B\in\RR^{512\times 10},\quad A\in\RR^{10\times 100}.
\]
To test representative sensitivity, we keep the initial product fixed and rescale only the factors,
\[
(B_0,A_0)=(\alpha\widetilde{B}_0,\alpha^{-1}\widetilde{A}_0),\qquad \alpha=10^3.
\]
Thus \(B_0A_0\) is unchanged up to numerical precision, while the two factors are imbalanced. This is a real supervised task with a deliberately chosen representative of the same initial classifier.

\paragraph{Methods.}
We compare the three norm-matched Euclidean/intrinsic pairs used above. The Euclidean methods use the grid
\[
\{10^{-5},\,10^{-4},\,10^{-3},\,10^{-2},\,10^{-1}\}.
\]
For the intrinsic methods, we use local grids,
\[
\text{RGD}:\{0.1,0.3,1.0\},\qquad
\text{iMuon}:\{0.03,0.1,0.3\},\qquad
\text{iMuon-Nu}:\{0.3,1.0,3.0\}.
\]
The learning rate is selected by validation accuracy for each method.

\paragraph{Dataset.}
We use the standard CIFAR-100 train/test split, reserve \(10\%\) of the training set for validation, extract frozen ResNet-18 features, and apply feature \(\ell_2\)-normalization. We train for \(50\) epochs with batch size \(32\) and average over seeds \(\{0,1,2\}\).
\paragraph{Results.}
\Cref{tab:app_fixed_rank_cifar_representative} reports validation-selected test accuracy. Under the same rescaled initial product, the intrinsic methods reach \(33.9\%\) to \(34.3\%\), while their Euclidean counterparts range from \(6.8\%\) to \(10.0\%\).

\begin{table}[t]
\centering
\caption{Representative-sensitivity diagnostic on a frozen-feature CIFAR-100 rank-$10$ classification head with \(\alpha=10^3\). We report mean test accuracy $\pm$ standard deviation over three seeds. Boldface marks the higher-accuracy method within each Euclidean/intrinsic pair.}
\label{tab:app_fixed_rank_cifar_representative}
{\footnotesize
\setlength{\tabcolsep}{3.5pt}
\begin{tabular}{lllcc}
\toprule
Norm & Euclidean & Intrinsic & Euclidean acc. & Intrinsic acc.\\
\midrule
Frobenius
& EGD ($10^{-5}$)
& RGD ($0.1$)
& 0.075 $\pm$ 0.007
& \textbf{0.342 $\pm$ 0.004} \\
Spectral
& Muon ($10^{-5}$)
& iMuon ($0.03$)
& 0.068 $\pm$ 0.005
& \textbf{0.339 $\pm$ 0.003}\\
Nuclear
& NuMuon ($10^{-4}$)
& iMuon-Nu ($0.3$)
& 0.100 $\pm$ 0.003
& \textbf{0.343 $\pm$ 0.004} \\
\bottomrule
\end{tabular}
}
\end{table}

\subsubsection{Balanced spectral comparison with Spectron}
\label{app:fixed_rank_cifar_spectron}

\paragraph{Setup.}
The balanced spectral comparison uses a rank-$40$ head:
\[
    X=BA,\qquad B\in\RR^{512\times 40},\quad A\in\RR^{40\times 100},
\]
with \(\alpha=1\). For each seed, all methods use the same train/validation split and the same rank-$40$ class-mean-SVD initialization of \(B_0A_0\). This regime compares spectral fixed-rank updates without artificial factor rescaling.

\paragraph{Methods.}
We compare Muon, Spectron~\citep{janson2026stabilizing}, and iMuon. Muon applies factor-wise Euclidean spectral LMO updates to \(B\) and \(A\)~\citep{jordan2024muon,bernstein2025old}. Spectron orthogonalizes the factor gradients and rescales the update by \(1/(\|B\|_2+\|A\|_2+1)\). We run Spectron without momentum to match the optimizer protocol used here. iMuon uses the coupled fixed-rank metric and the intrinsic spectral LMO from \Cref{cor:fixed_rank}. We train for $50$ epochs with batch size $2048$, use feature \(\ell_2\)-normalization, and select the learning rate by validation accuracy. The learning-rate grids are
\[
\begin{aligned}
\text{Muon}:&\ \{0.03,0.05,0.07,0.10,0.15\},\\
\text{Spectron}:&\ \{0.5,0.7,1.0,1.5,2.0\},\\
\text{iMuon}:&\ \{0.05,0.07,0.10,0.15,0.18,0.20,0.25\}.
\end{aligned}
\]
\paragraph{Dataset.}
We use the standard CIFAR-100 train/test split and reserve \(10\%\) of the training split for validation. Results are averaged over seeds \(\{0,1,2\}\).

\paragraph{Results.}
\Cref{tab:app_fixed_rank_cifar_spectron} reports validation-selected test accuracy. iMuon obtains the highest mean accuracy, \(53.93\%\), compared with \(53.82\%\) for factor-wise Muon and \(53.92\%\) for Spectron. The gaps are small, so we view this as a balanced comparison with fixed-rank spectral baselines rather than a large separation. \Cref{fig:app_fixed_rank_cifar_spectron_train_test} shows the corresponding training trajectories. The three methods reach similar training objectives, while iMuon remains competitive in test accuracy throughout training.

\begin{table}[htbp!]
\centering
\caption{Frozen-feature CIFAR-100 classification with a rank-$40$ fixed-rank head. We report validation-selected test accuracy over three seeds. Higher is better.}
\label{tab:app_fixed_rank_cifar_spectron}
{\small
\setlength{\tabcolsep}{4.5pt}
\begin{tabular}{lcc}
\toprule
Method & Selected LR & Test acc. (\%) \\
\midrule
Factor-wise Muon & \(0.05\) & \(53.82 \pm 0.02\) \\
Spectron & \(0.5\) & \(53.92 \pm 0.09\) \\
iMuon & \(0.18\) & \(\mathbf{53.93 \pm 0.09}\) \\
\bottomrule
\end{tabular}
}
\end{table}

\begin{figure}[t]
    \centering
    \includegraphics[scale=0.85]{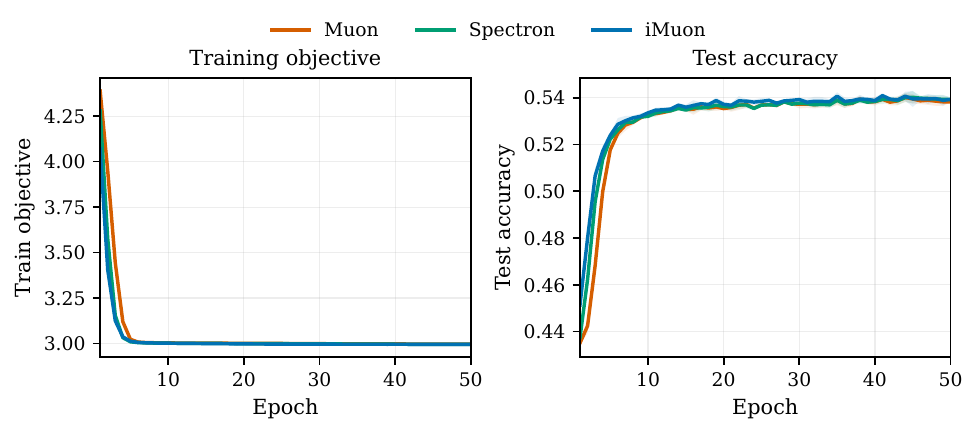}
    \caption{Training trajectories for the balanced fixed-rank CIFAR-100 rank-head comparison. Curves show means over three seeds with standard-deviation bands. The three methods reach similar training objectives, while iMuon remains competitive in test accuracy.}
    \label{fig:app_fixed_rank_cifar_spectron_train_test}
\end{figure}

\subsection{MovieLens-1M}
\label{app:fixed_rank_movielens}



\paragraph{Setup.}
We also test MovieLens-1M~\citep{harper2015movielens}, a standard real-data matrix completion benchmark. The data contain $6040$ users, $3952$ movies, and $1{,}000{,}209$ observed ratings. Following standard recommender preprocessing~\citep{koren2009matrix}, we subtract a global mean, user biases, and movie biases. We then fit the residual ratings with a rank-$5$ fixed-rank factorization \( X = BA\).
On the training entries $\Omega_{\mathrm{train}}$, the objective is
\[
    f(B,A)
    =
    \frac{1}{2|\Omega_{\mathrm{train}}|}
    \sum_{(u,i)\in\Omega_{\mathrm{train}}}
    \bigl((BA)_{ui}-R_{ui}\bigr)^2
    +
    \frac{\lambda}{2}\left(\|B\|_F^2+\|A\|_F^2\right),
\]
where \(R\) is the residual rating matrix after bias correction. All methods start from the rank-$5$ SVD of the zero filled observed residual matrix. We use $\lambda=1$.

\paragraph{Methods.}
We compare the same norm-matched Euclidean/intrinsic pairs as above and add Spectron~\citep{janson2026stabilizing} to the spectral group. Spectron is run without momentum, matching the optimizer protocol used in this section. Each method uses the decaying schedule \(\eta_t=\eta_0/\sqrt{t}\), with \(\eta_0\) selected by validation RMSE from the same grid
\[
\{0.3,0.5,1,1.5,2,3,5,7,10,15,20,30,50,75,100,150,200,300,500,700,1000\}.
\]
The values of \(\eta_0\) are not directly comparable across methods because the update directions have different normalizations. 
\paragraph{Dataset.}
We use a random \(20\%\) test split and hold out \(10\%\) of the remaining observations for validation. Validation RMSE is used for model selection. Results are averaged over seeds $\{0,1,2\}$.

\paragraph{Results.}
\Cref{tab:app_fixed_rank_movielens} reports validation-selected test RMSE. RGD and iMuon-Nu improve over their Euclidean counterparts. In the spectral group, Spectron gives the lowest mean RMSE, while iMuon remains close and improves over factor-wise Muon. But these margins are small. 

\begin{table}[t]
\centering
\caption{MovieLens-1M fixed-rank matrix completion with observed SVD initialization, rank $5$, factor $\ell_2$ regularization $\lambda=1$, and $\eta_t=\eta_0/\sqrt{t}$. We report mean test RMSE $\pm$ standard deviation over three seeds. Lower is better. Boldface marks the lowest RMSE within each norm group.}
\label{tab:app_fixed_rank_movielens}
{\small
\setlength{\tabcolsep}{4.5pt}
\begin{tabular}{llcc}
\toprule
Norm & Method & Selected \(\eta_0\) & Test RMSE \\
\midrule
Frobenius & EGD & $5$ & 0.8694 $\pm$ 0.0023 \\
           & RGD & $500$ & \textbf{0.8691 $\pm$ 0.0021} \\
\addlinespace
Spectral & Muon & $20$ & 0.8693 $\pm$ 0.0019 \\
         & Spectron & $700$ & \textbf{0.8691 $\pm$ 0.0022} \\
         & iMuon & $300$ & 0.8692 $\pm$ 0.0018 \\
\addlinespace
Nuclear & NuMuon & $7$ & 0.8729 $\pm$ 0.0026 \\
        & iMuon-Nu & $100$ & \textbf{0.8715 $\pm$ 0.0023} \\
\bottomrule
\end{tabular}
}
\end{table}

\section{SPD Prototype Learning for Covariance-Based Image Classification}
\label{app:spd_experiments}


\Cref{tab:app_spd_lmo_directions} gives the SPD update directions for the six methods: EGD, Muon, and NuMuon use the Euclidean column, while RGD, iMuon, and iMuon-Nu use the intrinsic column. Let \(G=\nabla_X f\) and \(H=X^{1/2}GX^{1/2}\). The table reports the LMO maximizer \(\xi^*\). The optimizer retracts along \(-\eta\xi^*\).

\begin{table}[t]
\centering
\caption{Euclidean and intrinsic LMO directions for the SPD prototype experiment. Here \((\mu_1,u_1)\) and \((\lambda_1,q_1)\) are eigenpairs of \(G\) and \(H\), respectively, associated with the largest absolute eigenvalue, and \(\operatorname{sign}(\cdot)\) denotes the matrix sign function.}
\label{tab:app_spd_lmo_directions}
{\small
\setlength{\tabcolsep}{4.5pt}
\begin{tabular}{lll}
\toprule
Norm & Euclidean \(\xi^*\) & Intrinsic \(\xi^*\) \\
\midrule
Frobenius
& \(\displaystyle \,G/\|G\|_F\)
& \(\displaystyle \,XGX/\|H\|_F\) \\
Spectral
& \(\displaystyle \,\operatorname{sign}(G)\)
& \(\displaystyle \,X^{1/2}\operatorname{sign}(H)X^{1/2}\) \\
Nuclear
& \(\displaystyle \,\operatorname{sign}(\mu_1)u_1u_1^\top\)
& \(\displaystyle \,\operatorname{sign}(\lambda_1)X^{1/2}q_1q_1^\top X^{1/2}\) \\
\bottomrule
\end{tabular}%
}
\end{table}

\paragraph{Setup.}
We consider supervised prototype classification from SPD covariance descriptors. Each example is a covariance matrix \(C_i\in S_{++}^{32}\) with coarse label \(y_i\in\{1,\ldots,K\}\), where \(K=20\) for CIFAR-100 coarse labels. We learn one SPD prototype \(X_c\in S_{++}^{32}\) per class, so the optimization variable is \(X=(X_1,\ldots,X_K)\in(S_{++}^{32})^K\).

Given prototypes \(X_1,\ldots,X_K\), we form logits from negative squared AI distance
\[
 z_{ic}=-\beta\,d_{\mathrm{AI}}(C_i,X_c)^2, \qquad 
 d_{\mathrm{AI}}(C,X)^2=\left\|\log\!\left(X^{-1/2} C X^{-1/2}\right)\right\|_F^2 .
\]
We minimize the regularized cross-entropy objective
\[
f(X)
=
\frac{1}{N}\sum_{i=1}^N\ell_{\mathrm{CE}}(z_i,y_i)
+
\lambda\sum_{c=1}^K d_{\mathrm{AI}}(X_c,\bar X_c)^2 ,
\]
where \(\bar{X}_c\) is the log-Euclidean class-mean initialization for class \(c\). We use the AI metric from the main text, initialize prototypes by log-Euclidean class means, set the logit scale to \(\beta=8.0\), and use \(\lambda=10^{-3}\). We train for \(20\) epochs with training batch size \(64\) and evaluation batch size \(512\), and report results over seeds \(\{0,1,2\}\). 

\paragraph{Methods.}
We use the same norm-matched comparison structure: RGD vs.\ EGD for the Frobenius norm, iMuon vs.\ Muon for the spectral norm, and iMuon-Nu vs.\ NuMuon for the nuclear norm. All methods are tuned over the shared learning-rate grid
\[
\{10^{-3},\,3\!\cdot\!10^{-3},\,10^{-2},\,3\!\cdot\!10^{-2},\,10^{-1}\}.
\]
The selected values are \(0.01\) for RGD and iMuon, and \(0.001\) for iMuon-Nu, EGD, Muon, and NuMuon. Model selection uses validation accuracy.

\paragraph{Dataset.}
We use CIFAR-100 coarse labels (\(20\) classes)~\citep{krizhevsky2009learning} together with frozen ResNet-18 visual features~\citep{he2016deep}. We extract the \texttt{layer3} feature map for each image, keep the backbone fixed, and project each spatial token from the backbone channel dimension to a $32$-dimensional space using a fixed random projector with seed $17$. We then build a shrinkage regularized covariance descriptor \(C_i\in S_{++}^{32}\)~\citep{ledoit2004well,tuzel2008pedestrian,huang2017riemannian}
for each image using covariance shrinkage $0.1$ and diagonal stabilization $\varepsilon=10^{-4}$. We use the standard CIFAR-100 train/test split and reserve $10\%$ of the training set as a validation split for learning-rate selection.

\paragraph{Results.}
\Cref{tab:app_spd_cifar_classification} reports validation-selected test accuracy for the three norm-matched Euclidean/intrinsic pairs. Each intrinsic method improves over its Euclidean counterpart. The Frobenius gap is positive (RGD \(0.548\pm0.002\) vs.\ EGD \(0.523\pm0.026\)), the spectral gap is positive (iMuon \(0.475\pm0.009\) vs.\ Muon \(0.407\pm0.016\)), and the largest endpoint gap is for the nuclear pair (iMuon-Nu \(0.546\pm0.000\) vs.\ NuMuon \(0.422\pm0.032\)).

\begin{table}[t]
\centering
\caption{SPD covariance prototype classification on CIFAR-100 coarse labels. We report mean test accuracy \(\pm\) standard deviation over three seeds. Boldface marks the higher accuracy method within each pair.}
\label{tab:app_spd_cifar_classification}
{\small
\begin{tabular}{lcc}
\toprule
Method & Selected LR & Mean test acc. \\
\midrule
RGD & 0.01 & \textbf{0.548 $\pm$ 0.002} \\
EGD & 0.001 & 0.523 $\pm$ 0.026 \\
iMuon & 0.01 & \textbf{0.475 $\pm$ 0.009} \\
Muon & 0.001 & 0.407 $\pm$ 0.016 \\
iMuon-Nu & 0.001 & \textbf{0.546 $\pm$ 0.000} \\
NuMuon & 0.001 & 0.422 $\pm$ 0.032 \\
\bottomrule
\end{tabular}
}
\end{table}



\Cref{fig:app_spd_cifar_objective_diagnostic} reports a companion convergence plot for the classification term. It uses the same covariance features and norm-matched pairs, removes the prototype anchoring term, and uses a smaller CPU split. The intrinsic methods reach lower training cross entropy or higher test accuracy in each pair. Endpoint comparisons use \Cref{tab:app_spd_cifar_classification}.

\begin{figure}[t]
    \centering
    \includegraphics[scale=1.1]{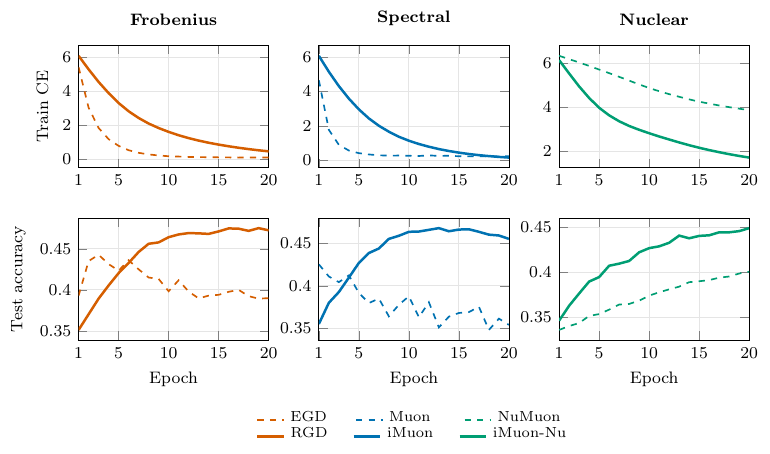}
   \caption{SPD convergence plot on frozen covariance features. The top row reports training cross-entropy and the bottom row reports test accuracy. Columns compare the Frobenius, spectral, and nuclear norm pairs. This variant omits the prototype anchoring term, so the accuracy values are not directly comparable with \Cref{tab:app_spd_cifar_classification}.}
    \label{fig:app_spd_cifar_objective_diagnostic}
\end{figure}

\section{Grassmann Fr\'echet Subspace Learning for Video Face Identification 
}
\label{app:grassmann_experiments}



\paragraph{Setup.}
We consider video-based face identification on YouTube Faces~\citep{wolf2011face}. Each video clip is encoded as a \(k\)-dimensional subspace \(S_i\in \mathrm{Gr}(64,k)\), and each identity class has a learned Grassmann prototype \(X_c\in\mathrm{Gr}(64,k)\). For each class \(c\), we optimize the Fr\'echet prototype objective
\[
    \min_{X_c\in\mathrm{Gr}(64,k)}
    \frac{1}{|\mathcal{I}_c|}
    \sum_{i\in\mathcal{I}_c}
    d_{\mathrm{Gr}}(X_c,S_i)^2,
\]
where \(\mathcal{I}_c\) indexes the training clips of class \(c\), and \(d_{\mathrm{Gr}}\) is the geodesic distance induced by principal angles. A test clip is classified by the nearest learned prototype under \(d_{\mathrm{Gr}}\).

We use the canonical Grassmann metric. With a basis \(X\in\RR^{64\times k}\) for a subspace, tangent vectors lie in the horizontal space \(\{\xi:X^\top\xi=0\}\), where \(G_X=I\). Hence \(\varphi(G_X^{1/2}\xi)\le \tau\) reduces to the Euclidean constraint \(\varphi(\xi)\le \tau\). This boundary case checks that the intrinsic LMO recovers the projected Euclidean update when the metric scaling is identity.


\paragraph{Methods.}
We optimize the three norm-matched pairs for \(100\) iterations, using the squared-geodesic loss with scale \(c_{\mathrm{Gr}}=0.25\) and \(\varepsilon=10^{-6}\). We also report GDA~\citep{hamm2008grassmann} and PML~\citep{huang2015projection} as reference methods. GDA learns a discriminative Grassmann projection, while PML learns a projection metric for subspaces.

\paragraph{Dataset.}
We use YouTube Faces clips~\citep{wolf2011face} represented as \(k\)-dimensional subspaces of a \(64\)-dimensional feature space, with \(k\in\{3,5,8,10\}\). We use the standard Grassmann geometry~\citep{edelman1998geometry,absil2008optimization} and evaluate over \(5\) random train/test splits with training fraction \(0.7\).

\paragraph{Results.}
\Cref{tab:app_grassmann_orthonormal} reports classification accuracy on YouTube Faces. Across all \(k\), each intrinsic method matches its Euclidean counterpart within the displayed precision. This is expected because the intrinsic and Euclidean LMO feasible sets are the same on the horizontal space, so they give the same spectral projections. The two implementations differ only in what is projected. For example, the intrinsic method projects $(I-XX^\top)\nabla_X f$ whereas the Euclidean method projects $\nabla_Xf$. 



\begin{table}[t]
\centering
\caption{Grassmann Fr\'echet prototype learning on YouTube Faces. We report mean classification accuracy \(\pm\) standard deviation over \(5\) splits. Boldface marks the highest accuracy in each row. Since \(G_X=I\) on the horizontal space, each intrinsic method coincides with its Euclidean projected counterpart.}

\label{tab:app_grassmann_orthonormal}
{\footnotesize
\setlength{\tabcolsep}{4pt}
\resizebox{\linewidth}{!}{%
\begin{tabular}{lcccccccc}
\toprule
$k$ & RGD & iMuon & iMuon-Nu & EGD & Muon & NuMuon & GDA & PML \\
\midrule
3 & 0.136 $\pm$ 0.017 & 0.147 $\pm$ 0.012 & 0.138 $\pm$ 0.018 & 0.136 $\pm$ 0.017 & 0.147 $\pm$ 0.012 & 0.138 $\pm$ 0.018 & 0.087 $\pm$ 0.012 & \textbf{0.160 $\pm$ 0.046} \\
5 & \textbf{0.219 $\pm$ 0.037} & 0.206 $\pm$ 0.050 & 0.211 $\pm$ 0.038 & \textbf{0.219 $\pm$ 0.037} & 0.206 $\pm$ 0.050 & 0.211 $\pm$ 0.038 & 0.094 $\pm$ 0.028 & 0.179 $\pm$ 0.031 \\
8 & 0.232 $\pm$ 0.016 & \textbf{0.238 $\pm$ 0.042} & \textbf{0.238 $\pm$ 0.034} & 0.232 $\pm$ 0.016 & \textbf{0.238 $\pm$ 0.042} & \textbf{0.238 $\pm$ 0.034} & 0.166 $\pm$ 0.022 & 0.194 $\pm$ 0.031 \\
10 & 0.262 $\pm$ 0.016 & 0.264 $\pm$ 0.038 & \textbf{0.266 $\pm$ 0.034} & 0.262 $\pm$ 0.016 & 0.264 $\pm$ 0.038 & \textbf{0.266 $\pm$ 0.034} & 0.164 $\pm$ 0.032 & 0.213 $\pm$ 0.011 \\
\bottomrule
\end{tabular}
}
}
\end{table}

\section{Stiefel Sub-Center Prototype Learning for Frozen-Feature Image Classification}
\label{app:stiefel_cifar_subcenter}

\paragraph{Setup.}
We consider supervised classification with class prototypes constrained to the Stiefel manifold~\citep{edelman1998geometry,absil2008optimization}. Given frozen image features \(h_i\in\RR^{512}\) and labels \(y_i\in\{1,\ldots,C\}\), with \(C=100\) for CIFAR-100, we learn \(q=4\) prototypes per class, giving \(Q=Cq\) prototypes in total. The prototypes are the columns of
\[
    X = [x_{1,1},\ldots,x_{1,q},\ldots,x_{C,1},\ldots,x_{C,q}]
    \in \mathrm{St}(512,Q),
    \qquad X^\top X=I_Q.
\]
For class \(c\), the classifier score is the best matching prototype score
\[
    s_c(h_i,X)=\max_{1\le j\le q} h_i^\top x_{c,j}.
\]
We use an additive margin softmax objective~\citep{wang2018additive}. For the target class \(y_i\), the score is shifted to \(s_{y_i}(h_i,X)-m\) before multiplying all class scores by a logit scale \(\gamma\). The optimization problem is
\[
    \min_{X\in\mathrm{St}(512,Q)}
    \frac{1}{N}\sum_{i=1}^N
    \ell_{\mathrm{CE}}\!\left(
    \gamma\,\bigl(s_1(h_i,X),\ldots,s_{y_i}(h_i,X)-m,\ldots,s_C(h_i,X)\bigr),
    y_i
    \right).
\]
We use margin \(m=0.5\), and logit scale \(\gamma=64\). The Stiefel constraint enforces global orthogonality among all class prototypes. This gives a real supervised task for comparing intrinsic and extrinsic spectral updates on \(\mathrm{St}(512,Q)\), \(Q=400\).

\paragraph{Methods.}
We compare RGD, iMuon, and SPEL. RGD is normalized Riemannian gradient descent under the canonical embedded Stiefel geometry. iMuon is the spectral norm instance of our intrinsic LMO, using the closed form tangent decomposition from the main text. SPEL is the Stiefel specialization of manifold constrained steepest descent~\citep{yang2026mcsd}. It forms a spectral steepest descent direction from the Riemannian gradient using extrinsic normalization, then retracts to the manifold. The updates are discussed in \Cref{app:compare_stiefel}. All methods use polar retraction. Learning rates are selected by validation accuracy from \(\{0.02,\,0.03,\,0.05,\,0.1\}\).

\paragraph{Dataset.}
We use CIFAR-100~\citep{krizhevsky2009learning} with frozen ResNet-18 features~\citep{he2016deep}. Features are \(\ell_2\)-normalized before training the Stiefel classifier. Each seed uses \(50\) training examples, \(10\) validation examples, and \(30\) test examples per class. We train for \(30\) epochs with batch size \(64\), evaluation batch size \(4096\), and select the learning rate by validation accuracy.

\paragraph{Results.}
\Cref{tab:app_stiefel_cifar_subcenter} reports test accuracy and test cross entropy (CE), the average negative log likelihood under the additive margin softmax classifier. iMuon has the best mean accuracy and the lowest CE. It improves over RGD by \(1.1\) percentage points and is slightly above SPEL. \Cref{fig:app_stiefel_cifar_train_test_curves} shows that iMuon and SPEL reduce the training objective faster than RGD, with iMuon slightly higher in mean test accuracy near the end of training.

\begin{table}[t]
\centering
\caption{Stiefel subcenter prototype classification on frozen CIFAR-100 features. We train a \(q=4\) prototype per class additive margin classifier on \(\mathrm{St}(512,400)\), select the learning rate by validation accuracy for each seed, and report mean test accuracy and test cross entropy. CE denotes average negative log likelihood. Higher accuracy and lower CE are better.}
\label{tab:app_stiefel_cifar_subcenter}
{\small
\setlength{\tabcolsep}{7pt}
\begin{tabular}{lccc}
\toprule
Method & Selected LRs & Test acc. & Test CE \\
\midrule
RGD & \(0.1,0.1,0.1\) & \(0.542 \pm 0.013\) & \(1.853 \pm 0.026\) \\
iMuon & \(0.03,0.02,0.02\) & \(\mathbf{0.553 \pm 0.006}\) & \(\mathbf{1.839 \pm 0.024}\) \\
SPEL & \(0.03,0.03,0.05\) & \(0.545 \pm 0.012\) & \(1.869 \pm 0.066\) \\
\bottomrule
\end{tabular}
}
\end{table}

\begin{figure}[t]
    \centering   \includegraphics[scale = 0.65]{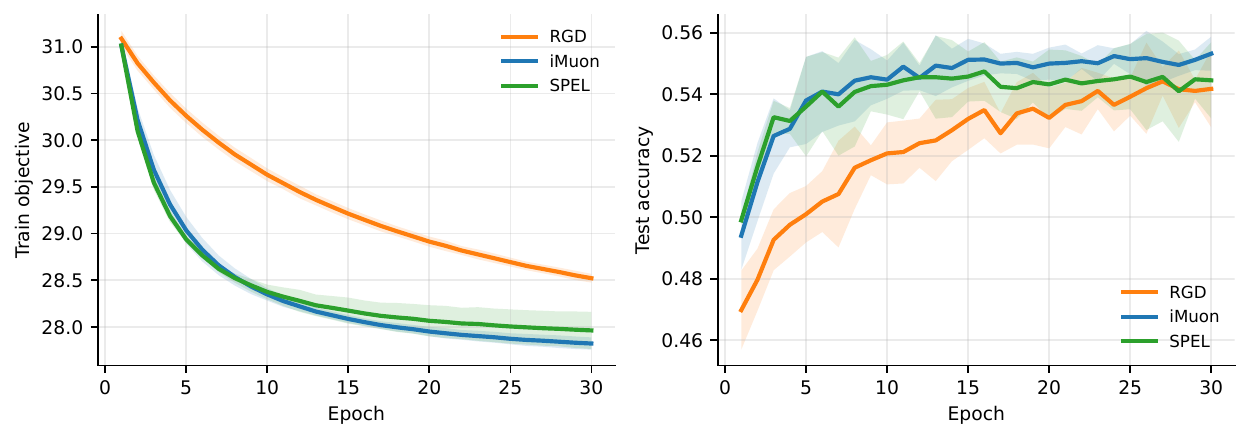}
    \caption{Stiefel subcenter prototype classification trajectories. The left panel reports the training objective and the right panel reports test accuracy over epochs. Curves show means over three seeds with standard deviation bands.}
    \label{fig:app_stiefel_cifar_train_test_curves}
\end{figure}

\section{Additional Manifolds and Metrics}\label{app:other_manifolds}

The LMO framework of \Cref{sec:principle} is stated for general Riemannian matrix manifolds.
In the main text we instantiate it on four manifolds: three quotient manifolds ($\M_r$, $S_{++}^n$, $\text{Gr}(m,r)$) and the Stiefel manifold $\text{St}(m,r)$.
Here we discuss broader applicability and several instructive boundary cases:
(i)~general conditions for closed-form solutions,
(ii)~the fixed-rank manifold from the \emph{embedded} viewpoint, why the joint spectral-norm constraint on $\dot{X}$ does not decouple, and how the quotient formulation resolves this,
(iii)~the Stiefel manifold from the \emph{embedded submanifold} viewpoint, why the naive approach fails and how the block decomposition of \Cref{subsec:other_manifolds} resolves it,
(iv)~the spectrahedron (trace-one fixed-rank PSD manifold), a case where SV-invariance fails and the clean reduction breaks down,
(v)~the spectral sphere $\{X:\|X\|_2=R\}$, another non-SV-invariant example, recently used in the SSO optimizer~\citep{xie2026sso},
and (vi)~alternative SPD metrics (Log-Euclidean and Bures--Wasserstein).

\subsection{General applicability}

The LMO~\eqref{eqn:main_lmo} is well-defined on \emph{any} Riemannian manifold $\M$ with a metric operator $G_x$: it is simply a convex optimization over the tangent space.
The two structural properties that enable closed-form solutions and symmetry preservation are:
\begin{enumerate}
\item \textbf{Left-right structure} (\Cref{prop:symmetry}): the scaled map $\rho_g = G_{\tilde{x}}^{1/2} \circ D_xg \circ G_x^{-1/2}$ acts as $Z \mapsto Q_g Z R_g^\top$ for orthogonal $Q_g, R_g$.
This ensures $\varphi(\rho_g Z) = \varphi(Z)$ for unitarily invariant $\varphi$.
\item \textbf{singular value invariance} (\Cref{prop:tangent_compat}): the scaled tangent space $\mathcal{Z}_x$ is SV-invariant (\Cref{def:sv_invariant}), so the tangent-space constraint is redundant and the LMO reduces to a standard Euclidean problem with closed-form solutions.
\end{enumerate}

\noindent For \emph{embedded submanifolds} (no quotient symmetry), the symmetry condition (1) is not required.
Condition (2) alone suffices for closed-form solutions.
If $\mathcal{Z}_x$ is \emph{not} SV-invariant, the LMO remains a convex problem but requires a constrained solver rather than $\Ortho$ or $H/\|H\|_F$.

\subsection{Fixed-rank manifold: embedded viewpoint}\label{app:lowrank_embedded}

In \Cref{subsec:lowrank} we work on the quotient $\M_r=(\RR^{m\times r}_*\times\RR^{r\times n}_*)/\text{GL}(r)$ with the coupled metric (Section~\ref{subsec:lowrank}), which gives $G_x^{1/2}\neq I$ and decouples the LMO into independent $\dot{B}$ and $\dot{A}$ subproblems.
Here we discuss the alternative \emph{embedded} viewpoint, working directly on $\{W\in\RR^{m\times n}:\text{rank}(W)=r\}$ with $G_x=I$, and explain why it does not yield the same clean decoupled solutions.

\noindent
\paragraph{The embedded LMO.}
On the embedded rank-$r$ manifold, the tangent space at $X= BA$ is $T_{X}\M_r=\{\dot{B}A+B\dot{A}:\dot{B}\in\RR^{m\times r},\;\dot{A}\in\RR^{r\times n}\}$.
With $G_x=I$, the LMO becomes
\[
\dot{X}^* = \argmax_{\dot{X}\in T_{X}\M_r,\;\varphi(\dot{X})\le\tau}\;\inner{\dot{X}}{\nabla_X f}.
\]
For $\varphi=\|\cdot\|_2$, the unconstrained maximizer over all of $\RR^{m\times n}$ is $\Ortho(\nabla_X f)$, which has rank $\min(m,n)$ and generally does not lie in $T_{X}\M_r$ (a subspace of rank $\le 2r$).
Riemannion~\citep{bogachev2026riemannion} addresses this by first computing a rank-$2r$ orthogonalization $\Ortho_r(M_t)$ of the momentum (which lies in $T_X\M_r$), and then projecting the result back onto $T_X\M_r$.
This tangent space projection changes the singular values, so the spectral-norm constraint is only approximately satisfied.

\noindent
\paragraph{Failure of SV-invariance.}
The scaled tangent space $\mathcal{Z}_{X}=T_{X}\M_r=\{\dot{B}A+B\dot{A}\}\subset\RR^{m\times n}$ is \emph{not} singular value invariant: given $\dot{X}=U\Sigma V^\top\in T_{X}\M_r$, replacing $\Sigma$ with a different non-negative diagonal $D$ yields $UDV^\top$ which generally does not decompose as $\dot{B}A+B\dot{A}$ for any $(\dot{B},\dot{A})$.
Thus \Cref{prop:tangent_compat} does not apply, and the constrained LMO requires solving $\max_{\dot{X}\in T_{X}\M_r,\,\|\dot{X}\|_2\le\tau}\inner{\dot{X}}{\nabla_X f}$, a constrained problem with no clean closed form.

\noindent
\paragraph{Why the quotient formulation resolves this.}
On the quotient, each factor block lives in a full ambient space ($\RR^{m\times r}$ for $\dot{B}$, $\RR^{r\times n}$ for $\dot{A}$), which is trivially SV-invariant.
The product-norm decoupling $\max(\varphi(Z_{B}),\varphi(Z_{A}))\le\tau$ then gives two independent LMOs, each with a clean closed form.
The price is twofold: (a)~the product norm is a relaxation of the joint norm $\varphi(\dot{X})\le\tau$ (specifically, $\|\dot{X}\|_2\le 2\tau$ rather than $\tau$), and (b)~the non-trivial metric $G_x^{1/2}\neq I$ must be inverted (the Gram-root inversion).
But this price buys symmetry invariance (\Cref{prop:symmetry}) and a condition-number-free bound $\|\xi^*\|_x^2\le 2r\tau^2$ (\Cref{app:proof_lowrank_rate}).

\noindent
\paragraph{Frobenius norm.}
For $\varphi=\|\cdot\|_F$, the embedded LMO does have a closed form: $\dot{X}^*\propto P_{T_{X}}(\nabla_X f)$, the tangent space projection of the gradient.
This coincides with the quotient solution $\dot{X}=\dot{B}A+B\dot{A}$ for the Frobenius norm, since both recover the Riemannian gradient (up to normalization).
The distinction between embedded and quotient viewpoints matters only for $\varphi\neq\|\cdot\|_F$.

\subsection{Stiefel manifold: embedded viewpoint}\label{app:stiefel}

In \Cref{subsec:other_manifolds} we derived exact closed-form LMO solutions on $\text{St}(m,r)$ via the tangent space decomposition $T_X\text{St}=X\cdot\text{Skew}(r)\oplus X_\perp\cdot\RR^{(m-r)\times r}$ with product-norm decoupling.
Here we discuss the alternative \emph{embedded} viewpoint, treating $T_X\text{St}$ as a single subspace of $\RR^{m\times r}$, and explain why it does not yield clean closed-form solutions.

\noindent
\paragraph{The embedded LMO.}
Since $G_X=I$ (Euclidean metric), the LMO~\eqref{eqn:main_lmo} on $\text{St}(m,r)$ is
\[
\xi^* = \argmax_{\xi\in T_X\text{St},\;\varphi(\xi)\le\tau}\;\inner{\xi}{\nabla_X f}
\]
as $g_x(\xi,\text{grad}\,f) = \inner{\xi}{\nabla_X f}$. For $\varphi=\|\cdot\|_F$, the solution is the projected gradient $P_{T_X}(\nabla_X f)$ normalized to unit Frobenius norm, which is straightforward.
For $\varphi=\|\cdot\|_2$, the unconstrained maximizer over all of $\RR^{m\times r}$ is $\Ortho(\nabla_X f)$, but this generally does \emph{not} lie in $T_X\text{St}=\{\xi:X^\top\xi+\xi^\top X=0\}$.
A tangent space projection of $\Ortho(\nabla_X f)$ is possible but changes the singular values, so the result is no longer the polar factor, and the spectral-norm constraint $\|\xi\|_2\le\tau$ is not tight in general.

\noindent
\paragraph{Why the decomposition does not help the embedded (joint-norm) LMO.}
One might hope that the tangent space decomposition $\xi=XA+X_\perp B$ could also solve the embedded LMO under the \emph{joint} norm $\varphi(\xi)=\varphi\bigl(\begin{smallmatrix}A\\B\end{smallmatrix}\bigr)\le\tau$.
However, this constraint couples $A$ and $B$: for $\varphi=\|\cdot\|_2$, the unconstrained maximizer $\Ortho\bigl(\begin{smallmatrix}S\\N\end{smallmatrix}\bigr)$ produces a stacked matrix whose top $r\times r$ block is generally \emph{not} skew-symmetric.
Unlike the product-norm case $\max(\varphi(A),\varphi(B))\le\tau$ (which decouples), the joint constraint requires solving a constrained problem where the skew-symmetry of $A$ interacts with the spectral-norm bound, no clean closed form exists.
For $\varphi=\|\cdot\|_F$, the distinction vanishes: $\|\xi\|_F^2=\|A\|_F^2+\|B\|_F^2$ and the unconstrained solution $\xi^*\propto P_{T_X}(\nabla_X f)=XS+X_\perp N$ automatically has $A=S\in\text{Skew}(r)$, so both the joint and product norms give the same answer.

\noindent
\paragraph{Failure of SV-invariance.}
The root cause is that $T_X\text{St}\subset\RR^{m\times r}$ is \emph{not} singular value invariant (\Cref{def:sv_invariant}): given $\xi=U\Sigma V^\top\in T_X\text{St}$, replacing $\Sigma$ with a different non-negative diagonal $D$ yields $UDV^\top$ which generally violates $X^\top(UDV^\top)+(UDV^\top)^\top X=0$.
Thus \Cref{prop:tangent_compat} does not apply to the full tangent space, and the constrained LMO does not reduce to the unconstrained problem.
Solving the constrained problem directly requires a Lagrange multiplier for the symmetry constraint, which does not admit a clean closed form.

\noindent
\paragraph{Resolution via block decomposition.}
The product-norm decomposition of \Cref{subsec:other_manifolds} resolves this by splitting $T_X\text{St}=X\cdot\text{Skew}(r)\oplus X_\perp\cdot\RR^{(m-r)\times r}$ and applying the LMO independently to each block.
The normal block $\RR^{(m-r)\times r}$ is trivially SV-invariant, and the skew-symmetric block $\text{Skew}(r)$ is SV-invariant under paired singular value replacement (skew-symmetric matrices have equal-pair singular values, and the standard closed-form solutions of \Cref{prop:tangent_compat} respect this pairing).

\subsection{Spectrahedron (trace-one fixed-rank PSD manifold)}\label{app:spectrahedron}

\noindent
\paragraph{Manifold and quotient model.}
The fixed-rank trace-one PSD manifold~\citep{journee2010lowrank} is
\[
\mathcal{M}_{r,1}:=\{X\in\RR^{m\times m}: X\succeq 0,\ \rank(X)=r,\ \tr(X)=1\}.
\]
Writing $X=YY^\top$ with $Y\in\RR_*^{m\times r}$ and $\|Y\|_F=1$, one obtains the quotient representation
\[
\mathcal{M}_{r,1}\cong \overline{\mathcal{M}}_{r,1}/O(r),
\qquad
\overline{\mathcal{M}}_{r,1}:=\{Y\in\RR_*^{m\times r}:\|Y\|_F=1\},
\]
where $Y$ and $YQ$ represent the same $X$ for every $Q\in O(r)$.
This is the natural PSD analogue of the fixed-rank quotient~\citep{journee2010lowrank}, with the additional trace-one constraint enforced by $\|Y\|_F=1$.

\noindent
\paragraph{Tangent and horizontal spaces.}
Let $R:=Y^\top Y\succ 0$ and write a compact QR factorization $Y=UR^{1/2}$ with $U\in\text{St}(m,r)$.
The tangent space of the total space is
\[
T_Y\overline{\mathcal{M}}_{r,1}=\{\eta\in\RR^{m\times r}:\tr(Y^\top\eta)=0\}.
\]
The vertical space is $\mathcal{V}_Y=\{Y\Omega: \Omega\in\text{Skew}(r)\}$.
With the Euclidean metric on the total space, the horizontal space is therefore
\[
\mathcal{H}_Y=\{\eta\in T_Y\overline{\mathcal{M}}_{r,1}: Y^\top\eta\text{ is symmetric}\}
=\{\eta: Y^\top\eta\in\text{Sym}_0(r)\},
\]
where $\text{Sym}_0(r):=\{S\in\text{Sym}(r):\tr(S)=0\}$.
Every horizontal vector decomposes uniquely as
\[
\eta = U R^{-1/2} S + U_\perp B,
\qquad
S\in\text{Sym}_0(r),\quad B\in\RR^{(m-r)\times r},
\]
with $U_\perp$ an orthogonal complement of $U$.

\noindent
\paragraph{Metric and scaling.}
With the quotient metric induced by the Euclidean metric upstairs,
\[
g_Y(\eta_1,\eta_2)=\tr(\eta_1^\top\eta_2)
=\tr(R^{-1}S_1R^{-1}S_2)+\tr(B_1^\top B_2).
\]
Equivalently, after the change of variables
\[
Z_S:=R^{-1/2}SR^{-1/2},
\qquad
Z_B:=B,
\]
the norm becomes Euclidean:
\[
\|\eta\|_Y^2 = \|Z_S\|_F^2 + \|Z_B\|_F^2.
\]
Thus the scaled tangent space is not the full product space $\text{Sym}(r)\times\RR^{(m-r)\times r}$, but rather
\[
\mathcal{Z}_Y
=
\{(Z_S,Z_B): R^{1/2} Z_S R^{1/2}\in\text{Sym}_0(r),\ Z_B\in\RR^{(m-r)\times r}\}.
\]

\noindent
\paragraph{Why the clean reduction fails.}
The key obstruction is the trace-free constraint on the symmetric block.
Unless $R$ is a scalar multiple of the identity, the set
$\{Z_S: R^{1/2} Z_S R^{1/2}\in\text{Sym}_0(r)\}$ is not singular value invariant in the sense of \Cref{def:sv_invariant}.
Indeed, if $R=I$ and $Z_S=\diag(1,-1,0,\ldots,0)$, then $Z_S\in\mathcal{Z}_Y$ (symmetric, trace zero).
Its SVD is $Z_S=U\Sigma V^\top$ with $\Sigma=\diag(1,1,0,\ldots,0)$ and $V=\diag(1,-1,1,\ldots,1)$.
Choosing $D=\diag(1,0,\ldots,0)$ gives $UDV^\top=\diag(1,0,\ldots,0)$, which has trace $1\neq 0$ and thus $\notin\mathcal{Z}_Y$.
Therefore \Cref{prop:tangent_compat} does not apply: the unconstrained Euclidean LMO over all matrices no longer coincides with the tangent-constrained problem.

\noindent
\paragraph{Consequences for the three norms.}
For the Frobenius norm, the LMO remains clean because orthogonal projection onto the horizontal space is explicit.
If $\nabla_X f\in\text{Sym}(m)$ denotes the Euclidean gradient with respect to $X=YY^\top$, the lifted objective is
\[
\langle \eta,\nabla_Y f\rangle,
\qquad
\nabla_Y f = 2(\nabla_X f)Y,
\]
and the Frobenius-norm solution is simply the normalized horizontal projection of $\nabla_Y f$.
This is the quotient analogue of standard Riemannian steepest descent.

For the spectral and nuclear norms, however, no analogue of the fixed-rank/SPD/Grassmann closed forms emerges.
The problem is still a convex LMO on a linear space, so it is well posed and can be solved numerically, but the maximizer is constrained by the coupled condition $R^{1/2} Z_S R^{1/2}\in\text{Sym}_0(r)$ rather than by singular values alone.
As a result, one should not expect a simple formula based only on $\Ortho(H)$, a matrix sign, or a rank-$1$ truncation of the scaled gradient.

\noindent
\paragraph{Takeaway.}
The trace-one fixed-rank PSD manifold fits the intrinsic-LMO framework and retains the explicit Frobenius instance, but the coupling $R^{1/2}Z_S R^{1/2}\in\mathrm{Sym}_0(r)$ breaks singular value invariance, so the spectral and nuclear cases no longer admit the closed forms of \Cref{sec:solutions}.

\subsection{Spectral sphere}\label{app:spectral_sphere}

The spectral sphere $\mathcal{S}_R:=\{X\in\RR^{m\times n}:\|X\|_2=R\}$ constrains only the \emph{largest} singular value of $X$ to equal $R$. The remaining singular values are free to take any values in $[0,R]$.
When the top singular value is simple (a generic condition), the manifold is smooth with tangent space
\[
T_X\mathcal{S}_R = \{\xi\in\RR^{m\times n} : \langle u_1 v_1^\top, \xi\rangle = 0\},
\]
where $(u_1,v_1)$ are the leading left and right singular vectors of $X$, and the gradient of the spectral norm is $\nabla_X\|X\|_2 = u_1 v_1^\top$.

\noindent
\paragraph{Failure of SV-invariance.}
The tangent space is the hyperplane $\{\xi:\langle u_1 v_1^\top, \xi\rangle=0\}$, defined by an inner product with a \emph{data-dependent} rank-$1$ matrix.
This is not SV-invariant: given $\xi=U\Sigma V^\top\in T_X\mathcal{S}_R$, replacing $\Sigma$ with a different non-negative diagonal $D$ yields $UDV^\top$ whose inner product $\langle u_1 v_1^\top, UDV^\top\rangle=\sum_i d_i(u_1^\top u_i)(v_1^\top v_i)$ generally differs from zero.
Thus \Cref{prop:tangent_compat} does not apply, and the spectral-norm LMO $\max\{\langle \nabla_X f,\xi\rangle: \xi\in T_X\mathcal{S}_R,\,\|\xi\|_2\le\tau\}$ has no closed form.
The obstruction is structurally identical to that of the spectrahedron (\Cref{app:spectrahedron}): a linear constraint on the tangent space that is incompatible with singular value replacement.

\noindent
\paragraph{Connection to SSO.}
Although SV-invariance fails, the LMO on $\mathcal{S}_R$ remains a well-posed convex problem, it is the maximization of a linear functional over the intersection of a spectral-norm ball and a hyperplane.
This is precisely the problem solved by the Spectral Sphere Optimizer (SSO)~\citep{xie2026sso}, which performs steepest descent on $\mathcal{S}_R$ under the spectral norm for $\mu$P-aligned~\citep{yang2026spectral} LLM training.
In other words, SSO is an instance of our framework~\eqref{eqn:main_lmo} with $G_X=I$ and $\varphi=\|\cdot\|_2$ on the manifold $\mathcal{S}_R$, it solves exactly the same intrinsic LMO, but without the benefit of a closed-form solution.
The single hyperplane constraint requires one Lagrange multiplier $\lambda$, and SSO solves for it via bisection on the monotone function
\[
h(\lambda) := \langle u_1 v_1^\top,\, \Ortho(\nabla_X f + \lambda\, u_1 v_1^\top)\rangle = 0,
\qquad
\xi^*(\lambda) = \Ortho(\nabla_X f + \lambda\, u_1 v_1^\top).
\]
Each bisection step requires one $\Ortho$ evaluation and approximately 5--7 bisection steps are needed in practice.
If the tangent constraint $\langle u_1 v_1^\top, \xi\rangle = 0$ were dropped (i.e., no weight-norm preservation), the solution would reduce to $\xi^* = \Ortho(\nabla_X f)$, standard Euclidean Muon.
Thus the bisection cost is the computational price of the non-SV-invariant tangent constraint, which arises from simultaneously constraining both the update norm and the weight norm.

\noindent
\paragraph{Takeaway.}
The spectral sphere illustrates that our framework applies beyond manifolds with SV-invariant tangent spaces: the LMO~\eqref{eqn:main_lmo} is always well-posed, and SV-invariance determines only whether the solution is available in closed form or requires a solver.
On the four manifolds of \Cref{sec:solutions}, SV-invariance holds and the LMO reduces to $\Ortho$, $\sign$, or rank-$1$ truncation.
On the spectral sphere (and the spectrahedron of \Cref{app:spectrahedron}), a single linear constraint breaks SV-invariance, but a one-dimensional bisection restores tractability at the cost of a few additional $\Ortho$ evaluations per step.
The Frobenius instance ($\varphi=\|\cdot\|_F$) remains closed-form in all cases, as the LMO reduces to projecting the gradient onto the tangent space and normalizing.




\subsection{Alternative SPD metrics: Log-Euclidean and Bures--Wasserstein}\label{app:spd_metrics}

In \Cref{subsec:other_manifolds} we use the affine-invariant (AI) metric on $S_{++}^n$.
Here we analyze how two other common SPD metrics interact with the two structural properties of the framework: left-right structure (\Cref{prop:symmetry}, enabling symmetry invariance) and SV-invariance of the scaled tangent space (\Cref{prop:tangent_compat}, enabling closed-form reduction).
All three metrics share the same tangent space $T_X S_{++}^n = \text{Sym}(n)$, which is SV-invariant (\Cref{app:sv_invariance}).
Thus \Cref{prop:tangent_compat} applies to all three, the LMO always reduces to an unconstrained Euclidean problem with the standard closed forms ($\sign(H)$, $H/\|H\|_F$, or rank-$1$ truncation).
The metrics differ only in how the two propositions' \emph{other} aspects, symmetry group size and the explicitness of the scale/invert steps, play out.

\noindent
\paragraph{Affine-invariant (AI) metric (recap).}
$G_X^{AI}(\xi)=X^{-1}\xi X^{-1}$, so $G_X^{1/2}(\xi)=X^{-1/2}\xi X^{-1/2}$ (left-right form with $P=Q=X^{-1}$).
\begin{itemize}
\item \emph{\Cref{prop:symmetry} (left-right structure):} holds with $\text{GL}(n)$-invariance.
\item \emph{\Cref{prop:tangent_compat} (SV-invariance):} $\mathcal{Z}_X=\text{Sym}(n)$, which is SV-invariant.
\item \emph{Scale/invert:} coordinate-free sandwich formulas $H=X^{1/2}(\nabla_X f)X^{1/2}$ and $\xi^*=X^{1/2}Z^*X^{1/2}$.
\end{itemize}
This is the ideal case: both propositions apply with the maximal symmetry group, and all three pipeline steps (scale, solve, invert) have explicit matrix-coordinate formulas.

\noindent
\paragraph{Log-Euclidean metric.}
The Log-Euclidean metric~\citep{arsigny2007geometric} uses $\Phi:X\mapsto\log X$ as an isometry from $(S_{++}^n,g^{LE})$ to $(\text{Sym}(n),\|\cdot\|_F)$, with $g_X^{LE}(\xi,\eta)=\tr\bigl(D_X\log(\xi)^\top D_X\log(\eta)\bigr)$.
In the log-coordinates $Y=\log X$, the metric is flat Euclidean ($G_Y=I$).
\begin{itemize}
\item \emph{\Cref{prop:symmetry}:} trivially satisfied ($G_Y=I$), but only $O(n)$-invariant (under $X\mapsto QXQ^\top$ for orthogonal $Q$), not $\text{GL}(n)$-invariant.
\item \emph{\Cref{prop:tangent_compat}:} $\mathcal{Z}_Y=\text{Sym}(n)$, SV-invariant. Since $G_Y=I$, the reduction is trivial: no scaling or inversion needed.
\item \emph{Scale/invert:} identity (nothing to compute).
\end{itemize}
The spectral-norm LMO gives $\Delta Y^*=\tau\,\Ortho(\nabla_Y f)$ in log-space, followed by $X_{+}=\exp(Y-\eta\,\Delta Y^*)$.
This is simply standard Euclidean Muon applied to $\log X$; the framework adds nothing beyond reparametrization.
Both propositions hold, but vacuously, no non-trivial preconditioning or symmetry preservation emerges.

\noindent
\paragraph{Bures--Wasserstein (BW) metric.}
The BW metric~\citep{bhatia2019bures,malago2018wasserstein} is
$g_X^{BW}(\xi,\zeta)=\tfrac{1}{2}\tr(L_\xi\zeta)$,
where $L_\xi$ is the unique symmetric solution of the Lyapunov equation $XL_\xi+L_\xi X=\xi$.
The metric operator is $G_X^{BW}=\tfrac{1}{2}\mathcal{L}_X$, where $\mathcal{L}_X(\xi):=L_\xi$.
In the eigenbasis $X=U\Lambda U^\top$ with eigenvalues $\lambda_1,\ldots,\lambda_n$, entry $(i,j)$ of $G_X^{BW}(\xi)$ scales as $\xi_{ij}/(\lambda_i+\lambda_j)$.

\begin{itemize}
\item \emph{\Cref{prop:symmetry} (left-right structure):} the left-right assumption \textbf{fails}, the BW scaling $1/(\lambda_i+\lambda_j)$ does not factor as $f(\lambda_i)\cdot g(\lambda_j)$, so $G_X^{1/2}$ cannot be written as $P\xi Q^\top$ and the proof of \Cref{prop:symmetry} does not apply directly.
However, the \emph{conclusion} of \Cref{prop:symmetry} still holds for the symmetry group $O(n)$ via a direct argument: under $X\mapsto QXQ^\top$ with $Q\in O(n)$, the eigenbasis rotates $U\mapsto QU$ while eigenvalues $\lambda_i$ are unchanged, so $G_{\tilde{X}}^{1/2}(Q\xi Q^\top)=Q\,G_X^{1/2}(\xi)\,Q^\top$.
Since left- and right-multiplication by the same orthogonal $Q$ preserves all unitarily invariant norms, $\varphi(G_{\tilde{X}}^{1/2}\tilde{\xi})=\varphi(G_X^{1/2}\xi)$.
Thus the intrinsic constraint is $O(n)$-invariant, but not $\text{GL}(n)$-invariant (because $G_X^{1/2}(N\xi N^\top)\neq N\,G_X^{1/2}(\xi)\,N^\top$ for general invertible $N$, since the Hadamard weights $1/\sqrt{\lambda_i+\lambda_j}$ change when the eigenvalues of $X$ change under non-orthogonal congruence).
\item \emph{\Cref{prop:tangent_compat} (SV-invariance):} \textbf{holds.}
The scaled tangent space $\mathcal{Z}_X=\{(G_X^{BW})^{1/2}\xi:\xi\in\text{Sym}(n)\}=\text{Sym}(n)$ (since $(G_X^{BW})^{1/2}$ is an invertible self-adjoint operator on $\text{Sym}(n)$), which is SV-invariant.
Thus the LMO reduces to the unconstrained problem $Z^*=\tau\,\sign(H)$ for the spectral norm, etc.
\item \emph{Scale/invert:} closed-form but requires the eigenbasis.
In the eigenbasis $X=U\Lambda U^\top$, the operators $(G_X^{BW})^{1/2}$ and $(G_X^{BW})^{-1/2}$ act entrywise on $\hat{\xi}:=U^\top\xi U$ as
\[
\bigl[(G_X^{BW})^{1/2}(\xi)\bigr]_{ij} = \frac{\hat{\xi}_{ij}}{\sqrt{2(\lambda_i+\lambda_j)}},
\qquad
\bigl[(G_X^{BW})^{-1/2}(Z)\bigr]_{ij} = \sqrt{2(\lambda_i+\lambda_j)}\;\hat{Z}_{ij},
\]
where $\hat{Z}=U^\top Z U$.
Thus the full pipeline is:
(i)~diagonalize $X=U\Lambda U^\top$ once ($O(n^3)$);
(ii)~scale: $H_{ij}=\hat{G}_{ij}/\sqrt{2(\lambda_i+\lambda_j)}$ where $\hat{G}=U^\top(\text{grad}\,f)U$;
(iii)~solve: $Z^*=\tau\,\sign(H)$ via eigendecomposition of $H$;
(iv)~invert: $\hat{\xi}^*_{ij}=\sqrt{2(\lambda_i+\lambda_j)}\,(U^\top Z^* U)_{ij}$, then $\xi^*=U\hat{\xi}^*U^\top$.
All steps are $O(n^3)$, the same asymptotic cost as the AI case.
\end{itemize}

The key difference from AI is therefore not computational cost but geometric:
(a)~the inversion formula is a Hadamard (entrywise) rescaling in the eigenbasis rather than a coordinate-free matrix sandwich $X^{1/2}(\cdot)X^{1/2}$, and
(b)~the symmetry group is $O(n)$ rather than $\text{GL}(n)$, so the intrinsic constraint is invariant under a smaller group.

\noindent
\paragraph{Summary.}
We summarize the status of the two key structural properties across the three metrics:

\begin{center}
\small
\begin{tabular}{lccc}
\toprule
& \textbf{Affine-invariant} & \textbf{Log-Euclidean} & \textbf{Bures--Wasserstein} \\
\midrule
Symmetry group & $\text{GL}(n)$ & $O(n)$ & $O(n)$ \\
\Cref{prop:symmetry} (left-right) & \checkmark & trivial ($G=I$) & \texttimes\ (but $O(n)$-inv.\ holds) \\
\Cref{prop:tangent_compat} (SV-inv.) & \checkmark & \checkmark & \checkmark \\
Scale/invert formula & $X^{1/2}(\cdot)X^{1/2}$ & identity & eigenbasis Hadamard \\
Cost & $O(n^3)$ & $O(n^3)$ & $O(n^3)$ \\
\bottomrule
\end{tabular}
\end{center}

\noindent All three metrics admit closed-form LMO solutions via \Cref{prop:tangent_compat} at the same asymptotic cost.
The affine-invariant metric is distinguished by its $\text{GL}(n)$-invariance and coordinate-free sandwich formulas.
The Log-Euclidean metric reduces to standard Euclidean Muon in log-coordinates (no new spectral structure).
The BW metric provides non-trivial preconditioning and closed-form solutions via eigenbasis Hadamard scaling; the intrinsic constraint is $O(n)$-invariant (by direct conjugation, not via \Cref{prop:symmetry}) but not $\text{GL}(n)$-invariant, and the scale/invert formulas require the eigenbasis rather than coordinate-free sandwich products.

\end{document}